\newtheorem{definition}{Definition}
\newtheorem{theorem}{Theorem}
\newtheorem{lemma}{Lemma}[theorem]
\newtheorem{prop}{Proposition}
\newtheorem{lemcorollary}{Corollary}[lemma]
\title{Tsallis Reinforcement Learning: \\A Unified Framework for Maximum Entropy Reinforcement Learning}
\author{ 
 Kyungjae Lee$^1$, Sungyub Kim$^2$, Sungbin Lim$^3$,  Sungjoon Choi$^3$, and Songhwai Oh$^1$\\
 Dep. of Electrical and Computer Engineering, Seoul National University$^1$\\
School of Computing, KAIST$^2$\\
 Kakao Brain$^3$\\
  \texttt{kyungjae.lee@rllab.snu.ac.kr},\;\texttt{sungyub.kim@kaist.ac.kr},\\
  \texttt{\{sungbin.lim, sam.choi\}@kakaobrain.com},\\
  \texttt{songhwai@snu.ac.kr}
}
\begin{document}
\maketitle
\begin{abstract}
In this paper, we present a new class of Markov decision processes
(MDPs), called Tsallis MDPs, with Tsallis entropy maximization, which
generalizes existing maximum entropy reinforcement learning (RL).
A Tsallis MDP provides a unified framework for the original RL problem
and RL with various types of entropy, including the well-known
standard Shannon-Gibbs (SG) entropy, using an additional real-valued
parameter, called an \textit{entropic index}.
By controlling the entropic index, we can generate various types of
entropy, including the SG entropy, and a different entropy
results in a different class of the optimal policy in Tsallis MDPs.
We also provide a full mathematical analysis of Tsallis MDPs, including
the optimality condition, performance error bounds, and convergence.
Our theoretical result enables us to use any positive entropic index
in RL. 
To handle complex and large-scale problems, we propose a model-free
actor-critic RL method using Tsallis entropy maximization.  
We evaluate the regularization effect of the Tsallis entropy with
various values of entropic indices and show that the entropic index
controls the exploration tendency of the proposed method. 
For a different type of RL problems, we find that a different value
of the entropic index is desirable.
The proposed method is evaluated using the MuJoCo simulator and 
achieves the state-of-the-art performance.
\end{abstract}

\section{Introduction}

Reinforcement learning (RL) combined with a powerful function
approximation technique like a neural network has shown noticeable
successes on challenging sequential decision making problems, such as
playing a video game \cite{volodymyr15humanlevel}, learning complex
control \cite{duan2016benchmarking,gu2016modelacc}, and realistic motion generation \cite{peng2017deeploco}. 
A model-free RL algorithm aims to learn a policy that effectively performs a given task by autonomously exploring an unknown environment
where the goal of RL is to find an optimal policy which maximizes the expected sum of rewards.
Since the prior knowledge about the environment is generally not given,
an RL algorithm should consider not only gathering information about
the environment to find out which state or action leads to high
rewards but also improving its policy based on the collected
information. 
Such trade-offs should be carefully handled to reduce the sample
complexity of a learning algorithm. 

For the sake of efficient exploration of an environment, many RL
algorithms employ maximization of the Shannon-Gibbs (SG) entropy of a
policy distribution. 
It has been empirically shown that maximizing the SG entropy of a
policy along with reward maximization encourages exploration since the
entropy maximization penalizes a greedy behavior \cite{mnih2016asyncrl}.
Eysenbach et al. \cite{eysenbach2018diversity} also demonstrated that maximizing the SG entropy  
helps to learn diverse and useful behaviors.
This penalty from the SG entropy also helps to capture the multi-modal
behavior where the resulting policy is robust against unexpected
changes in the environment Haarnoja et al. \cite{haarnoja2017energy}.
Theoretically, it has been shown that the optimal
solution of maximum entropy RL has a softmax distribution of
state-action value, not a greedy policy \cite{schulman17equivalence,nachum2017bridging, haarnoja2017energy, haarnoja2018sac}. 
Furthermore, maximum SG entropy in RL provides the connection between
policy gradient and value-based learning \cite{schulman17equivalence,
  odonoghue2017pgq}.
In \cite{dai2018sbeed}, it has been also shown that maximum entropy induces a
smoothed Bellman operator and it helps stable convergence of value
function estimation. 

While the SG entropy in RL provides better exploration,
numerical stability, and capturing multiple optimal actions, it is
known that the maximum SG entropy causes a performance loss since it 
hinders exploiting the best action to maximize the reward
\cite{lee2018sparse, chow2018tsallispcl}. 
Such drawback is often handled by scheduling a coefficient of the SG
entropy to progressively vanish \cite{cesa2017boltzmann}.
However, designing a proper decaying schedule is still a demanding task
in that it often requires an additional validation step in practice.
In \cite{grau-moya2018soft}, the same issue was handled by automatically manipulating
the importance of actions using the mutual information.
On the other hand, an alternative way has been proposed to handle
the exploitation issue of the SG entropy using a sparse Tsallis
(ST) entropy \cite{lee2018sparse, chow2018tsallispcl}, which is a special case of the Tsallis entropy
\cite{tsallis1988possible}. 
The ST entropy encourages exploration while penalizing less on a
greedy policy, compared to the SG entropy. 
However, unlike the SG entropy, the ST entropy may discover a
suboptimal policy since it enforces the algorithm to explore the
environment less \cite{lee2018sparse, chow2018tsallispcl}.

In this paper, we present a unified framework for the original RL
problem and maximum entropy RL problems with various types of entropy. 
The proposed framework is formulated as a new class of Markov decision processes
with Tsallis entropy maximization, which is called Tsallis MDPs.
The Tsallis entropy generalizes the standard SG entropy and can
represent various types of entropy, including the SG and ST entropies by
controlling a parameter, called an \textit{entropic index}.
A Tsallis MDP presents a unifying view on the use of various entropies in RL.
We provide a comprehensive analysis of how a different value of the
entropic index can provide a different type of optimal policies and
different Bellman optimality equations. 
Our theoretical result allows us to interpret the effects of various
entropies for an RL problem.

We also derive two dynamic programming algorithms:
Tsallis policy iteration and Tsallis value iteration for all postive
entropic indices with optimality and convergence guarantees.
We further extend Tsallis policy iteration to a Tsallis actor-critic
method for model-free large-scale problems.
The entropic index controls the exploration-exploitation trade-off in
the proposed Tsallis MDP since a different index induces a
different bonus reward for a stochastic policy. 
Similar to the proposed method,
Chen et al. \cite{chen2018tsallisensembles} also proposed 
an ensemble network of action value combining multiple Tsallis entropies.
While ensemble network  with multiple Tsallis entropies shows efficient exploration in discrete action problems,
it has limitations in that it is not suitable for a continuous action space.

As aforementioned, it has been observed that using the SG and ST
entropies show distinct exploration tendencies. 
The former makes the policy always assign non-zero probability to all
possible actions.
On the other hand, the latter can assign zero probability to some
actions.
The proposed Tsallis RL framework contains both SG and ST entropy
maximization as special cases and allows a diverse range of
exploration-exploitation trade-off behaviors for a learning agent,
which is a highly desirable feature since the problem complexity is
different for each task at hand.
We validate our claim in MuJoCo simulations and demonstrate that the
proposed method with a proper entropic index outperforms existing
actor-critic methods. 



\section{Background}

In this section, we review a Markov decision process and define the
Tsallis entropy using \textit{q}-exponential and \textit{q}-logarithm
functions.

\subsection{Markov Decision Processes}

A Markov decision process (MDP) is defined as a tuple $\mathcal{M} = \{\mathcal{S}, \mathcal{A}, d, P, \gamma, \mathbf{r} \}$, 
where $\mathcal{S}$ is the state space, $\mathcal{F}$ is the corresponding feature space, $\mathcal{A}$ is the action space,
$d(s)$ is the distribution of an initial state,
$P(s'|s,a)$ is the transition probability from $s\in\mathcal{S}$ to $ s'\in\mathcal{S}$ by taking $a\in\mathcal{A}$,
$\gamma \in (0,1)$ is a discount factor, and $\mathbf{r}$ is the
reward function defined as $\mathbf{r}(s,a,s') \triangleq
\mathbb{E}\left[\mathbf{R}\middle|s,a,s'\right]$ with a random reward $\mathbf{R}$.
In our paper, we assume that $\mathbf{r}$ is bounded. 
Then, the MDP problem can be formulated as follows:
\begin{eqnarray}\label{def:mdp}
\small
\begin{aligned}
& \underset{\pi\in\Pi}{\text{maximize}}
& & \mathop{\mathbb{E}}_{\tau \sim P, \pi}\left[\sum_{t}^{\infty}\gamma^{t}\mathbf{R}_t \right],
\end{aligned}
\end{eqnarray}
where $\sum_{t}^{\infty}\gamma^{t}\mathbf{R}_t$ is a discounted sum of
rewards, also called a return, $\Pi$ is a set of policies, 
$\{ \pi | \forall s, a \in \mathcal{S}\times\mathcal{A}, \pi(a|s) \ge 0, \sum_{a}\pi(a|s) = 1 \}$, 
and $\tau$ is an infinite sequence of state-action pairs sampled from
the transition probability and policy, i.e., 
$s_{t+1} \sim P(\cdot|s_{t},a_{t}), a_{t} \sim \pi(\cdot|s_{t})$ for $t\in [0,\infty]$ and $s_{0} \sim d$ .
For a given $\pi$, we can define the state value and state-action (or action) value as 
\begin{eqnarray}\label{def:v_q_pi}
\small
\begin{aligned}
V^{\pi}(s) \triangleq& \mathop{\mathbb{E}}_{\tau \sim P, \pi}\left[\sum_{t=0}^{\infty}\gamma^{t}\mathbf{R}_{t} \middle| s_{0} = s \right],\\
Q^{\pi}(s,a) \triangleq& \mathop{\mathbb{E}}_{\tau \sim P, \pi}\left[\sum_{t=0}^{\infty}\gamma^{t}\mathbf{R}_{t} \middle| s_{0} = s, a_{0}=a \right].
\end{aligned}
\end{eqnarray}
The solution of (\ref{def:mdp}) is called the optimal policy
$\pi^{\star}$.
The optimal value $V^{\star} = V^{\pi^{\star}}$ and action-value
$Q^{\star} = Q^{\pi^{\star}}$ satisfy the Bellman optimality equation
as follows:  
For $\forall s,a$, 
\begin{eqnarray}\label{def:bellman_opt}
\small
\begin{aligned}
&Q^{\star}(s,a) =  \mathop{\mathbb{E}}_{s' \sim P}\left[ \mathbf{r}(s,a,s') + \gamma V^{\star}(s') \right]\\
&V^{\star}(s) =  \max_{a'} Q^{\star}(s,a'), \pi^{\star}\in\arg\max_{a'} Q^{\star}(s,a'),
\end{aligned}
\end{eqnarray}
where $\arg\max_{a'} Q^{\star}(s,a')$ indicates a set of policies satisfying $\mathop{\mathbb{E}}_{a \sim \pi}[Q^{\star}(s,a)] = \max_{a'} Q^{\star}(s,a')$
and $a \sim \pi^\star$ indicates $a \sim \pi^\star(\cdot|s)$.
Note that there may exist multiple optimal policies if the optimal action value has multiple maxima with respect to actions.

\subsection{\textit{q}-Exponential, \textit{q}-Logarithm, and Tsallis Entropy}



\begin{figure}[t!]
\vspace{-5pt}
\includegraphics[width=0.9\textwidth]{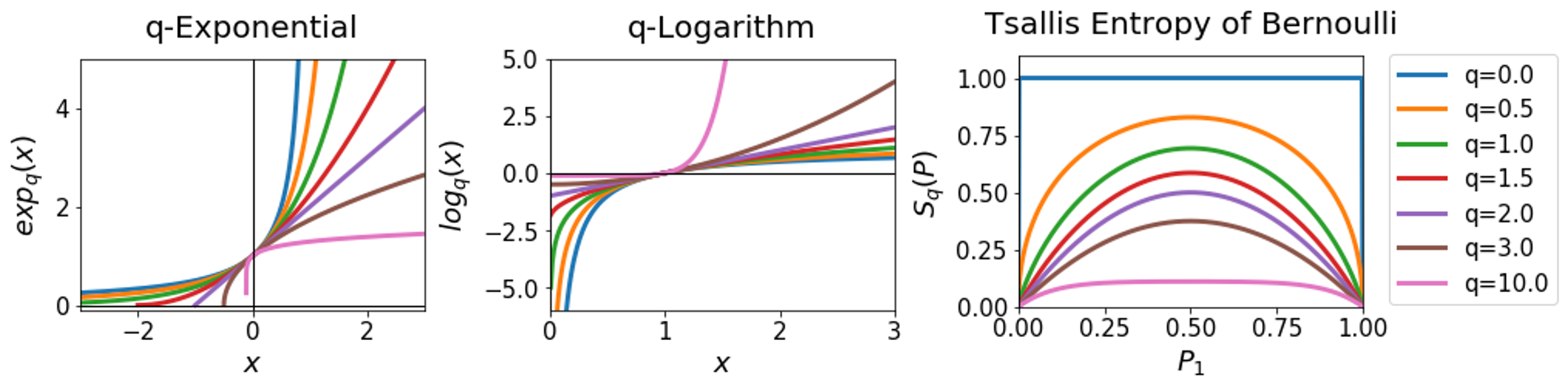}
\centering
\caption{Examples of $\exp_{q}(x),\ln_{q}(x)$, and $S_{q}(P)$ of a Bernoulli distribution parameterized by $P_{1}\triangleq P(X=1)$.} \label{fig:tsallis_ent_exm}
\vspace{-15pt}
\end{figure}

Before defining the Tsallis entropy,
let us first introduce variants of exponential and logarithm
functions, which are called \textit{q}-exponential and
\textit{q}-logarithm, respectively. 
They are used to define the Tsallis entropy 
and defined as follows\footnote{Note that the definition of $\exp_{q},\ln_{q}$, and Tsallis entropy are slightly different from the original one \cite{amari2011geometry} but 
those settings are recovered by setting $q=2-q'$.}: 
\begin{eqnarray}\label{def:qexp}
\small
\begin{aligned}
\exp_{q}(x) \triangleq
\begin{cases}
    \exp(x) ,  & \text{if } q = 1\\
    [1+(q-1)x]_{+}^{\frac{1}{q-1}}, & \text{if } q \neq 1,
\end{cases}
\end{aligned}
\end{eqnarray}
\begin{eqnarray}\label{def:qlog}
\small
\begin{aligned}
\ln_{q}(x) \triangleq
\begin{cases}
    \log(x) ,  & \text{if } q = 1 \text{ and } x > 0\\
   \frac{x^{q-1}-1}{q-1}, & \text{if } q \neq 1 \text{ and } x > 0,
\end{cases}
\end{aligned}
\end{eqnarray}
where $[x]_{+} = \max(x,0)$ and $q$ is a real number.

The property of \textit{q}-exponential and \textit{q}-logarithm
depends on the value of $q$.
We would like to note that, for all $q$, $\ln_{q}(x)$ is a monotonically increasing function with respect to $x$, since its gradient is always positive.
In particular, $\ln_{q}(x)$ becomes $\log(x)$ when $q\rightarrow1$
and a linear function when $q=2$.
However, the tendencies of their gradients are different. 
For $q > 2$, $d \ln_{q}(x) / dx$ is an increasing function.
On the contrary, for $q < 2$, $d\ln_{q}(x) /dx$ is a decreasing function.
Especially, when $q=2$, its gradient becomes a constant.
This property will play an important role for controlling the
exploration-exploitation trade-off when we model a policy using 
parameterized function approximation. 
Now, we define the Tsallis entropy using $\ln_{q}(x)$.
\begin{definition} 
The Tsallis entropy of a random variable $X$ with the distribution $P$
is defined as follows \cite{amari2011geometry}:
\begin{eqnarray}\label{def:tsallis_ent}
\small
\begin{aligned}
S_{q} (P) \triangleq \mathop{\mathbbm{E}}_{X \sim P}\left[- \ln_{q}(P(X))\right].
\end{aligned}
\end{eqnarray}
Here, $q$ is called an \textit{entropic-index}.
\end{definition}
The Tsallis entropy can represent various types of entropy by varying the \textit{entropic index}.
For example, when $q\rightarrow1$, $S_{1}(P)$ becomes the Shannon-Gibbs entropy
and when $q=2$, $S_{2}(P)$ becomes the sparse Tsallis entropy \cite{lee2018sparse}.
Furthermore, when $q\rightarrow\infty$, $S_{q}(P)$ converges to zero.
Exmaples of \textit{q}-exponential, \textit{q}-logarithm, and $S_{q}(P)$ are shown in Figure \ref{fig:tsallis_ent_exm}.
We would like to emphasize that, for $q > 0$, the Tsallis entropy is a
concave function with respect to the density function, but, for $q
\leq 0$, the Tsallis entropy is a convex function. 
Detail proofs are included in the supplementary material.
In this paper, we only consider the case when $q > 0$ since our
purpose of using the Tsallis entropy is to give a bonus reward to a
stochastic policy. 

\section{Bandit with Maximum Tsallis Entropy}

Before applying the Tsallis entropy to an MDP problem,
let us consider a simpler case, which is a stochastic multi-armed bandit (MAB) problem
defined by only a reward function and action space, i.e., $\{\mathcal{A},\mathbf{r}\}$,
where the reward only depends on an action, i.e., $\mathbf{r}(a) = \mathbbm{E}[\mathbf{R}|a]$.
While an MAB is simpler than an MDP, many properties of an MAB are
analogous to that of an MDP. 

In this section, we discuss an MAB with Tsallis entropy maximization defined as 
\begin{eqnarray}\label{prob:exp_mte}
\small
\begin{aligned}
\max_{ \pi \in \Delta } \left[ \mathop{\mathbbm{E}}_{a\sim \pi}\left[ \mathbf{R} \right] + \alpha S_{q} (\pi)\right],
\end{aligned}
\end{eqnarray}
where $\Delta$ is a probability simplex whose element is a probability
distribution and $\alpha$ is a coefficient.
We assume that an action $a$ is a discrete random variable in this
analysis but an extension to a continuous action space is discussed in
the supplementary material.  
Furthermore, we assume that $\alpha=1$ and, for $\alpha\neq 1$, by replacing $\mathbf{r}$ with $\mathbf{r}/\alpha$, the following analysis will hold.
The Tsallis entropy leads to a stochastic optimal policy and the problem (\ref{prob:exp_mte}) has the following solution
\begin{eqnarray}\label{def:optsol_qmax}
\small
\begin{aligned}
\pi^{\star}_{q}(a) = \exp_{q}\left(\mathbf{r}(a)/q- \psi_{q}\left(\mathbf{r}/q\right)\right),
\end{aligned}
\end{eqnarray}
where $\psi_{q}(\cdot)$ is called a \textit{q}-potential function
\cite{amari2011geometry}, which is uniquely determined by the normalization
condition: 
\begin{eqnarray}\label{def:qpoten}
\small
\begin{aligned}
\sum_{a} \pi^{\star}_{q}(a) = \sum_{a}\exp_{q}\left(\mathbf{r}(a)/q- \psi_{q}\left(\mathbf{r}/q\right)\right)= 1.
\end{aligned}
\end{eqnarray}
A detail derivation can be found in the supplementary material.
Note that $\psi_{q}$ is a mapping from $\mathbf{r}(\cdot)$ to a real value.
The optimal solution (\ref{def:optsol_qmax}) assigns a probability
$q$-exponentially proportional to the reward $\mathbf{r}(a)$ 
and $\psi_{q}$ normalizes the probability.
Since $\exp_{q}(x)$ is an increasing function when $q>0$,
the optimal solution assigns high probability to an action with a high
reward. 
Furthermore, using $\pi_{q}^{\star}$, the optimal value can be written as 
\begin{eqnarray}
\small
\begin{aligned}
&\mathop{\mathbbm{E}}_{a\sim \pi^{\star}}\left[ R \right] + S_{q} (\pi^{\star})=\\
&(q-1) \sum_{a} \frac{\mathbf{r}(a)}{q}\exp_{q}\left(\frac{\mathbf{r}(a)}{q}- \psi_{q}\left(\frac{\mathbf{r}}{q}\right)\right) +  \psi_{q}\left(\frac{\mathbf{r}}{q}\right).
\end{aligned}
\end{eqnarray}
Now, we can analyze how the optimal policy varies with respect to $q$.
As mentioned in the previous section, we only consider $q>0$.

When $q\rightarrow 1$, the optimal policy becomes a softmax distribution and $\psi_{q}$ becomes a log-sum-exp operator, i.e., $\pi_{1}^{\star} = \exp(\mathbf{r}(a) - \psi_{1}(\mathbf{r}))$ and $\psi_{1}(\mathbf{r}) =  \log\sum_{a}\exp(\mathbf{r}(a))$
and the optimal value becomes the same as $\psi_{1}(\mathbf{r})$.
The softmax distribution is the well-known solution of the MDP with the Shannon-Gibbs entropy \cite{ziebart2010modeling, schulman17equivalence,haarnoja2017energy,nachum2017bridging}.
When $q=2$, 
the problem (\ref{prob:exp_mte}) becomes the probability simplex
projection problem, where $\mathbf{r}$ is projected into $\Delta$.
It leads to $\pi_{2}^{\star}(a)=[1+(\mathbf{r}(a)/2-\psi_{2}(\mathbf{r}/2))]_{+}$ and $\psi_{2}(\mathbf{r}/2) =1+(\sum_{a\in S} \mathbf{r}(a)/2-1)/|S|$, where $S$ is a supporting set, i.e., $S=\{ a | \pi_{2}^{\star}(a) > 0\}$.
$\pi_{2}^{\star}(a)$ is the optimal solution of the MDP with the ST entropy \cite{lee2018sparse,chow2018tsallispcl}.
Compared to $\pi_{1}^{\star}$, $\pi^{\star}_{2}$ allows zero probability to the action whose $\mathbf{r}(a)$ is below $2\psi_{2}(\mathbf{r}/2) - 2$, whereas $\pi^{\star}_1$ cannot.
Furthermore, when $q\rightarrow\infty$,
the problem becomes the original MAB problem since $S_{q}$ becomes zero as $q$ goes to infinity.
In this case, the optimal policy only assigns positive probability to optimal actions.
If $\mathbf{r}(a)$ has a single maximum, then the optimal policy becomes greedy.

Unfortunately, finding a closed form of $\psi_{q}$ for a general value
of $q$ is intractable except $q=1,2,\infty$ since it is the sum
of radical equations with the index $q$. 
Thus, for other values of $q$, the solution can be obtained using a
numerical optimization method. 
This intractability of (\ref{def:qpoten}) hampers the viability of the
Tsallis entropy.
Chen et al. \cite{chen2018tsallisensembles} handled this issue by obtaining an
approximated closed form using the first order Tayler expansion of
$\psi_{q}$. 
However, we propose an alternative way to avoid numerical
computation, which will be discussed in Section \ref{sec:tac}. 

From aforementioned observations, we can observe that, as $q$
increases from zero to infinity, the optimal policy becomes more
sparse and finally converges to a greedy policy. 
The optimal policy with different $q$ is shown in Figure
\ref{fig:tsallis_bandit}. 
The effect of different $\alpha$ and $q$ can be found in the
supplementary material. 
This tendency of the optimal policy also appears in the MDP with the
Tsallis entropy. 
Many existing methods employ the SG entropy to encourage the exploration.
However, the Tsallis entropy allows us to select the proper entropy
according to the property of the environment.  

\subsection{$q$-Maximum}

Before extending from MAB to MDP, we define the problem
(\ref{prob:exp_mte}) as an operator, which is called
\textit{q}-maximum.   
A \textit{q}-maximum operator is a bounded approximation of the maximum operator.
For a function $f(x)$, \textit{q}-maximum is defined as follows:
\begin{eqnarray}\label{def:qmax}
\small
\begin{aligned}
\mathop{q\text{-max}}_{x}\left(f(x)\right) \triangleq \max_{ P \in \Delta } \left[ \mathop{\mathbbm{E}}_{X\sim P}\left[ f(X) \right] + S_{q} (P)\right],
\end{aligned}
\end{eqnarray}
where $\Delta$ is a probability simplex whose element is a probability.
The reason why this operator (\ref{def:qmax}) is called a \textit{q}-maximum
is that it has the following bounds.
\begin{theorem} \label{thm:bound}
For any function $f(x)$ defined on a finite input space $\mathcal{X}$,
the \textit{q}-maximum satisfies the following inequalities.
\begin{eqnarray}\label{def:qmax_bnd}
\small
\begin{aligned}
\mathop{q\textnormal{-max}}_{x}\left(f(x)\right) &+ \ln_{q} \left(1/|\mathcal{X}|\right) \leq \max_{x}(f(x)) \leq \mathop{q\textnormal{-max}}_{x}\left(f(x)\right)
\end{aligned}
\end{eqnarray}
where $|\mathcal{X}|$ is the cardinality of $\mathcal{X}$.
\end{theorem}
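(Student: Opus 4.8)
The plan is to reduce both inequalities in \eqref{def:qmax_bnd} to a single sandwich estimate on the Tsallis entropy, namely $0 \le S_q(P) \le -\ln_q(1/|\mathcal{X}|)$ for every $P \in \Delta$ when $q > 0$, combined with the elementary linear-programming identity $\max_x f(x) = \max_{P \in \Delta} \mathbb{E}_{X \sim P}[f(X)]$ (the maximum of the linear functional $P \mapsto \mathbb{E}_P[f]$ over the simplex is attained at a vertex, i.e.\ at a degenerate distribution). With these two ingredients in hand, both bounds follow by choosing appropriate feasible or optimal distributions in the definition \eqref{def:qmax}.

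For the upper inequality $\max_x f(x) \le \mathop{q\text{-max}}_x f(x)$, I would let $x^\star \in \arg\max_x f(x)$ and take $P^\star$ to be the degenerate distribution concentrated at $x^\star$. Since $\ln_q(1) = 0$ for all $q$ (immediate from \eqref{def:qlog}), the entropy of a degenerate distribution vanishes, so $\mathbb{E}_{P^\star}[f] + S_q(P^\star) = f(x^\star) = \max_x f(x)$. Because $P^\star$ is feasible in the maximization \eqref{def:qmax}, the $q$-maximum is at least this value, which gives the claim.

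For the lower inequality, I would instead start from the optimal distribution $P^\dagger$ attaining the $q$-maximum (it exists since $\Delta$ is compact and the objective is continuous on $\Delta$ under the convention at $P(x)=0$). Then $\mathop{q\text{-max}}_x f(x) = \mathbb{E}_{P^\dagger}[f] + S_q(P^\dagger) \le \max_x f(x) + \big(-\ln_q(1/|\mathcal{X}|)\big)$, where I bound $\mathbb{E}_{P^\dagger}[f] \le \max_x f(x)$ and $S_q(P^\dagger) \le -\ln_q(1/|\mathcal{X}|)$; rearranging yields exactly $\mathop{q\text{-max}}_x f(x) + \ln_q(1/|\mathcal{X}|) \le \max_x f(x)$.

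It remains to justify the sandwich bound on $S_q$, which I expect to be the only non-routine step. The lower bound $S_q(P) \ge 0$ is term-wise: for $0 \le P(x) \le 1$ the monotonicity of $\ln_q$ together with $\ln_q(1)=0$ gives $-\ln_q(P(x)) \ge 0$, so each summand $P(x)\,(-\ln_q(P(x)))$ is non-negative (using $0\cdot\ln_q(0)=0$). The upper bound $S_q(P) \le -\ln_q(1/|\mathcal{X}|) = S_q(U)$, where $U$ is the uniform distribution, is the subtle part, and is where I would invoke the concavity of $S_q$ for $q>0$ (established in the supplementary material). Using that $S_q$ is invariant under permutations of $\mathcal{X}$ and that averaging any $P$ over all $|\mathcal{X}|!$ permutations yields $U$, Jensen's inequality gives $S_q(U) = S_q\!\big(\tfrac{1}{|\mathcal{X}|!}\sum_\sigma P\circ\sigma\big) \ge \tfrac{1}{|\mathcal{X}|!}\sum_\sigma S_q(P\circ\sigma) = S_q(P)$, which is precisely the required upper bound (a direct Lagrange-multiplier computation on the simplex is an equally valid alternative). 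Evaluating $S_q$ at $U$ then produces the constant $-\ln_q(1/|\mathcal{X}|)$, completing the argument.
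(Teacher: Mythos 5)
Your proof is correct and follows essentially the same route as the paper: the upper inequality via a degenerate distribution with vanishing Tsallis entropy, and the lower inequality by bounding the expectation term by $\max_x f(x)$ and the entropy term by its maximum over the simplex. The only divergence is in the supporting fact that $S_q$ is maximized at the uniform distribution with value $-\ln_q(1/|\mathcal{X}|)$, which the paper establishes by a KKT computation in a separate proposition while you use permutation symmetrization and Jensen's inequality from concavity --- both are valid.
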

The proof can be found in the supplementary material.
The proof of Theorem \ref{thm:bound} utilizes the definition of \textit{q}-maximum. 
This bounded property will be used to analyze the performance bound of an MDP with the maximum Tsallis entropy.
Furthermore, $q$-maximum plays an important role in the optimality condition of Tsallis MDPs.

\begin{figure}[t!]
\vspace{-5pt}
\includegraphics[width=0.9\textwidth]{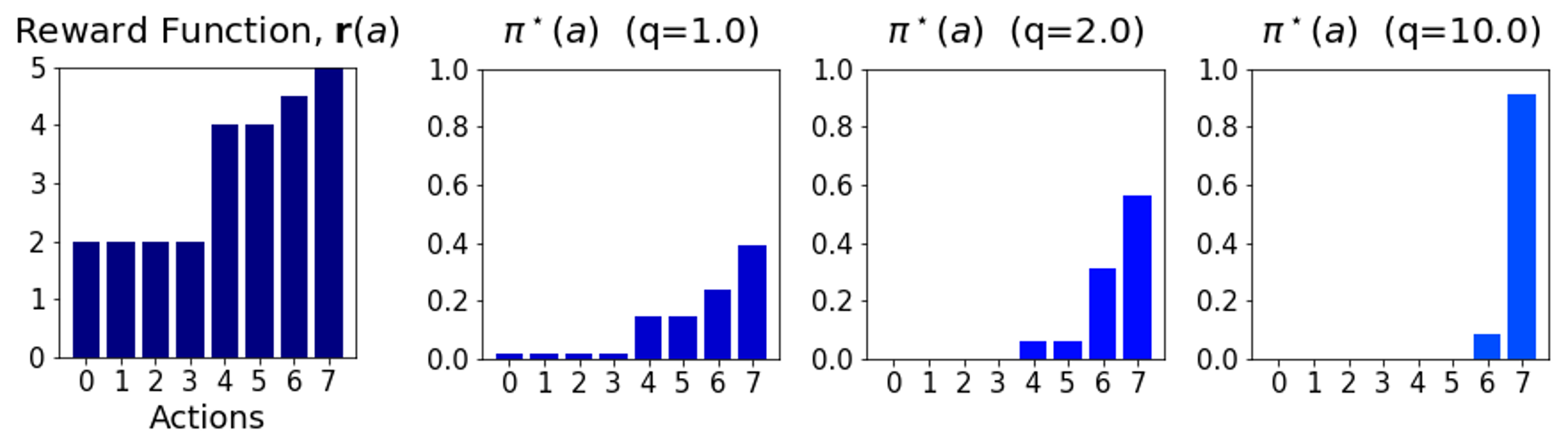}
\centering
\caption{Examples of $\pi_{q}^{\star}$ with different $q$ values. The first figure shows a given reward function over eight actions. }\label{fig:tsallis_bandit}
\vspace{-15pt}
\end{figure}

\section{Maximum Tsallis Entropy in MDPs}
In this section,
we formulate MDPs with Tsallis entropy maximization, which will be named Tsallis MDPs, by extending the SG entropy to the Tsallis entropy.
We mainly focus on deriving the optimality conditions and algorithms
generalized for the entropic index so that a wide range of $q$ values
can be used for a learning agent.
First, we extend the definition of the Tsallis entropy so that it can
be applicable for a policy distribution in MDPs. 
The Tsallis entropy of a policy distribution $\pi$ is defined by
\begin{equation*}
\small
\begin{aligned}
S_{q}^{\infty} (\pi) \triangleq\mathop{\mathbb{E}}_{\tau \sim P, \pi}\left[ \sum_{t=0}^{\infty} \gamma^{t} S_{q} ( \pi( \cdot | s_t ) ) \right].
\end{aligned}
\end{equation*}
Using $S_{q}^{\infty}$, the original MDPs can be converted into
Tsallis MDPs by adding $S_{q}^{\infty} (\pi)$ to the objective
function as follows:
\begin{eqnarray}
\small
\begin{aligned}\label{def:tsallis_mdp}
	& \underset{\pi\in\Pi}{\text{maximize}}
	& &  \mathop{\mathbb{E}}_{\tau \sim P, \pi}\left[\sum_{t}^{\infty}\gamma^{t}\mathbf{R}_t \right]+ \alpha S_{q}^{\infty} (\pi),
\end{aligned}
\end{eqnarray}
where $\alpha > 0$ is a coefficient of the Tsallis entropy.
A state value and state-action value are redefined for Tsallis MDPs as follows:
\begin{eqnarray}\label{def:tsallis_v_q_pi}
\small
\begin{aligned}
V^{\pi}_{q}(s) &\triangleq \mathop{\mathbb{E}}_{\tau \sim P, \pi}\left[\sum_{t=0}^{\infty}\gamma^{t}\left(\mathbf{R}_{t} + \alpha S_{q} ( \pi( \cdot | s_t )\right)\middle| s_{0} = s \right],\\
Q^{\pi}_{q}(s,a) &\triangleq \mathop{\mathbb{E}}_{\tau \sim P, \pi}\left[\mathbf{R}_{0} +\gamma V^{\pi}_{q}(s_{1})\middle| s_{0} = s, a_{0}=a \right],
\end{aligned}
\end{eqnarray}
where $q$ is the entropic index.
The goal of a Tsallis MDP is to find an optimal policy distribution
which maximizes both the sum of rewards and the Tsallis entropy whose
importance is determined by $\alpha$. 
The solution of the problem (\ref{def:tsallis_mdp}) is denoted as $\pi^{\star}_{q}$
and its value functions are denoted as $V_{q}^{\star}=V_{q}^{\pi^{\star}_{q}}$ and $Q_{q}^{\star}=Q_{q}^{\pi^{\star}_{q}}$, respectively.
In our analysis, $\alpha$ is set to one, however one can easiliy generalize the case of $\alpha\neq1$ by replacing $\mathbf{r}, V$, and $Q$ with $\mathbf{r}/\alpha,V/\alpha$, and $Q/\alpha$.

In the following sections, we first derive the optimality condition of
(\ref{def:tsallis_mdp}), which will be called the Tsallis-Bellman
optimality (TBO) equation. 
Second, dynamic programming to solve Tsallis MDPs is proposed with
convergence and optimality guarantees. 
Finally, we provide the performance error bound of the optimal policy
of the Tsallis MDP, where the error is caused by the Tsallis entropy
term. 
The theoretical results derived in this section are extended to a
viable actor-critic algorithm in Section \ref{sec:tac}. 

\subsection{Tsallis Bellman Optimality Equation}

Using the $q$-maximum operator, the optimality condition of a Tsallis
MDP can be obtained as follows.

\begin{theorem}\label{thm:tbo}
For $q>0$, an optimal policy $\pi^{\star}_{q}$ and optimal value $V^{\star}_{q}$ sufficiently and necessarily satisfy the following Tsallis-Bellman optimality (TBO) equations:
\begin{eqnarray}
\small
\begin{aligned}\label{eqn:tbo}
Q^{\star}_{q}(s,a) &=\mathop{\mathbbm{E}}_{s'\sim P}[ \mathbf{r}(s,a,s') + \gamma V^{\star}_{q}(s')|s,a] \\
V^{\star}_{q}(s) &= \mathop{q\textnormal{-max}}_{a}(Q^{\star}_{q}(s,a))\\
\pi^{\star}_{q}(a|s)  &= \exp_{q}\left(Q^{\star}_{q}(s,a)/q - \psi_{q}\left(Q^{\star}_{q}(s,\cdot)/q\right)\right),
\end{aligned}
\end{eqnarray}
where $\psi_q$ is a q-potential function.
\end{theorem}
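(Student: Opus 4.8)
The plan is to combine the single-stage (bandit) result of Section~3 with a dynamic-programming argument built around the $q$-maximum operator. First I would establish the \emph{Tsallis-Bellman expectation equation}: unrolling the definition~(\ref{def:tsallis_v_q_pi}) by one step and using the additivity of the discounted entropy $S_q^{\infty}$, one obtains, for any fixed $\pi$,
\begin{equation*}
V^{\pi}_{q}(s) = \mathop{\mathbb{E}}_{a\sim\pi}\left[Q^{\pi}_{q}(s,a)\right] + S_{q}(\pi(\cdot|s)), \qquad Q^{\pi}_{q}(s,a) = \mathop{\mathbb{E}}_{s'\sim P}\left[\mathbf{r}(s,a,s') + \gamma V^{\pi}_{q}(s')\right].
\end{equation*}
This recasts policy evaluation as the fixed point of the affine operator $\mathcal{T}^{\pi}_{q} V(s) \triangleq \mathbb{E}_{a\sim\pi}[\mathbb{E}_{s'\sim P}[\mathbf{r}+\gamma V(s')]] + S_{q}(\pi(\cdot|s))$, which is a $\gamma$-contraction in the sup-norm because the entropy term does not depend on $V$; hence $V^{\pi}_{q}$ is its unique fixed point.

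Second, I would introduce the Tsallis-Bellman optimality operator $\mathcal{T}_{q} V(s)\triangleq \mathop{q\text{-max}}_{a} \mathbb{E}_{s'\sim P}[\mathbf{r}(s,a,s')+\gamma V(s')]$ and show it is a $\gamma$-contraction. The two properties of $q$-max I need are monotonicity ($f\le g \Rightarrow q\text{-max}(f)\le q\text{-max}(g)$) and the constant-shift identity $q\text{-max}_{a}(f(a)+c)=q\text{-max}_{a}(f(a))+c$; both follow immediately from the variational definition~(\ref{def:qmax}). Together they yield $\|\mathcal{T}_{q} V - \mathcal{T}_{q} V'\|_{\infty} \le \gamma\|V-V'\|_{\infty}$, so by the Banach fixed point theorem $\mathcal{T}_{q}$ has a unique fixed point $V^{\dagger}$.

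Third, I would prove sufficiency and simultaneously identify the optimal policy. The key observation is that $\mathcal{T}_{q} V = \max_{\mu}\mathcal{T}^{\mu}_{q} V$ pointwise, and by the bandit analysis the maximizing $\mu$ is exactly the $q$-exponential policy~(\ref{def:optsol_qmax}) evaluated at $Q(s,a)=\mathbb{E}_{s'\sim P}[\mathbf{r}+\gamma V(s')]$. Let $\pi$ be the $q$-exponential policy induced by $V^{\dagger}$. For any competing policy $\mu$, the fixed-point relation together with $\mathcal{T}_{q} V^{\dagger} \ge \mathcal{T}^{\mu}_{q} V^{\dagger}$ gives $V^{\dagger} \ge \mathcal{T}^{\mu}_{q} V^{\dagger}$; iterating the monotone operator $\mathcal{T}^{\mu}_{q}$ and passing to the limit yields $V^{\dagger} \ge V^{\mu}_{q}$. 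Taking $\mu=\pi$ gives equality, so $\pi$ is optimal and $V^{\dagger}=V^{\star}_{q}$, which establishes both the sufficiency direction and the closed form for $\pi^{\star}_{q}$. Necessity then follows from uniqueness: the optimal value is unique and must coincide with $V^{\dagger}$, hence satisfies $V^{\star}_{q}(s)=q\text{-max}_{a}Q^{\star}_{q}(s,a)$, and since $S_{q}$ is strictly concave for $q>0$ the one-step maximizer is unique, forcing every optimal policy to take the $q$-exponential form.

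The main obstacle I expect is the sufficiency/policy-improvement step, specifically justifying the passage to the infinite-horizon limit $\lim_{n}(\mathcal{T}^{\mu}_{q})^{n} V^{\dagger}=V^{\mu}_{q}$ uniformly and verifying that the one-step $\max_{\mu}$ is genuinely attained at the $q$-exponential policy (which in turn rests on the bandit derivation and on $\psi_{q}$ being well defined by the normalization~(\ref{def:qpoten})). The contraction estimate for $\mathcal{T}_{q}$ is routine once the monotonicity and shift properties of $q$-max are in hand, and the boundedness of $\mathbf{r}$ together with the bounds of Theorem~\ref{thm:bound} guarantees that all value functions and the entropy bonus remain finite.
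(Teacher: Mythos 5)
Your proof is correct, but it follows a genuinely different route from the paper. The paper does not use a dynamic-programming/contraction argument at all for Theorem~\ref{thm:tbo}: it changes variables from the policy $\pi$ to the discounted state-action visitation $\rho$, shows that the entropy-augmented objective is concave in $\rho$ subject to the linear Bellman-flow constraints, and then reads the TBO equations off the KKT stationarity conditions, identifying the multiplier of the flow constraint with $V^{\star}_{q}$ and the multiplier of the normalization with (an affine transform of) $\psi_{q}$. That route delivers necessity and sufficiency simultaneously from strong duality and makes the $q$-potential appear naturally as a dual variable, but it has to borrow the uniqueness of the TBE fixed point from the later Theorem~\ref{thm:tpe} to justify $\mu^{\star}=V^{\star}_{q}$ (the paper explicitly defers this), so it is not self-contained as written. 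Your approach --- contraction of $\mathcal{T}^{\pi}_{q}$ and $\mathcal{T}_{q}$ via monotonicity and the constant-shift property of the $q$-maximum, the identity $\mathcal{T}_{q}V=\max_{\mu}\mathcal{T}^{\mu}_{q}V$ with the maximizer given by the bandit solution~(\ref{def:optsol_qmax}), policy-improvement iteration $V^{\dagger}\ge(\mathcal{T}^{\mu}_{q})^{n}V^{\dagger}\to V^{\mu}_{q}$, and necessity from uniqueness of the fixed point plus strict concavity of $S_{q}$ --- is the classical regularized-MDP argument; it proves the evaluation lemma first, so it avoids the forward reference, and it yields the convergence of Tsallis value iteration and policy iteration (the paper's Theorems~\ref{thm:tpe}, \ref{thm:opt_tpi}, and \ref{thm:optimality}) essentially for free as byproducts. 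The only caveats, which you already flag, are minor and standard: the limit $(\mathcal{T}^{\mu}_{q})^{n}V^{\dagger}\to V^{\mu}_{q}$ is justified by the contraction property together with boundedness of $\mathbf{r}$ and of $S_{q}$ (Theorem~\ref{thm:bound}), and your optimality claim is over stationary Markov policies, which is the same implicit restriction the paper makes when it invokes the one-to-one correspondence between stationary policies and visitation measures.
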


\begin{proof}[Proof Sketch]
Unlike to the original Bellman equation,
we derive Theorem \ref{thm:tbo} from Karush-Kuhn-Tucker (KKT) conditions instead of the Bellman's principle of optimality \cite{puterman2014markov}.
The proof consists of three steps. 
First, the optimization variable in (\ref{def:tsallis_mdp}) is converted to a state-action visitation $\rho$ based on \cite{syed2008allinearprogram, puterman2014markov}.\footnote{$\rho(s,a) \triangleq \mathop{\mathbb{E}}_{\tau \sim P, \pi}\left[\sum_{t}^{\infty}\gamma^{t}\mathbbm{1}(s_{t}=s,a_{t}=a) \right]$, where $\mathbbm{1}(\cdot)$ is an indicator function.}
Second, after changing variables,
(\ref{def:tsallis_mdp}) becomes a concave problem with respect to $\rho$.
Thus, we can apply KKT conditions since the strong duality holds.
Finally, TBO equations are obtained by solving the KKT conditions.
The entire proof of Theorem \ref{thm:tbo} is included in the supplementary material.
\end{proof}

The TBO equation differs from the original Bellman equation in that
the maximum operator is replaced by the $q$-maximum operator. 
The optimal state value $V^{\star}_{q}$ is the $q$-maximum of the
optimal state-action value $Q^{\star}_{q}$ 
and the optimal policy $\pi^{\star}_{q}$ is the solution of $q$-maximum (\ref{def:qmax}).
Thus, as $q$ changes, $\pi^{\star}_{q}$ can represent various types of
$q$-exponential distributions. 
We would like to emphasize that the TBO equation becomes the original Bellman equation as $q$ diverges into infinity.
This is reasonable tendency since, as $q\rightarrow\infty$,
$S_{\infty}$ tends zero and the Tsallis MDP becomes the original MDP.
Furthermore, when $q\rightarrow1$, $q$-maximum becomes the
log-sum-exponential operator and the Bellman equation of maximum SG
entropy RL, (a.k.a. soft Bellman equation) \cite{haarnoja2017energy}
is recovered. 
When $q=2$, the Bellman equation of maximum ST entropy RL,
(a.k.a. sparse Bellman equation) \cite{lee2018sparse} is also
recovered. 
Moreover, our result guarantees that the TBO equation holds for every
real value $q>0$. 

\section{Dynamic Programming for Tsallis MDPs}

In this section, 
we develop dynamic programming algorithms for a Tsallis MDP: 
Tsallis policy iteration (TPI) and Tsallis value iteration (TVI).
These algorithms can compute an optimal value and policy and their
convergence can be shown.
TPI is a policy iteration method which consists of policy evaluation
and policy improvement.
In TPI, first, a value function of a fixed policy is computed and, then, the policy is updated using the value function.
TVI is a value iteration method which computes the optimal value directly.
In dynamic programming of the original MDPs, the convergence is derived from the maximum operator.
Similarly, in the MDP with the SG entropy, log-sum-exponential plays a
crucial role for the convergence.
In TPI and TVI, we generalize such maximum or log-sum-exponential
operators by the $q$-maximum operator, which is a more abstract notion
and available for all $q>0$. 

\subsection{Tsallis Policy Iteration}

We first discuss the policy evaluation method in a Tsallis MDP, which computes $V_{q}^{\pi}$ and $Q_{q}^{\pi}$ for fixed policy $\pi$.
Similar to the original MDP,
a value function of a Tsallis MDP can be computed using the expectation equation defined by
\begin{eqnarray}
\small
\begin{aligned}\label{eqn:tbe}
Q^{\pi}_{q}(s,a) &= \mathop{\mathbbm{E}}_{s'\sim P}[\mathbf{r}(s,a,s') + \gamma V^{\pi}_{q}(s')|s,a] \\
V^{\pi}_{q}(s) &= \mathop{\mathbbm{E}}_{a\sim \pi}[Q^{\pi}_{q}(s,a) - \ln_{q}(\pi(a|s))],
\end{aligned}
\end{eqnarray}
where $s'\sim P$ indicates $s'\sim P(\cdot|s,a)$  and $a\sim\pi$ indicates $a\sim\pi(\cdot|s)$.
Equation (\ref{eqn:tbe}) will be called the Tsallis Bellman
expectation (TBE) equation and it is derived from the definition of
$V_{q}^{\pi}$ and $Q_{q}^{\pi}$. 
Based on the TBE equation, we can define the operator for an arbitrary function $F(s,a)$ over $\mathcal{S}\times\mathcal{A}$,
which is called the TBE operator,
\begin{eqnarray}
\small
\begin{aligned}\label{def:tbe_op_q}
\left[\mathcal{T}_{q}^{\pi}F \right](s,a) &\triangleq \mathop{\mathbbm{E}}_{s' \sim P}[ \mathbf{r}(s,a,s') + \gamma V_{F}(s') |s,a]\\
V_{F}(s) &\triangleq \mathop{\mathbbm{E}}_{a\sim \pi}[F(s,a) - \ln_{q}(\pi(a|s))].
\end{aligned}
\end{eqnarray}
Then, the policy evaluation method for a Tsallis MDP can be simply defined as 
repeatedly applying the TBE operator to an initial function $F_{0}$: $F_{k+1} = \mathcal{T}_{q}^{\pi}F_k$.

\begin{theorem}[Tsallis Policy Evaluation]\label{thm:tpe}
For fixed $\pi$ and $q>0$, consider the TBE operator $\mathcal{T}_{q}^{\pi}$,
and define Tsallis policy evaluation as
$F_{k+1}=\mathcal{T}_{q}^{\pi}F_{k}$ for an arbitrary initial function
$F_{0}$ over $\mathcal{S}\times\mathcal{A}$. Then, $F_{k}$ converges
to $Q_{q}^{\pi}$ and satisfies the TBE equation (\ref{eqn:tbe}).
\end{theorem}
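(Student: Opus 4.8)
The plan is to establish convergence via the Banach fixed-point theorem, by showing that the TBE operator $\mathcal{T}_q^{\pi}$ is a $\gamma$-contraction in the supremum norm $\|\cdot\|_{\infty}$ on the space of bounded functions over $\mathcal{S}\times\mathcal{A}$, and then to identify its unique fixed point with $Q_q^{\pi}$. The crucial observation that makes this work is that, since $\pi$ is held fixed throughout policy evaluation, the Tsallis entropy term $-\ln_{q}(\pi(a|s))$ appearing in $V_F$ does not depend on the iterate $F$ and therefore cancels when comparing two applications of the operator.

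First I would take two arbitrary functions $F,G$ over $\mathcal{S}\times\mathcal{A}$ and bound the difference of their induced state values. From the definition $V_F(s)=\mathbb{E}_{a\sim\pi}[F(s,a)-\ln_{q}(\pi(a|s))]$, the entropy terms cancel, giving
\begin{equation*}
V_F(s)-V_G(s)=\mathbb{E}_{a\sim\pi}\big[F(s,a)-G(s,a)\big],
\end{equation*}
so that $|V_F(s)-V_G(s)|\le\|F-G\|_{\infty}$ for every $s$, because $\pi(\cdot|s)$ is a probability distribution. Substituting into the operator definition, where the reward term also cancels, I obtain
\begin{equation*}
\big|[\mathcal{T}_q^{\pi}F](s,a)-[\mathcal{T}_q^{\pi}G](s,a)\big|=\gamma\,\big|\mathbb{E}_{s'\sim P}[V_F(s')-V_G(s')\,|\,s,a]\big|\le\gamma\,\|F-G\|_{\infty}.
\end{equation*}
Taking the supremum over $(s,a)$ yields $\|\mathcal{T}_q^{\pi}F-\mathcal{T}_q^{\pi}G\|_{\infty}\le\gamma\|F-G\|_{\infty}$, and since $\gamma\in(0,1)$ the operator is a contraction. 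By the Banach fixed-point theorem the iteration $F_{k+1}=\mathcal{T}_q^{\pi}F_k$ converges to a unique fixed point $F_{\infty}$ for any initial $F_0$.

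It then remains to verify that $F_{\infty}=Q_q^{\pi}$. For this I would unroll the definitions in (\ref{def:tsallis_v_q_pi}): peeling off the $t=0$ term of the discounted sum defining $V_q^{\pi}$ shows $V_q^{\pi}(s)=\mathbb{E}_{a\sim\pi}[Q_q^{\pi}(s,a)-\ln_{q}(\pi(a|s))]$, which is exactly $V_{Q_q^{\pi}}$, and combining this with the definition of $Q_q^{\pi}$ gives $Q_q^{\pi}=\mathcal{T}_q^{\pi}Q_q^{\pi}$, i.e.\ $Q_q^{\pi}$ satisfies the TBE equation (\ref{eqn:tbe}). Since the fixed point is unique, $F_{\infty}=Q_q^{\pi}$, and the limit satisfies (\ref{eqn:tbe}) by construction.

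The argument is largely routine once the entropy cancellation is noticed, so I do not expect a serious obstacle. The one point requiring care is well-definedness and boundedness: I would first confirm that $Q_q^{\pi}$ and the entropy terms are finite, which follows from the assumptions that $\mathbf{r}$ is bounded, $\gamma<1$, and the action space is discrete (so that $S_q(\pi(\cdot|s))$ is bounded), ensuring the contraction acts on a complete normed space of bounded functions.
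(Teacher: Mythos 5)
Your proposal is correct and follows essentially the same route as the paper: show that $\mathcal{T}_{q}^{\pi}$ is a $\gamma$-contraction in the sup norm, invoke the Banach fixed-point theorem, and identify the unique fixed point with $Q_{q}^{\pi}$. The only difference is mechanical --- you prove the contraction by direct cancellation of the entropy and reward terms, whereas the paper derives it from two auxiliary lemmas (a constant-shift property and monotonicity of $\mathcal{T}_{q}^{\pi}$); your version is, if anything, more self-contained, and it also makes explicit the verification that $Q_{q}^{\pi}$ itself satisfies the TBE equation, which the paper treats as immediate from the definitions.
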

The proof of Theorem \ref{thm:tpe} relies on the contraction
property of $\mathcal{T}_{q}^{\pi}$ and the proof can be found in the
supplementary material. 
The contraction property guarantees the sequence of $F_{k}$ converges
to a fixed point $F_{*}$ of $\mathcal{T}_{q}^{\pi}$, i.e.,
$F_{*}=\mathcal{T}_{q}^{\pi}F_{*}$ and the fixed point $F_{*}$ is the
same as $Q^{\pi}_{q}$. 

The value function evaluated from Tsallis policy evaluation
can be employed to update the policy distribution.
In the policy improvement step,
the policy is updated to maximize 
\begin{eqnarray}
\small
\begin{aligned}\label{def:tp_imp}
\forall s, \, \pi_{k+1}(\cdot|s) =&\\
\arg\max_{\pi(\cdot|s)} &\mathop{\mathbbm{E}}_{a\sim \pi}[ Q^{\pi_{k}}_{q}(s,a) - \ln_{q}(\pi(a|s))|s]
\end{aligned}
\end{eqnarray}
\begin{theorem}[Tsallis Policy Improvement]\label{thm:tpi}
For $q>0$, let $\pi_{k+1}$ be the updated policy from (\ref{def:tp_imp}) using $Q_{q}^{\pi_{k}}$.
For all $(s,a)\in\mathcal{S}\times\mathcal{A}$,
$Q^{\pi_{k+1}}_{q}(s,a)$ is greater than or equal to $Q^{\pi_{k}}_{q}(s,a)$.
\end{theorem}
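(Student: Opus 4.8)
The plan is to mimic the classical policy-improvement argument, but with greedy maximization replaced by the $q$-maximum and the reward augmented by the $-\ln_q(\pi)$ bonus. The single fact I would extract from the update rule (\ref{def:tp_imp}) is that $\pi_{k+1}(\cdot|s)$ is by construction a maximizer of $\mathbb{E}_{a\sim\pi}[Q^{\pi_k}_q(s,a) - \ln_q(\pi(a|s))]$ over all distributions, so evaluating this objective at the incumbent $\pi_k$ can only make it smaller. Since Theorem \ref{thm:tpe} (via the TBE equation (\ref{eqn:tbe})) gives $V^{\pi_k}_q(s) = \mathbb{E}_{a\sim\pi_k}[Q^{\pi_k}_q(s,a) - \ln_q(\pi_k(a|s))]$, this immediately yields the key inequality
\begin{equation*}
V^{\pi_k}_q(s) \le \mathbb{E}_{a\sim\pi_{k+1}}\!\left[Q^{\pi_k}_q(s,a) - \ln_q(\pi_{k+1}(a|s))\,\middle|\,s\right].
\end{equation*}

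Next I would unroll this inequality along trajectories generated by $\pi_{k+1}$. Substituting the recursion $Q^{\pi_k}_q(s,a) = \mathbb{E}_{s'\sim P}[\mathbf{r}(s,a,s') + \gamma V^{\pi_k}_q(s')|s,a]$ into the right-hand side exposes a fresh copy of $V^{\pi_k}_q(s')$, to which the same inequality applies; since $\gamma>0$ and the expectations are monotone, the inequality is preserved at every nesting. Iterating the substitution $n$ times produces a partial sum of the first $n$ discounted rewards together with the averaged bonuses $\mathbb{E}_{a\sim\pi_{k+1}}[-\ln_q(\pi_{k+1}(a|s_t))] = S_q(\pi_{k+1}(\cdot|s_t))$, all accumulated under $\pi_{k+1}$, plus a remainder of the form $\gamma^{n}\,\mathbb{E}[V^{\pi_k}_q(s_n)]$ over $\pi_{k+1}$-trajectories. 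Recognizing the partial sums as the truncated definition of $V^{\pi_{k+1}}_q(s)$, I would pass to the limit $n\to\infty$ to conclude $V^{\pi_k}_q(s) \le V^{\pi_{k+1}}_q(s)$ for every $s$.

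Finally, the claim for the action value follows from one more application of the $Q$-recursion: monotonicity of the integrand gives
\begin{equation*}
Q^{\pi_k}_q(s,a) = \mathbb{E}_{s'\sim P}[\mathbf{r}(s,a,s') + \gamma V^{\pi_k}_q(s')|s,a] \le \mathbb{E}_{s'\sim P}[\mathbf{r}(s,a,s') + \gamma V^{\pi_{k+1}}_q(s')|s,a] = Q^{\pi_{k+1}}_q(s,a),
\end{equation*}
which is exactly the desired conclusion. The main obstacle I anticipate is justifying that the remainder $\gamma^{n}\,\mathbb{E}[V^{\pi_k}_q(s_n)]$ vanishes as $n\to\infty$, so that the limit of the partial sums is legitimately $V^{\pi_{k+1}}_q$. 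This hinges on the value functions being uniformly bounded: the rewards are bounded by assumption, and for $q>0$ on a finite action space the Tsallis entropy $S_q(\pi(\cdot|s_t))$ is bounded as well, so each $V^{\pi}_q$ is bounded and the factor $\gamma^{n}$ forces the tail to zero. I would state this boundedness explicitly before taking the limit, since it is the one place where the hypotheses $q>0$ and $\gamma<1$ are genuinely used.
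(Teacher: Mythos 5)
Your proposal is correct and follows essentially the same route as the paper: the same key inequality $V^{\pi_k}_q(s) \le \mathbb{E}_{a\sim\pi_{k+1}}[Q^{\pi_k}_q(s,a) - \ln_q(\pi_{k+1}(a|s))]$ extracted from the maximizer property of (\ref{def:tp_imp}), followed by the same iterated unrolling under $\pi_{k+1}$ with a vanishing $\gamma^{n}$ remainder. Your explicit justification that the remainder vanishes (boundedness of rewards and of $S_q$ on a finite action space for $q>0$) is a point the paper asserts without elaboration, so that addition is welcome but does not change the argument.
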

Theorem \ref{thm:tpi} tells us that the policy obtained by the
maximization (\ref{def:tp_imp}) has performance no worse than the
previous policy.
From Theorem \ref{thm:tpe} and \ref{thm:tpi},
it is guaranteed that the Tsallis policy iteration gradually improves
its policy as the number of iterations increases and it converges to
the optimal solution.

\begin{theorem}[Optimality of TPI]\label{thm:opt_tpi}
When $q>0$, define the Tsallis policy iteration as alternatively applying (\ref{def:tbe_op_q}) and (\ref{def:tp_imp}), then $\pi_{k}$ converges to the optimal policy.
\end{theorem}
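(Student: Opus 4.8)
The plan is to run the classical policy-iteration convergence argument with the ordinary Bellman operator replaced by its $q$-maximum counterpart, leaning on the monotone-improvement and fixed-point characterizations already in hand. First I would fix a finite $\mathcal{S}\times\mathcal{A}$ and note that each round of TPI produces, via Tsallis policy evaluation (Theorem \ref{thm:tpe}), the exact value $Q_q^{\pi_k}$, and then, via Tsallis policy improvement (Theorem \ref{thm:tpi}), a new policy $\pi_{k+1}$ with $Q_q^{\pi_{k+1}}(s,a)\ge Q_q^{\pi_k}(s,a)$ for every $(s,a)$. Hence $\{Q_q^{\pi_k}\}$ is pointwise nondecreasing.

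Second, I would establish that this sequence is bounded above. Since $\mathbf{r}$ is bounded and, on a finite simplex, the Tsallis entropy satisfies $0\le S_q(\pi(\cdot|s))\le -\ln_q(1/|\mathcal{A}|)$ (the maximum being attained by the uniform distribution, which is exactly the gap quantity appearing in Theorem \ref{thm:bound}), the per-stage reward-plus-entropy term is uniformly bounded, so $Q_q^{\pi_k}$ is bounded above uniformly in $k$. A monotone bounded sequence in the finite-dimensional space of value functions converges; call the limit $Q_q^{\infty}$.

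Third, I would pass to the limit to show the limit solves the TBO equations. The improvement step sets $\pi_{k+1}=G(Q_q^{\pi_k})$, where $G$ maps a value to its $q$-maximizing policy through $\exp_q$ and $\psi_q$; by continuity of $G$ we get $\pi_k\to\pi^{\infty}:=G(Q_q^{\infty})$, and by continuity of the evaluation map its fixed point gives $Q_q^{\infty}=Q_q^{\pi^{\infty}}$. The evaluation (TBE) consistency then reads $Q_q^{\infty}(s,a)=\mathbb{E}_{s'\sim P}[\mathbf{r}(s,a,s')+\gamma V_q^{\infty}(s')]$ with $V_q^{\infty}(s)=\mathbb{E}_{a\sim\pi^{\infty}}[Q_q^{\infty}(s,a)-\ln_q\pi^{\infty}(a|s)]$, while the greedy relation $\pi^{\infty}=G(Q_q^{\infty})$ forces $V_q^{\infty}(s)=\mathop{q\text{-max}}_a Q_q^{\infty}(s,a)$. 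Together these are precisely the TBO equations of Theorem \ref{thm:tbo}.

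Finally, I would invoke the sufficiency direction of Theorem \ref{thm:tbo}: any triple satisfying the TBO equations is the optimal value and optimal policy, so $Q_q^{\infty}=Q_q^{\star}$ and $\pi^{\infty}=\pi_q^{\star}$, giving $\pi_k\to\pi_q^{\star}$. The main obstacle I anticipate is the limit-passing in the third step, namely justifying that the $q$-maximizer $G$ and the potential $\psi_q$ vary continuously with $Q$ so that both fixed-point relations survive the limit; this is delicate because $\psi_q$ has no closed form for general $q$ and is defined only implicitly through its normalization condition.
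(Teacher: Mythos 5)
Your proposal is correct and follows essentially the same route as the paper: monotone improvement (Theorem \ref{thm:tpi}) plus boundedness of the reward and entropy give convergence of $Q_q^{\pi_k}$, the limit is shown to satisfy the TBO equations, and the sufficiency direction of Theorem \ref{thm:tbo} then identifies it with the optimal policy. The only difference is cosmetic: the paper deduces the $q$-max relation at the limit directly from the fact that the improvement step achieves equality at a fixed point, whereas you route the argument through continuity of the greedy map $G$ and of $\psi_q$ — a point you rightly flag as delicate and which the paper itself does not address.
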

The proof is done by checking if the converged policy satisfies the TBO equation.
In the next section, Tsallis policy iteration is extended to a Tsallis
actor-critic method which is a practical algorithm to handle
continuous state and action spaces and complex environments. 

\subsection{Tsallis Value Iteration}

Tsallis value iteration is derived from the optimality condition.
From the TBO equation, the TBO operator is defined by
\begin{eqnarray}
\small
\begin{aligned}\label{def:tbo_op}
\left[\mathcal{T}_{q}F \right](s,a) &\triangleq \mathop{\mathbbm{E}}_{s'\sim P}\left[\mathbf{r}(s,a,s') + \gamma V_{F}(s)\middle|s,a\right]\\
V_{F}(s) &\triangleq \mathop{q\text{-max}}_{a'}\left(F(s,a')\right).
\end{aligned}
\end{eqnarray}
Then, Tsallis value iteration (TVI) is defined by repeatedly applying
the TBO operator: $F_{k+1} = \mathcal{T}_{q}F_{k}$.

\begin{theorem}\label{thm:optimality}
For $q>0$, consider the TBO operator $\mathcal{T}_{q}$,
and define Tsallis value iteration as $F_{k+1}=\mathcal{T}_{q}F_{k}$ for an arbitrary initial function $F_{0}$ over $\mathcal{S}\times\mathcal{A}$. Then, $F_{k}$ converges to $Q_{q}^{\star}$.
\end{theorem}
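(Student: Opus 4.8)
The plan is to mirror the standard Banach fixed-point argument used for value iteration, replacing the $\max$ operator with $q\text{-max}$ and supplying the one additional ingredient that $q\text{-max}$ is non-expansive. I would work in the Banach space of bounded functions on $\mathcal{S}\times\mathcal{A}$ equipped with the sup-norm $\|F\|_{\infty}=\sup_{s,a}|F(s,a)|$ (nonempty and complete, so that Banach's theorem applies), so that convergence of $F_{k}$ reduces to showing that $\mathcal{T}_{q}$ is a contraction on this space.

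The central step is to establish that $q\text{-max}$ is non-expansive: for any two bounded functions $F,G$ and every state $s$,
\[
\left| \mathop{q\text{-max}}_{a}\left(F(s,a)\right) - \mathop{q\text{-max}}_{a}\left(G(s,a)\right) \right| \le \max_{a} |F(s,a) - G(s,a)|.
\]
This follows directly from the variational definition (\ref{def:qmax}). Letting $P_{F}$ denote a maximizer of $\mathop{\mathbbm{E}}_{a\sim P}[F(s,a)] + S_{q}(P)$ over $\Delta$, the feasibility (but suboptimality) of $P_{F}$ in the maximization defining $\mathop{q\text{-max}}(G)$ gives
\[
\mathop{q\text{-max}}_{a}(F) - \mathop{q\text{-max}}_{a}(G) \le \left(\mathop{\mathbbm{E}}_{a\sim P_{F}}[F] + S_{q}(P_{F})\right) - \left(\mathop{\mathbbm{E}}_{a\sim P_{F}}[G] + S_{q}(P_{F})\right) = \mathop{\mathbbm{E}}_{a\sim P_{F}}[F - G] \le \max_{a}|F - G|,
\]
and the reverse inequality follows by exchanging the roles of $F$ and $G$. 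This is where the specific structure of $q\text{-max}$ must be handled, and I expect it to be the main obstacle, since unlike plain $\max$ the $q\text{-max}$ operator admits no elementary pointwise form for general $q$; the variational characterization is precisely what makes the bound go through.

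Given non-expansiveness, the contraction property is immediate. Writing $V_{F}(s)=\mathop{q\text{-max}}_{a'}(F(s,a'))$ as in (\ref{def:tbo_op}), for any $F,G$,
\[
\left|[\mathcal{T}_{q}F](s,a) - [\mathcal{T}_{q}G](s,a)\right| = \gamma \left| \mathop{\mathbbm{E}}_{s'\sim P}[V_{F}(s') - V_{G}(s')] \right| \le \gamma \|V_{F} - V_{G}\|_{\infty} \le \gamma \|F - G\|_{\infty},
\]
where the final inequality is exactly the non-expansiveness just proved. Hence $\|\mathcal{T}_{q}F - \mathcal{T}_{q}G\|_{\infty} \le \gamma \|F - G\|_{\infty}$ with $\gamma\in(0,1)$, so $\mathcal{T}_{q}$ is a $\gamma$-contraction, and by Banach's fixed-point theorem the sequence $F_{k}=\mathcal{T}_{q}^{k}F_{0}$ converges to the unique fixed point $F_{*}$ satisfying $F_{*}=\mathcal{T}_{q}F_{*}$.

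Finally, I would identify $F_{*}$ with $Q_{q}^{\star}$. The fixed-point equation $F_{*}=\mathcal{T}_{q}F_{*}$, written out with $V_{F_{*}}(s)=\mathop{q\text{-max}}_{a}(F_{*}(s,a))$, reproduces exactly the $Q$- and $V$-components of the TBO equation (\ref{eqn:tbo}). By the sufficiency direction of Theorem \ref{thm:tbo}, any value function satisfying the TBO equation is optimal, so $F_{*}=Q_{q}^{\star}$, which completes the argument. The remainder after the non-expansiveness lemma is the routine contraction-mapping machinery already invoked for Theorem \ref{thm:tpe}, so I would reuse that structure here.
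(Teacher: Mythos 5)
Your proof is correct, and it reaches the contraction property by a genuinely different route than the paper. The paper never proves non-expansiveness of $q\text{-max}$ directly: it instead establishes two structural properties of the operator --- translation by constants, $\mathcal{T}_{q}(F+c\mathbf{1})=\mathcal{T}_{q}F+\gamma c\mathbf{1}$, and monotonicity, $F\succeq G\Rightarrow\mathcal{T}_{q}F\succeq\mathcal{T}_{q}G$ (both inherited from the corresponding properties of $q\text{-max}$, its Lemma on $q$-max properties) --- and then derives the $\gamma$-contraction by sandwiching $F$ between $G\pm\|F-G\|_{\infty}\mathbf{1}$. You instead prove non-expansiveness of $q\text{-max}$ in one step via the variational characterization: the maximizer $P_{F}$ for $F$ is feasible but suboptimal for $G$, giving $q\text{-max}(F)-q\text{-max}(G)\le\mathop{\mathbbm{E}}_{P_{F}}[F-G]\le\|F-G\|_{\infty}$, and symmetrically. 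The two routes are logically close (shift-equivariance plus monotonicity together imply non-expansiveness), but yours is more economical --- it proves exactly the inequality the contraction argument needs, in a form that does not require any pointwise or closed-form knowledge of $q\text{-max}$, and it sidesteps the separate monotonicity and discounting lemmas. The paper's decomposition buys reusable monotonicity facts that it also needs elsewhere (e.g., in the performance-error-bound lemma comparing $\mathcal{T}_{q}^{k}$ with $\mathcal{T}^{k}$). The only point worth making explicit in your write-up is that a maximizer $P_{F}$ exists because the action space is finite and $\Delta$ is compact with a continuous objective (or, alternatively, run the same argument with an $\epsilon$-maximizer); the final identification of the fixed point with $Q_{q}^{\star}$ via the sufficiency direction of the TBO theorem matches the paper exactly.
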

Similar to Tsallis policy evaluation,
the convergence of Tsallis value iteration depends on the
contraction property of $\mathcal{T}_{q}$, which makes $F_{k}$
converges to a fixed point of $\mathcal{T}_{q}$. 
Then, the fixed point can be shown to satisfy the TBO equation.

%

\subsection{Performance Error Bounds}

\begin{figure}[t!]
\vspace{-12pt}
\centering
\subfigure[World Model]{\includegraphics[width=0.32\textwidth]{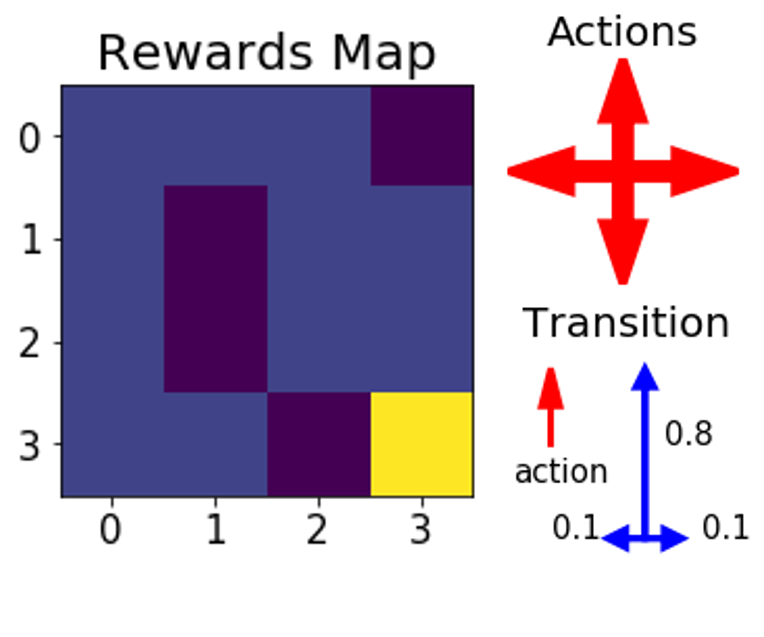}}
\subfigure[Results of Theorem \ref{thm:error_bounds}]{\includegraphics[width=0.4\textwidth]{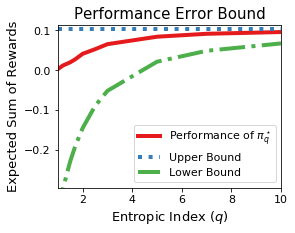}}
\caption{
(a) A reward map over $2$D grid state space and actions with a transition model. 
A dark blue (resp., blue and yellow) cell has a reward of $-0.5$ (resp., $0$ and $2$). 
(b) Performance of the optimal policy of Tsallis MDPs with varying $q$ from $1.0$ to $10.0$.}\label{fig:error_bounds}
\vspace{-15pt}
\end{figure}
We provide the performance error bounds of the optimal policy of a Tsallis MDP which can be obtained by TPI or TVI.
The error is caused by the regularization term used in Tsallis entropy maximization.
We compare the performance between the optimal policy of a Tsallis MDP and that of the original MDP.
The performance error bounds are derived as follows.

\begin{theorem}\label{thm:error_bounds}
Let $J(\pi)$ be the expected sum of rewards of a given policy $\pi$, $\pi^{\star}$ be the optimal policy of an original MDP, and $\pi^{\star}_{q}$ be the optimal policy of a Tsallis MDP with an entropic index \textit{q}.
Then, the following inequality holds:
\begin{eqnarray}
\small
\begin{aligned}
J(\pi^{\star}) + (1-\gamma)^{-1}\ln_{q} \left(1/|\mathcal{A}|\right) \leq J(\pi_{q}^{\star}) \leq J(\pi^{\star}),
\end{aligned}
\end{eqnarray}
where $|\mathcal{A}|$ is the cardinality of $\mathcal{A}$ and $q>0$.
\end{theorem}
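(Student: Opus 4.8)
The plan is to derive the upper bound directly from the definition of $\pi^{\star}$ and to obtain the lower bound by exploiting the optimality of $\pi^{\star}_{q}$ for the regularized objective together with two-sided bounds on the Tsallis entropy. The upper bound $J(\pi^{\star}_{q}) \leq J(\pi^{\star})$ is immediate: $\pi^{\star}$ maximizes $J$ over all policies in $\Pi$, so every policy, including $\pi^{\star}_{q}$, achieves at most $J(\pi^{\star})$.

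For the lower bound, the first step is to use that $\pi^{\star}_{q}$ solves the Tsallis MDP (\ref{def:tsallis_mdp}) with $\alpha = 1$, i.e. it maximizes $J(\pi) + S_{q}^{\infty}(\pi)$. Evaluating this optimality against the feasible competitor $\pi = \pi^{\star}$ gives $J(\pi^{\star}_{q}) + S_{q}^{\infty}(\pi^{\star}_{q}) \geq J(\pi^{\star}) + S_{q}^{\infty}(\pi^{\star})$, which rearranges to $J(\pi^{\star}_{q}) \geq J(\pi^{\star}) + S_{q}^{\infty}(\pi^{\star}) - S_{q}^{\infty}(\pi^{\star}_{q})$. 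It therefore suffices to lower bound $S_{q}^{\infty}(\pi^{\star}) - S_{q}^{\infty}(\pi^{\star}_{q})$, which I would do by bounding each discounted entropy term separately.

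The second step is to establish the pointwise bounds $0 \leq S_{q}(\pi(\cdot|s)) \leq -\ln_{q}(1/|\mathcal{A}|)$ for every $q>0$ and every state $s$. Non-negativity follows because $\ln_{q}$ is increasing with $\ln_{q}(1)=0$, so $\ln_{q}(p)\leq 0$ for $p \in (0,1]$ and hence $-\ln_{q}(\pi(a|s)) \geq 0$. The maximum is attained at the uniform policy: since $S_{q}$ is concave on the simplex for $q>0$ (stated above) and symmetric in its arguments, its maximizer is uniform, giving $S_{q}(\pi(\cdot|s)) \leq S_{q}(\mathrm{Unif}) = -\ln_{q}(1/|\mathcal{A}|)$. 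Plugging these into the definition $S_{q}^{\infty}(\pi) = \mathbb{E}_{\tau}[\sum_{t}\gamma^{t}S_{q}(\pi(\cdot|s_{t}))]$ and summing the geometric series yields $0 \leq S_{q}^{\infty}(\pi) \leq -(1-\gamma)^{-1}\ln_{q}(1/|\mathcal{A}|)$ for any $\pi$.

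Combining, I would use $S_{q}^{\infty}(\pi^{\star}) \geq 0$ together with $S_{q}^{\infty}(\pi^{\star}_{q}) \leq -(1-\gamma)^{-1}\ln_{q}(1/|\mathcal{A}|)$ to get $S_{q}^{\infty}(\pi^{\star}) - S_{q}^{\infty}(\pi^{\star}_{q}) \geq (1-\gamma)^{-1}\ln_{q}(1/|\mathcal{A}|)$, which feeds into the inequality from the first step and closes the proof. The main obstacle is the second step: carefully justifying the two-sided entropy bound, especially that the uniform distribution maximizes $S_{q}$ for all $q>0$; this is where the concavity and symmetry of the Tsallis entropy (rather than any MDP-specific argument) do the real work, while everything else is bookkeeping with the geometric discount factor. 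I note that the same constant $\ln_{q}(1/|\mathcal{A}|)$ already appears in Theorem \ref{thm:bound}, reflecting that the per-step entropy bonus of the $q$-maximum operator is exactly the quantity being accumulated here.
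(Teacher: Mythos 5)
Your proof is correct, and both bounds match the paper's statement; the difference lies in how you establish the key inequality $J(\pi^{\star}) \leq J(\pi^{\star}_{q}) + S_{q}^{\infty}(\pi^{\star}_{q})$. You obtain it variationally: $\pi^{\star}_{q}$ maximizes the regularized objective, so testing it against the competitor $\pi^{\star}$ and discarding $S_{q}^{\infty}(\pi^{\star}) \geq 0$ does the job in two lines. The paper instead proves an operator-domination lemma by induction, $\mathcal{T}_{q}^{k}F \succeq \mathcal{T}^{k}F$ for all $k$, which yields the pointwise statement $V^{\star}_{q}(s) \geq V^{\star}(s)$ for every state and then takes the expectation over $d$. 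Your route is more elementary and self-contained (it needs no Bellman-operator machinery, only the definition of the Tsallis MDP and the two-sided entropy bound), while the paper's lemma buys the stronger per-state comparison of value functions, which is of independent interest. From that point on the two arguments coincide: both bound each per-step entropy by its maximum $-\ln_{q}(1/|\mathcal{A}|)$ attained at the uniform distribution (the paper proves this via KKT in its appendix; your concavity-plus-symmetry argument is an acceptable substitute) and sum the geometric series to get the $(1-\gamma)^{-1}$ factor. The upper bound $J(\pi^{\star}_{q}) \leq J(\pi^{\star})$ is handled identically in both.
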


The proof of Theorem \ref{thm:error_bounds} is included in the supplementary material.
Here, we can observe that the performance gap shows the similar
property of the TBO equation.
We further verify Theorem \ref{thm:error_bounds} on a simple grid world problem.
We compute the expected sum of rewards of $\pi_{q}^{\star}$ obtained from TVI by varying $q$ values and compare them to the bounds in Theorem \ref{thm:error_bounds}, as shown in Figure \ref{fig:error_bounds}.
Notice that $\ln_{q} \left(1/|\mathcal{A}|\right)\propto 1/|\mathcal{A}|^{q-1}$ converges to zero as $q\rightarrow\infty$.
This fact supports that $\pi_{q}^{\star}$ converges to the greedy optimal policy in the original Bellman equation when $q\rightarrow\infty$.

To summarize this section, we derive dynamic programming methods for
Tsallis MDPs with proofs of convergence, optimality, and performance error bounds.
One important result is that all theorems derived in this section hold
for every $q>0$, for which $S_{q}$ is concave.
Furthermore, the previous algorithms
\cite{ziebart2010modeling,lee2018sparse} can be recovered by setting
$q$ to a specific value. 
This generalization makes it possible to apply the Tsallis entropy to
any sequential decision making problem by choosing the proper
\textit{entropic index} depending on the nature of the problem.

TPI and TVI methods require the transition probability $P$ to
update the value function.
Furthermore, due to the intractability of $\psi_{q}$,
it also requires an additional numerical computation to evaluate $q$-maximum.
In this regard,
we extend TPI to an actor-critic method which can avoid these issues and
handle large-scale model-free RL problems. 

\section{Tsallis Actor Critic for Model-Free RL}\label{sec:tac}

In this section, we propose a Tsallis actor-critic (TAC), which can be
applied to a complex environment with continuous state and action
spaces without knowing the transition probabilities. 
To address a large-scale problem with continuous state and action
spaces, we approximate Tsallis policy iteration (TPI) using a
neural network to estimate both value and policy functions. 
In the dynamic programming setting, 
Tsallis policy improvement (\ref{def:tp_imp}) and Tsallis value iteration (\ref{def:tbo_op}) (TVI) require the same numerical computation since a closed form solution of $q$-maximum (and (\ref{def:tp_imp})) is generally not known.
However, in TAC,
the Tsallis policy improvement step is replaced by updating a policy network.

Our algorithm maintains five networks to model a policy $\pi_{\phi}$, state value $V_{\psi}$, target value $V_{\psi^{-}}$, two action values $Q_{\theta_{1}}$ and $Q_{\theta_{2}}$.
We also utilize a replay buffer $\mathcal{D}$ where every interactions $(s_{t},a_{t},r_{t+1},s_{t+1})$ are stored and it is sampled when updating the networks.
Value networks $V_{\psi}, Q_{\theta_{1}}$ and $Q_{\theta_{2}}$ are updated using (\ref{def:tbe_op_q}) and $\pi_{\phi}$ is updated using (\ref{def:tp_imp}).
Since (\ref{def:tp_imp}) has the expectation over $\pi_{\phi}$ which
is intractable, a stochastic gradient of (\ref{def:tp_imp}) is
required to update $\pi_{\phi}$. 
We employ the reparameterization trick to approximate the stochastic gradient.
In our implementation, we model a policy function as a tangent hyperbolic of a Gaussian random variable which has been first introduced in \cite{haarnoja2018sac} , i.e., $a\triangleq f_{\phi}(s;\epsilon)=\tanh(\mu_{\phi}(s)+\epsilon\sigma_{\phi}(s)), \epsilon\sim\mathcal{N}(0,\mathbf{I})$, where $\mu_{\phi}(s)$ and $\sigma_{\phi}(s)$ are the outputs of $\pi_{\phi}$.
Then, the gradient of (\ref{def:tp_imp}) becomes
$\mathop{\mathbbm{E}}_{s_{t} \sim \mathcal{D}}\left[\mathop{\mathbbm{E}}_{\epsilon \sim \mathcal{N}} \left[\nabla_{\phi}Q_{\theta}(s_{t},f_{\phi})- \alpha \nabla_{\phi}\ln_{q}(\pi_{\phi}(f_{\phi}|s_{t}))\right]\right]$,
where $f_{\phi}=f_{\phi}(s_{t},\epsilon)$ , $\mathcal{D}$ indicates a
replay buffer and $\alpha$ is a coefficient of the Tsallis entropy.
Thus, the gradient of $\ln_{q}(x)$ plays an important role in exploring the environment.
Finally, the $\psi^{-}$ is updated towards $\psi$ using an exponential moving average method.
Algorithmic details are similar to the soft actor-critic (SAC) algorithm which is known to be the state of the art.
Since we generalize the fundamental Bellman equation to the maximum Tsallis entropy case,
the Tsallis entropy can be applied to existing RL methods with the SG etropy by replacing the entropy term.
Due to the space limitation, more detailed settings are explained in the supplementary material
where the implementation of TAC are also included and it is available publicly\footnote{\url{https://github.com/kyungjaelee/tsallis_actor_critic_mujoco}}.

\section{Experiment}

In experiment, 
we verify the effect of the entropic index on exploration and compare our algorithm to the existing state-of-the-art actor-critic methods
on continuous control problems using the MuJoCo simulators:
HalfCheetah-v2, Ant-v2, Pusher-v2, Humanoid-v2, Hopper-v2, and Swimmer-v2.
Note that results for Hopper-v2 and Swimmer-v2 are included in the supplementary material.
We first evaluate how a different entropic index influences the exploration of TAC. 
As the entropic index changes the structure of the Tsallis entropy,
different entropic indices cause different types of exploration.
We also compare our method to various on-policy and off-policy actor-critic methods.
For on-policy methods, 
trust region policy optimization (TRPO) \cite{schulman2015trpo}, which
slowly updates a policy network within the trust region to obtain
numerical stability, and proximal policy optimization (PPO)
\cite{schulman2017ppo}, which utilizes an importance ratio clipping
for stable learning, are compared
where a value network is employed for generalized advantage estimation \cite{schulman2015gae}.
For off-policy methods,
deep deterministic policy gradient (DDPG) \cite{lillicrap2015ddpg},
whose policy is modeled as a deterministic function instead of
a stochastic policy and is updated using the deterministic policy gradient,
and twin delayed deep deterministic policy gradient (TD3) \cite{fujimoto2018td3}, which modifies the DDPG method by applying two Q networks to increase stability and prevent overestimation, are compared.
We also compare the soft actor-critic (SAC) method \cite{haarnoja2018sac} which employs the Shannon-Gibbs entropy for exploration.
Since TAC can be reduced to SAC with $q=1$ and algorithmic details are
the same,
we denote TAC with $q=1$ as SAC.
For other algorithms, we utilize OpenAI's implementations\footnote{\url{https://github.com/openai/spinningup}}
and extend the SAC algorithm to TAC by replacing the SG entropy with the Tsallis entropy with the entropic index $q$.
We exclude \cite{chen2018tsallisensembles}, which also can utilize the Tsallis entropy in the Q learning method,
since \cite{chen2018tsallisensembles} is only applicable for discrete action spaces.

\subsection{Effect of Entropic Index}

To verify the effect of the entropic index,
we conduct experiments with wide range of $q$ values: $\{0.5,0.7,1.0, 1.2, 1.5, 1.7, 2.0, 3.0, 5.0\}$
and measure the total average returns during training phase.
We only change the entropic index and fix an entropy coefficient $\alpha$ to $0.05$ for Humanoid-v2 and $0.2$ for other problems.
We run entire algorithms with ten different random seeds
and the results are shown in Figure \ref{fig:mujoco_tac}.
We realize that the proposed method performs better when $1 \leq q < 2$ than when $0<q<1$ and $q \geq 2$, in terms of stable convergence and final total average returns.
Using $S_{0<q<1}$ generally shows poor performance since it hinders exploitation more strongly than the SG entropy.
For $1 \leq q <2$, the Tsallis entropy penalizes less the greediness of a
policy compared to the SG entropy (or $q=1$)
where, for the same probability distribution, the value of $S_{1\leq q<2}$
is always less than $S_{1}$. 
However, the approximated stochastic gradient follows the gradient of $-\ln_{q}(x)$, which is inversely proportional to the
policy probability similar to the SG entropy.
Thus, the Tsallis entropy within $1\leq q<2$ not only encourages
exploration but also allows the policy to converge a greedy policy.
However, when $q \geq 2$, the value of the Tsallis entropy is smaller
than that of the SG entropy for a given distribution and 
the gradient of $q$-logarithm is proportional to the policy probability.
Then, the action with smaller probability is less encouraged to be
explored since its gradient is smaller than the action with larger
probability. 
Consequently, the Tsallis MDP shows an early convergence.
In this regards, we can see TAC with $1\leq q<2$ outperforms TAC with $q\geq 2$.
Furthermore, 
in HalfCheetah-v2 and Ant-v2,
TAC with $1.5$ shows the best performance in $1\leq q <2$ while, in Humanoid-v2, TAC with $1.2$ shows  the best performance.
Furthermore, in Pusher-v2, the final total average returns of all settings are similar, but TAC with $1.2$ shows slightly faster convergence.
We believe that these results empirically show that there exists the most
appropriate $q$ value between one and two depending on the environment
while $q>2$ has a negative effect on exploration.

\subsection{Comparative Evaluation}
\begin{figure*}[t!]
\vspace{-10pt}
\centering
\subfigure[HalfCheetah-v2]{\includegraphics[width=0.24\textwidth]{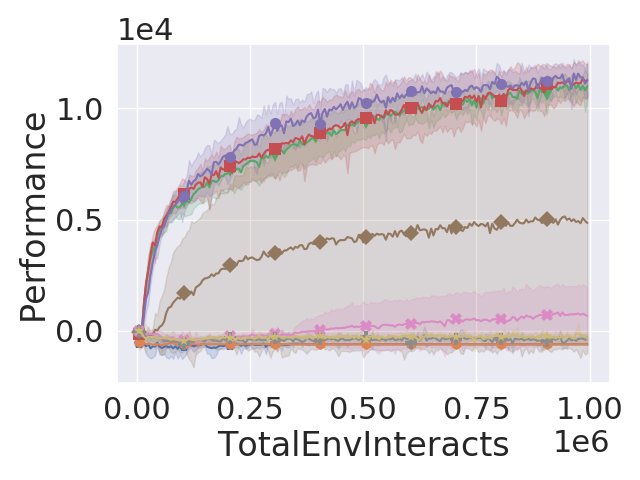}}
\subfigure[Ant-v2]{\includegraphics[width=0.24\textwidth]{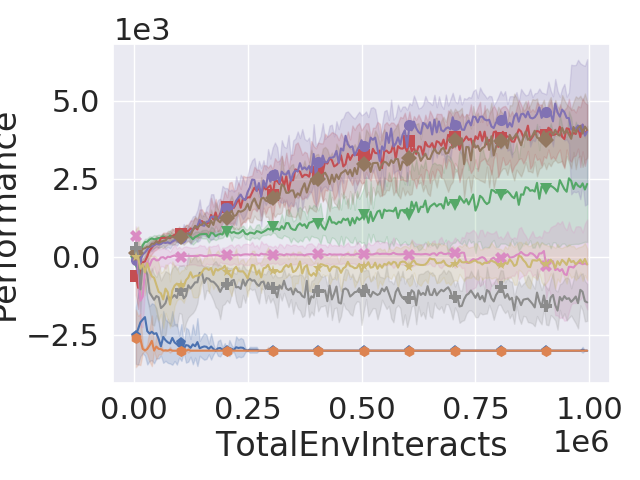}}
\subfigure[Pusher-v2]{\includegraphics[width=0.24\textwidth]{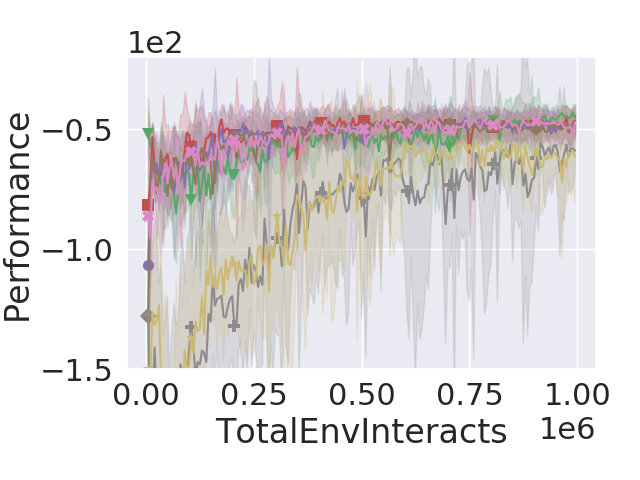}}
\subfigure[Humanoid-v2]{\includegraphics[width=0.24\textwidth]{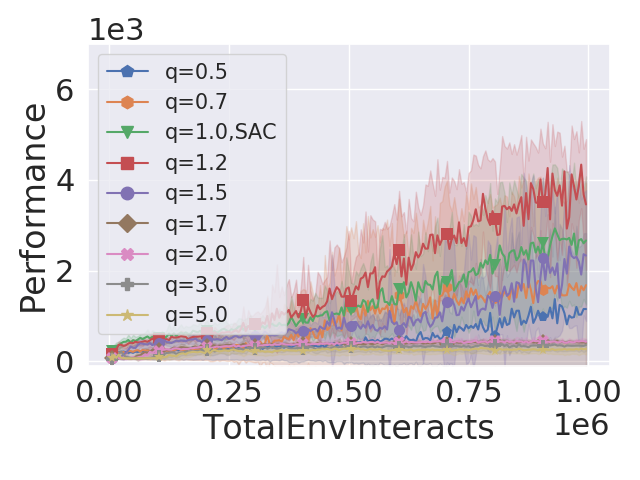}}
\caption{Average training returns on four MuJoCo tasks. 
A solid line is the average return over ten trials and the shade area
shows one variance.}
\label{fig:mujoco_tac}
\vspace{-12pt}
\end{figure*}

\begin{figure*}[t!]
\vspace{-3pt}
\centering
\subfigure[HalfCheetah-v2]{\includegraphics[width=0.24\textwidth]{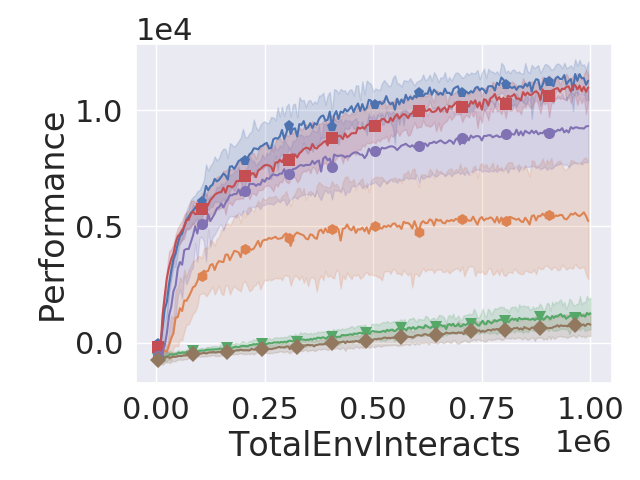}}
\subfigure[Ant-v2]{\includegraphics[width=0.24\textwidth]{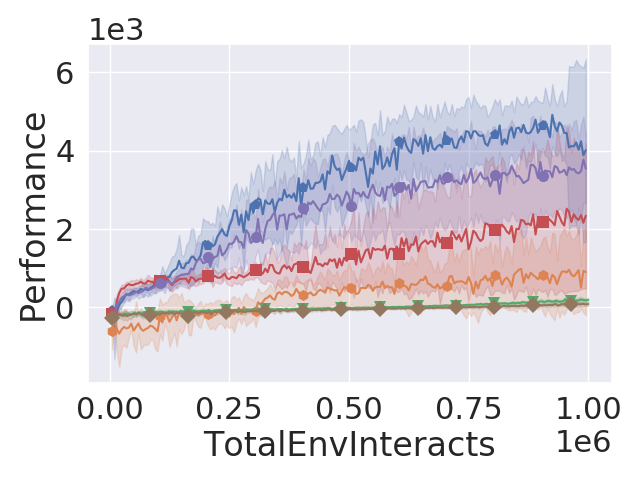}}
\subfigure[Pusher-v2]{\includegraphics[width=0.24\textwidth]{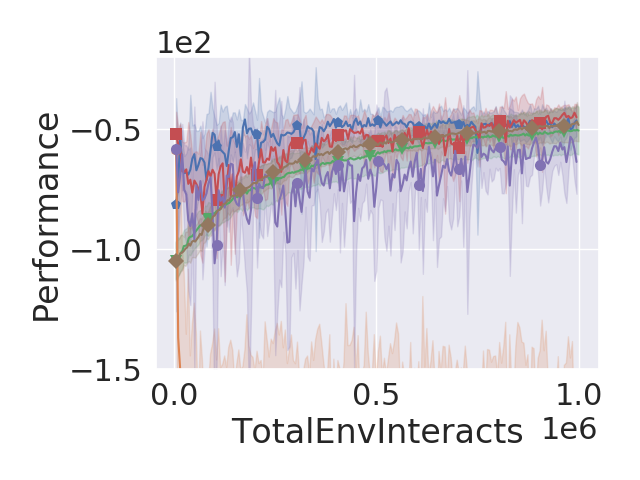}}
\subfigure[Humanoid-v2]{\includegraphics[width=0.24\textwidth]{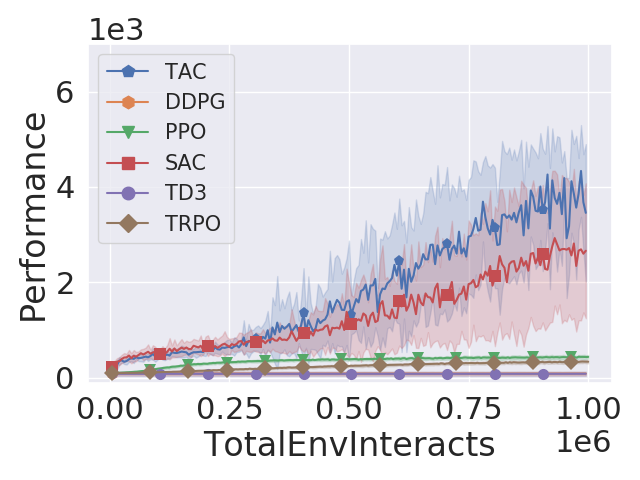}}
\caption{Comparison to existing actor-critic methods on four MuJoCo tasks. 
SAC (red square line) is the same as TAC with $q=1$ and TAC (blue pentagon line) indicates TAC with $q\neq 1$.}\label{fig:mujoco_all}
\vspace{-12pt}
\end{figure*}

Figure \ref{fig:mujoco_all} shows the total average returns of TAC and
other compared methods.
We use the best $q$ value from the previous experiments.
For SAC, the best hyperparameter reported in \cite{haarnoja2018sac} is used.
To verify that there exists a more suitable entropic index than $q=1$,
we set all hyperparameters in TAC to be the same as that of SAC and
only change $q$ values. 
SAC, TAC, and DDPG use the same architectures for actor and critic networks, which are single layered fully connected neural networks with $300$ hidden units.
However, TRPO and PPO employ a smaller architecture using $64$ units
since they shows poor performances when a large network is used.
Entire experimental settings are explained in the supplementary material.
To obtain consistent results,
we run all algorithms with ten different random seeds.
TAC with a proper $q$ value outperforms all existing methods in all environments.
While SAC generally outperforms DDPG, PPO, TRPO and shows similar or
better performance than TD3, except Ant-v2,
TAC achieves better performance with a smaller number of samples than SAC in all problems.
Especially, in Ant-v2, 
TAC achieves the best performance with $q=1.5$,
while SAC is the third best.
Furthermore, in Humnoid-v2 which has the largest action space among all problems,
TAC with $q=1.2$ outperforms all the other methods dramatically.
Although hyperparameters for TAC are not optimized, simply changing
$q$ values achieves a significant performance improvement.
These results demonstrate that, by properly setting $q$ value, TAC can achieve the state-of-the-art performance.

\section{Conclusion}

We have proposed a unified framework for maximum entropy RL problems. 
The proposed maximum Tsallis entropy MDP generalizes a MDP with
Shannon-Gibbs entropy maximization and allows a diverse range of
different entropies.
The optimality condition of a Tsallis MDP is shown
and the convergence and optimality guarantees for the proposed
dynamic programming for Tsallis MDPs have been
derived.
We have also presented the Tsallis actor-critic (TAC) method, which
can handle a continuous state action space for model-free RL
problems. 
It has been observed that there exists a suitable entropic index for a
different RL problem and TAC with a specific entropic index
outperforms all compared actor-critic methods. 
One valuable extension to this work is to learn a proper entropic
index for a given task, which is our future work.
\appendix

\section{Tsallis Entropy}

We show that the Tsallis entropy is a concave function over the distribution $P$
and has the maximum at an uniform distribution.
Note that this is an well known fact, but, we restate it to make the manuscript self-contained. 
\begin{prop}
Assume that $\mathcal{X}$ is a finite space.
Let $P$ is a probability distribution over $\mathcal{X}$.
If $q > 0$, then, $S_{q}(P)$ is concave with respect to $P$.
\end{prop}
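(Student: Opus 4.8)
The plan is to exploit the fact that $S_q$ is \emph{separable} across the coordinates of $P$, so that concavity in the vector argument reduces to concavity of a single scalar function. Writing $P=(p_1,\dots,p_n)$ with $n=|\mathcal{X}|$, I would first rewrite
\[
S_q(P)=\sum_{x}\bigl(-p_x\ln_q(p_x)\bigr)=\sum_x g(p_x),\qquad g(p)\triangleq -p\,\ln_q(p),
\]
which for $q\neq 1$ simplifies to $g(p)=\tfrac{p-p^q}{q-1}$ and for $q=1$ reads $g(p)=-p\log p$. Since $S_q$ is a finite sum of copies of the same one-variable function, each composed with the linear coordinate map $P\mapsto p_x$, and since sums and linear precompositions preserve concavity, it suffices to show that $g$ is concave on $[0,1]$ for every $q>0$.

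Next I would compute the second derivative of $g$ on the open interval $(0,1)$. For $q\neq 1$ this gives $g'(p)=\tfrac{1-q\,p^{q-1}}{q-1}$ and $g''(p)=-q\,p^{q-2}$, while for $q=1$ one has $g''(p)=-1/p$. In both cases $g''(p)<0$ whenever $q>0$ and $p>0$, because $p^{q-2}>0$ and $q>0$. Hence $g$ is strictly concave on $(0,1)$, and by continuity up to the endpoints it is concave on $[0,1]$. Equivalently, and more transparently for the vector statement, the Hessian of $S_q$ is the diagonal matrix $\operatorname{diag}\bigl(-q\,p_1^{q-2},\dots,-q\,p_n^{q-2}\bigr)$, which is negative definite on the interior of the simplex for $q>0$; a twice-differentiable function with negative semidefinite Hessian on a convex domain is concave. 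This form also pinpoints the dividing line at $q=0$, since the sign of $g''$ is exactly the sign of $-q$, explaining the switch from concave to convex as $q$ crosses zero noted in the main text.

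The only genuine care needed is at the boundary $p=0$, where $\ln_q(p)$ diverges for $0<q<1$. There I would invoke the standard convention $g(0)=\lim_{p\to 0^+}(-p\ln_q(p))=0$, the limit vanishing because $p^q\to 0$ dominates; this extends $g$ continuously to the closed interval $[0,1]$, so that the concavity established on the interior carries over to the whole probability simplex. This boundary continuity is the main—and only mild—obstacle; everything else is the elementary second-derivative computation, and the concavity of $S_q$ follows at once from the separable sum representation.
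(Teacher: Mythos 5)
Your proof is correct and follows essentially the same route as the paper's: both reduce the claim to concavity of the scalar function $g(p)=-p\ln_q(p)$ via the second derivative $g''(p)=-q\,p^{q-2}<0$, and then pass to the sum over coordinates (the paper spells out the convex-combination inequality termwise, you invoke closure of concavity under sums and linear precomposition). Your added attention to the boundary $p=0$ for $0<q<1$ is a small but welcome refinement over the paper's argument, which restricts to $x>0$ without comment.
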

\begin{proof}
Let us consider the function $f(x) = -x \ln_{q}(x)$ defined over $( x > 0 )$.
Second derivative of $d^{2}f(x)/dx^{2}$ is computed as 
\begin{eqnarray*}
\begin{aligned}
\frac{d^{2}f(x)}{dx^{2}} = -qx^{q-2} < 0\;\; ( x > 0, q>0 ).
\end{aligned}
\end{eqnarray*}
Thus, $f(x)$ is a concave function. 
Now, using this fact, we show that the following inequality holds.
For $\lambda_{1}, \lambda_{2}\geq0$ such that $\lambda_{1}+\lambda_{2}=1$, and probabilities $P_{1}$ and $P_{2}$,
\begin{eqnarray*}
\begin{aligned}
S_{q}(\lambda_{1}P_{1}+\lambda_{2}P_{2})
&=\sum_{x} -(\lambda_{1}P_{1}(x)+\lambda_{2}P_{2}(x))\ln_{q}(\lambda_{1}P_{1}(x)+\lambda_{2}P_{2}(x))\\
&<\sum_{x} -\lambda_{1}P_{1}(x)\ln_{q}(P_{1}(x))-\lambda_{2}P_{2}(x)\ln_{q}(P_{2}(x))\\
&=\lambda_{1}S_{q}(P_{1}) + \lambda_{2} S_{q}(P_{2}).
\end{aligned}
\end{eqnarray*}
Consequently, $S_{q}(P)$ is concave with respect to $P$.
\end{proof}
\begin{prop}
Assume that $\mathcal{X}$ is finite space.
Then, $S_{q}(P)$ is maximized when $P$ is a uniform distribution, i.e., $P=1/|\mathcal{X}|$ where $|\mathcal{X}|$ is the number of elements in $\mathcal{X}$.
\end{prop}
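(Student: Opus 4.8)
The plan is to leverage the concavity of the map $f(x)=-x\ln_{q}(x)$ established in the previous proposition and conclude by a single application of Jensen's inequality with uniform weights. The key observation is that $S_{q}(P)=\sum_{x\in\mathcal{X}}f(P(x))$ decomposes as a sum of identical concave functions evaluated at the coordinates of $P$, so maximizing $S_{q}$ over the probability simplex is a symmetric concave program whose maximizer I expect to sit at the center of the simplex, i.e.\ the uniform distribution.

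Concretely, writing $n=|\mathcal{X}|$, I would apply Jensen to the $n$ values $P(x)$, each with weight $1/n$:
\[
\frac{1}{n}\sum_{x\in\mathcal{X}} f(P(x)) \;\le\; f\!\left(\frac{1}{n}\sum_{x\in\mathcal{X}}P(x)\right)=f\!\left(\frac{1}{n}\right),
\]
where the last equality uses $\sum_{x}P(x)=1$. Multiplying through by $n$ gives $S_{q}(P)\le n\,f(1/n)$, and since every coordinate of the uniform distribution $U$ equals $1/n$ we have $n\,f(1/n)=S_{q}(U)$. This already establishes that the uniform distribution attains the maximum value, which (as a sanity check) equals $-\ln_{q}(1/n)$ and is consistent with the constant $\ln_{q}(1/|\mathcal{A}|)$ appearing in Theorem~\ref{thm:bound} and Theorem~\ref{thm:error_bounds}.

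For uniqueness I would invoke the \emph{strict} concavity of $f$, which the previous proposition already records via $f''(x)=-qx^{q-2}<0$ for $q>0$ and $x>0$, so that Jensen's inequality holds with equality precisely when all the $P(x)$ coincide; on the simplex this forces $P(x)=1/n$ for every $x$. An alternative route is a Lagrangian/KKT stationarity argument: differentiating $\sum_{x} f(P(x))-\lambda\bigl(\sum_{x}P(x)-1\bigr)$ and using $d\ln_{q}(x)/dx=x^{q-2}$ yields the condition $-\ln_{q}(P(x))-P(x)^{q-1}=\lambda$ for all $x$ in the support; since $f'$ is strictly monotone (again by $f''<0$) this pins $P(x)$ to a single value, recovering the uniform solution.

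The computations are routine once concavity is in hand, so I do not anticipate a substantive obstacle. The only point requiring mild care is the equality/uniqueness condition, and in the Lagrangian variant one must additionally verify that the boundary of the simplex (zero-probability coordinates) cannot improve on the interior optimum; this follows from strict concavity of $f$ together with the continuous extension $f(0)=0$, which I would note in passing. I would therefore present the Jensen argument as the main proof, since it directly reuses the concavity just proved, and mention the KKT version only as a remark.
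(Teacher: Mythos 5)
Your proof is correct, but it takes a genuinely different route from the paper. The paper sets up $\max_{P\in\Delta}S_{q}(P)$ as a constrained optimization problem, writes out the full KKT system, and deduces from stationarity that $P^{\star}(x)=\exp_{q}\bigl((\mu^{\star}-1)/q\bigr)$ is constant on its support; it then has to separately argue that the support should be all of $\mathcal{X}$ by comparing the values $-\ln_{q}(1/|S|)$ across support sizes. Your Jensen argument is more economical: since $S_{q}(P)=\sum_{x}f(P(x))$ with $f(x)=-x\ln_{q}(x)$ concave, uniform-weight Jensen immediately gives $S_{q}(P)\le n\,f(1/n)=S_{q}(U)$ for \emph{every} $P$ in the simplex, boundary points included, so the support analysis that the paper must do by hand disappears. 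The one point you rightly flag --- extending $f$ continuously by $f(0)=0$ so that Jensen applies to distributions with zero-probability atoms --- does go through for all $q>0$ (indeed $-x\ln_{q}(x)=(x-x^{q})/(q-1)\to 0$ as $x\to 0^{+}$), and strict concavity gives you the uniqueness of the maximizer, which the paper does not explicitly address. Your KKT remark is essentially the paper's proof, so you have in effect subsumed it. What the paper's approach buys in exchange is that the same KKT machinery is reused verbatim for the bandit and MDP optimality results later on, whereas the Jensen argument is special to the symmetric, unregularized problem.
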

\begin{proof}
We would like to employ the KKT condition on the following optimization problem.
\begin{eqnarray}\label{prob:mte}
\begin{aligned}
\max_{ P \in \Delta } S_{q} (P)
\end{aligned}
\end{eqnarray}
where $\Delta = \{ P | P(x) \geq 0, \sum_{x} P(x) = 1\}$ is a probability simplex.
Since $\mathcal{X}$ is finite,
the optimization variables are probability mass defined over each element.
The KKT condition of \ref{prob:mte} is
\begin{eqnarray*}
\begin{aligned}
\forall x \in \mathcal{X}&, \frac{\partial \left(S_{q} (\pi) - \sum_{x} \lambda^{\star}(x)P(x) - \mu^{\star}\left(1-\sum_{x}P(x)\right)\right)}{\partial P(x)}\bigg|_{P(x)=P^{\star}(x)}\\
&=-\ln_{q}(P^{\star}(x))-(P^{\star}(x))^{q-1}-\lambda^{\star}(x) + \mu^{\star}\\
&=-q\ln_{q}(P^{\star}(x))-1-\lambda^{\star}(x) + \mu^{\star} = 0\\
\forall x \in \mathcal{X}&, 0=1-\sum_{x} P^{\star}(x), P^{\star}(x) \geq 0\\
\forall x \in \mathcal{X}&, \lambda^{\star}(x) \leq 0\\
\forall x \in \mathcal{X}&, \lambda^{\star}(x)P^{\star}(x) = 0
\end{aligned}
\end{eqnarray*}
where $\lambda^{\star}$ and $\mu^{\star}$ are the Lagrangian multipliers for constraints in $\Delta$.
First, let us consider $P^{\star}(x) > 0$.
Then, $\lambda^{\star}(x)=0$ from the last condition (complementary slackness).
The first condition implies
$$
P^{\star}(x) = \exp_{q}\left(\frac{\mu^{\star}-1}{q}\right).
$$
Hence, $P^{\star}(x)$ has constant probability mass which means $P^{\star}(x)=1/|S|$ where $S=\{x|P^{\star}(x) > 0\}$ .
The optimal value is $S_{q}(P^{\star}) = - \ln_{q}(1/|S|)$.
Since $-\ln_{q}(x)$ is a monotonically decreasing function,
$|S|$ should be the largest number as possible as it can be.
Hence, $S=\mathcal{X}$ and $P^{\star}(x) = 1/|\mathcal{X}|$.
\end{proof}

\section{Bandit with Maximum Tsallis Entropy}

We first analyze a Bandit setting with maximum Tsallis entropy, which is defined as
\begin{eqnarray}\label{prob:exp_mte}
\begin{aligned}
\max_{ \pi \in \Delta } \left[ \mathop{\mathbbm{E}}_{a\sim \pi}\left[ R \right] + S_{q} (\pi)\right]
\end{aligned}
\end{eqnarray}
The following propositions are already done in several works \citet{}.
However, we restate it to introduce the concept of maximum Tsallis entropy in an expectation maximization problem.

\begin{prop}\label{rem:opt_sol_bndt}
The optimal solution of (\ref{prob:exp_mte}) is
\begin{eqnarray}\label{def:optsol_qmax}
\begin{aligned}
\pi^{\star}_{q}(a) = \exp_{q}\left(\frac{\mathbf{r}(a)}{q}- \psi_{q}\left(\frac{\mathbf{r}}{q}\right)\right),
\end{aligned}
\end{eqnarray}
where the \textit{q}-potential function $\psi_{q}$ is a functional defined on $\{\mathcal{A},\mathbf{r}\}$. $\psi_{q}$ is determined uniquely for given $\{\mathcal{A},\mathbf{r}\}$ by the following normalization condition:
\begin{eqnarray}\label{def:qpoten}
\begin{aligned}
\sum_{a} \pi^{\star}_{q}(a) = \sum_{a}\exp_{q}\left(\frac{\mathbf{r}(a)}{q}- \psi_{q}\left(\frac{\mathbf{r}}{q}\right)\right)= 1.
\end{aligned}
\end{eqnarray}
Furthermore, using $\pi_{q}^{\star}$, the optimal value can be written as 
\begin{eqnarray}\label{eqn:qmax_qpot}
\begin{aligned}
\mathop{\mathbbm{E}}_{a\sim \pi^{\star}}\left[ R \right] + S_{q} (\pi^{\star})=(q-1) \sum_{a} \frac{\mathbf{r}(a)}{q}\exp_{q}\left(\frac{\mathbf{r}(a)}{q} - \psi_{q}\left(\frac{\mathbf{r}}{q}\right)\right) +  \psi_{q}\left(\frac{\mathbf{r}}{q}\right).
\end{aligned}
\end{eqnarray}
\end{prop}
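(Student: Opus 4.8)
The plan is to treat the problem (\ref{prob:exp_mte}) as a finite-dimensional concave maximization over the simplex $\Delta$ and to solve it through the Karush--Kuhn--Tucker (KKT) conditions. The objective $\mathbb{E}_{a\sim\pi}[\mathbf{r}(a)] + S_q(\pi) = \sum_a \pi(a)\mathbf{r}(a) - \sum_a \pi(a)\ln_q(\pi(a))$ is the sum of a linear term and the Tsallis entropy, which is concave for $q>0$ by the preceding proposition. Since the constraints $\sum_a \pi(a)=1$ and $\pi(a)\ge 0$ are affine, strong duality holds and the KKT conditions are both necessary and sufficient for optimality. First I would form the Lagrangian $L(\pi,\lambda,\mu) = \sum_a \pi(a)\mathbf{r}(a) - \sum_a \pi(a)\ln_q(\pi(a)) + \sum_a \lambda(a)\pi(a) + \mu\bigl(1-\sum_a\pi(a)\bigr)$ with multipliers $\lambda(a)\ge 0$ for the nonnegativity constraints and $\mu$ for normalization.

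The crux is the stationarity condition, and here I would reuse the same algebra as in the uniform-distribution proposition. Using $d\ln_q(x)/dx = x^{q-2}$ together with the identity $x^{q-1} = (q-1)\ln_q(x)+1$, one obtains $\partial[-x\ln_q(x)]/\partial x = -q\ln_q(x)-1$. Hence stationarity reads $\mathbf{r}(a) - q\ln_q(\pi^{\star}_q(a)) - 1 + \lambda(a) - \mu = 0$. On the support, where $\pi^{\star}_q(a)>0$, complementary slackness forces $\lambda(a)=0$, so $\ln_q(\pi^{\star}_q(a)) = \mathbf{r}(a)/q - (1+\mu)/q$. Inverting $\ln_q$ with its inverse $\exp_q$ and absorbing the constant $(1+\mu)/q$ into a single scalar written as $\psi_q(\mathbf{r}/q)$ yields $\pi^{\star}_q(a)=\exp_q(\mathbf{r}(a)/q - \psi_q(\mathbf{r}/q))$, which is exactly (\ref{def:optsol_qmax}); the value of $\psi_q$ is then pinned down uniquely by the normalization $\sum_a \pi^{\star}_q(a)=1$, i.e. (\ref{def:qpoten}). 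The truncation $[\,\cdot\,]_{+}$ built into $\exp_q$ automatically assigns zero probability to actions whose argument falls below the threshold, which is the mechanism by which off-support actions are handled.

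For the optimal value (\ref{eqn:qmax_qpot}) I would substitute the closed form back into the objective. Writing $\psi_q \equiv \psi_q(\mathbf{r}/q)$ and using $-\ln_q(\pi^{\star}_q(a)) = -\mathbf{r}(a)/q + \psi_q$ on the support, the objective collapses to $\bigl(1 - 1/q\bigr)\sum_a \pi^{\star}_q(a)\mathbf{r}(a) + \psi_q\sum_a\pi^{\star}_q(a) = (q-1)\sum_a \tfrac{\mathbf{r}(a)}{q}\pi^{\star}_q(a) + \psi_q$, which becomes the claimed expression once $\pi^{\star}_q(a)$ is replaced by its $\exp_q$ form.

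The main obstacle I anticipate is the careful treatment of the boundary of the simplex. For $q\neq 1$ (and especially $q\ge 2$) the optimal policy can be genuinely sparse, so I must verify that the $\exp_q$ expression together with the multipliers $\lambda(a)\ge 0$ is consistent with complementary slackness off the support, and that $\psi_q$ determined by (\ref{def:qpoten}) over the active set is well defined and unique. In particular I would establish monotonicity of the normalization map in $\mu$ to guarantee uniqueness of $\psi_q$, and I would check the regime $0<q<1$, where $\ln_q$ is unbounded near zero, so that the limiting behaviour of $-\pi\ln_q(\pi)$ as $\pi\to 0^{+}$ does not invalidate the stationarity argument at the boundary.
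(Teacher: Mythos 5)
Your proposal is correct and follows essentially the same route as the paper's proof: both treat (\ref{prob:exp_mte}) as a concave program over the simplex, apply the KKT conditions, use the identity $\partial[-x\ln_q(x)]/\partial x = -q\ln_q(x)-1$ in the stationarity condition, identify $(1+\mu^{\star})/q$ with $\psi_q(\mathbf{r}/q)$ via the normalization constraint, and obtain (\ref{eqn:qmax_qpot}) by back-substitution. Your anticipated monotonicity argument for the uniqueness of $\psi_q$ is exactly how the paper handles it (via the intermediate value theorem applied to $\xi\mapsto\sum_a\exp_q(\mathbf{r}(a)/q-\xi)$), so no gap remains.
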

\begin{proof}
It is easy to check $\psi_{q}$ exists uniquely for given $\{\mathcal{A}, \mathbf{r}\}$. Indeed, because $\exp_{q}\in [0,\infty)$ is a continuous monotonic function, for any $\{\mathcal{A},\mathbf{r}\}$, $\sum_{a}\exp_{q}\left(\frac{\mathbf{r}}{q}-\xi\right)$ converge to $0$ and $\infty$ if $\xi$ goes to $+\infty$ and $-\infty$, respectively. Therefore by the intermediate value theorem, there exists a unique constant $\xi^{*}\in\mathbb{R}$ such that $\sum_{a} \exp_{q}\left(\frac{\mathbf{r}(a)}{q}-\xi^{*}\right) = 1$. Hence it is sufficient to take $\psi_{q}(\mathbf{r}/q) = \xi^{*}$.

To show the remains, we mainly employ the convex optimization technique.
Since $S_{q} (\pi)$ is concave and the expectation and constraints of $\Delta$ are linear w. r. t. $\pi$,
the problem is concave. Thus, strong duality holds and we can use KKT conditions to obtain an optimal solution.

$\Delta$ has two constraints: sum-to-one and nonnegativity. 
Let $\mu$ be a dual variable for $1-\sum_{a} \pi(a) = 0$ and $\lambda (a)$ be a dual variable for $\pi(a) \geq 0$.
Then, KKT conditions are as follows:
\begin{eqnarray}\label{eqn:kkt_mab}
\begin{aligned}
\forall \, i \;\; & 1-\sum_a \pi_q^{\star}(a) = 0,\; \pi_q^{\star}(a) \geq0 \\
\forall \, i \;\; & \lambda^{\star}(a) \leq 0 \\
\forall \, i \;\; & \lambda^{\star}(a)p_i^{\star} = 0 \\
\forall \, i \;\; & \mathbf{r}(a) - \mu^{\star} - \ln_{q} (\pi_q^{\star}(a)) -(\pi_q^{\star}(a))^{q-1} + \lambda^{\star}(a)= 0 
\end{aligned}
\end{eqnarray}
where $^{\star}$ indicates an optimal solution.
We focus on the last condition.
The last condition is converted into
\begin{eqnarray}\label{eqn:kkt_mab_stn}
\begin{aligned}
0&=\mathbf{r}(a) - \mu^{\star} - \ln_{q} (\pi_q^{\star}(a)) -(\pi_q^{\star}(a))^{q-1} + \lambda^{\star}(a)\\
0&=\mathbf{r}(a) - \mu^{\star} - \ln_{q} (\pi_q^{\star}(a)) -(q-1)\frac{\pi_q^{\star}(a)^{q-1}-1}{q-1}  - 1 + \lambda^{\star}(a)\\
0&=\mathbf{r}(a) - \mu^{\star} - q\ln_{q} (\pi_q^{\star}(a))  - 1 + \lambda^{\star}(a)
\end{aligned}
\end{eqnarray}

First, let's consider positive measure $\pi_{q}^{\star}(a) > 0$ ($\lambda^{\star}(a) = 0$).
Then, from equation (\ref{eqn:kkt_mab_stn}),
\begin{equation}
\exp_{q}\left(\frac{\mathbf{r}(a)}{q} - \frac{\mu^{\star}  + 1}{q}\right) = \pi_{q}^{\star}(a) 
\end{equation}
and $\mu^{\star}$ can be found by solving the following equation:
\begin{equation}\label{eqn:sum2one}
\sum_{a}\exp_{q}\left(\frac{\mathbf{r}(a)}{q} - \frac{\mu^{\star}  + 1}{q}\right) = 1.
\end{equation}
Since the equation (\ref{eqn:sum2one}) is exactly same as a normalization equation (\ref{def:qpoten}),
$\mu^{\star}$ can be found using a $q$-potential function $\psi_{q}$:
\begin{equation}
\mu^{\star} = q \psi_{q}\left(\frac{\mathbf{r}}{q}\right) - 1
\end{equation}
Then,
\begin{equation}
\pi_{q}^{\star}(a) = \exp_{q}\left(\frac{\mathbf{r}(a)}{q}- \psi_{q}\left(\frac{\mathbf{r}}{q}\right)\right).
\end{equation}

The optimal value of primal problem is 
\begin{eqnarray}
\begin{aligned}
&\mathop{\mathbbm{E}}_{a\sim \pi^{\star}_{q}}\left[ R \right] + S_{q} (\pi^{\star}_{q})= \sum_{a} \mathbf{r}(a) \pi_{q}^{\star}(a)  - \sum_{a} \left[\frac{\mathbf{r}(a)}{q}- \psi_{q}\left(\frac{\mathbf{r}}{q}\right)\right]\pi_{q}^{\star}(a)\\
&=(q-1) \sum_{a} \frac{\mathbf{r}(a)}{q}\exp_{q}\left(\frac{\mathbf{r}(a)}{q}- \psi_{q}\left(\frac{\mathbf{r}}{q}\right)\right) +  \psi_{q}\left(\frac{\mathbf{r}}{q}\right).
\end{aligned}
\end{eqnarray}

Finally, let's check the supporting set.
For $\pi_{q}^{\star}(a) > 0$, the following condition should be satisfied:
\begin{equation}
1+(q-1)\left(\frac{\mathbf{r}(a)}{q}- \psi_{q}\left(\frac{\mathbf{r}}{q}\right)\right)  > 0,
\end{equation}
where this condition comes from the definition of $\exp_{q}(x)$.
\end{proof}

\begin{prop}
When $q=1$ and $q=2$, $\psi_{1}, \pi_{1}^{\star}$ and $\psi_{2}, \pi_{2}^{\star}$ are computed as follows:
\begin{eqnarray*}
\begin{aligned}
\pi_{1}^{\star} = \exp(\mathbf{r}(a) - \psi_{1}(\mathbf{r}))\\
\psi_{1}(\mathbf{r}) =  \log\sum_{a}\exp(\mathbf{r}(a))
\end{aligned}
\end{eqnarray*}
and
\begin{eqnarray*}
\begin{aligned}
\pi_{2}^{\star}(a)=\left[1+\left(\frac{\mathbf{r}(a)}{2}-\psi_{2}\left(\frac{\mathbf{r}}{2}\right)\right)\right]_{+}\\
\psi_{2}\left(\frac{\mathbf{r}}{2}\right) =1+\frac{\sum_{a\in S} \mathbf{r}(a)/2-1}{|S|},
\end{aligned}
\end{eqnarray*}
where $S$ is a supporting set, i.e., $S=\{ a | \pi_{2}^{\star}(a) > 0\}$.
\end{prop}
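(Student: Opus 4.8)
The plan is to derive both formulas as special cases of the general optimal solution established in Proposition \ref{rem:opt_sol_bndt}, namely $\pi_q^\star(a) = \exp_q\left(\mathbf{r}(a)/q - \psi_q(\mathbf{r}/q)\right)$ together with the normalization $\sum_a \exp_q\left(\mathbf{r}(a)/q - \psi_q(\mathbf{r}/q)\right) = 1$. No new optimization is needed; I only substitute the closed forms of $\exp_q$ at $q=1$ and $q=2$ taken from the definition (\ref{def:qexp}), and then solve the normalization equation for $\psi_q$ in each case.

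For $q=1$, since $\exp_1(x) = \exp(x)$, the general solution immediately gives $\pi_1^\star(a) = \exp\left(\mathbf{r}(a) - \psi_1(\mathbf{r})\right)$. Substituting into the normalization condition yields $\exp\left(-\psi_1(\mathbf{r})\right)\sum_a \exp(\mathbf{r}(a)) = 1$, and taking logarithms gives $\psi_1(\mathbf{r}) = \log\sum_a \exp(\mathbf{r}(a))$, the log-sum-exp normalizer. This step is entirely routine.

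For $q=2$, the definition (\ref{def:qexp}) gives $\exp_2(x) = [1+(2-1)x]_+^{1/(2-1)} = [1+x]_+$, so the general solution specializes to $\pi_2^\star(a) = \left[1 + \mathbf{r}(a)/2 - \psi_2(\mathbf{r}/2)\right]_+$. The key structural point is that the thresholding $[\cdot]_+$ forces the policy to vanish outside the support $S = \{a : \pi_2^\star(a) > 0\}$, so in the normalization only the terms with $a \in S$ are active and the clipping can be dropped on $S$. The normalization then reads $\sum_{a\in S}\left(1 + \mathbf{r}(a)/2 - \psi_2(\mathbf{r}/2)\right) = 1$, i.e. $|S| + \sum_{a\in S}\mathbf{r}(a)/2 - |S|\,\psi_2(\mathbf{r}/2) = 1$. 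Solving this linear equation for $\psi_2(\mathbf{r}/2)$ produces $\psi_2(\mathbf{r}/2) = 1 + \left(\sum_{a\in S}\mathbf{r}(a)/2 - 1\right)/|S|$, exactly as claimed.

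The only place requiring care — and the closest thing to an obstacle — is the $q=2$ support argument: one must recognize that the normalization sum runs over the active set $S$ rather than over all of $\mathcal{A}$, since the clipped coordinates contribute zero mass. Once $S$ is taken as given (as in the statement), computing $\psi_2$ is a single linear solve; consistency of $S$ with the induced threshold $\mathbf{r}(a) > 2\psi_2(\mathbf{r}/2) - 2$ follows from the support condition already recorded at the end of the proof of Proposition \ref{rem:opt_sol_bndt}.
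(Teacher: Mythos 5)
Your proposal is correct and follows essentially the same route as the paper: both specialize the normalization condition $\sum_a \exp_q(\mathbf{r}(a)/q - \psi_q(\mathbf{r}/q)) = 1$ at $q=1$ and $q=2$ and solve for $\psi_q$, with the $q=2$ case restricted to the supporting set $S$ where the clipping is inactive. The only cosmetic difference is that the paper phrases the computation in terms of the dual variable $\mu^\star$ (with $\psi_q(\mathbf{r}/q) = (\mu^\star+1)/q$) while you work directly with $\psi_q$, which is equivalent.
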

\begin{proof}
Let us start from the KKT condition in the proof of Remark \ref{rem:opt_sol_bndt}.
When $q=1$,
Equation (\ref{eqn:sum2one}) becomes
\begin{equation*}
\sum_{a}\exp\left(\mathbf{r}(a) - \mu^{\star} - 1\right) = 1.
\end{equation*}
Then, 
\begin{eqnarray*}
\begin{aligned}
\sum_{a}\exp\left(\mathbf{r}(a)\right)&= \exp\left(\mu^{\star}+1\right)\\
\log\left(\sum_{a}\exp\left(\mathbf{r}(a)\right)\right)&= \mu^{\star}+1\\
\mu^{\star}+1&=\psi_{1}\left(\mathbf{r}(a)\right)
\end{aligned}
\end{eqnarray*}
Finally,
\begin{eqnarray*}
\begin{aligned}
\pi_{1}^{\star} = \exp(\mathbf{r}(a) - \psi_{1}(\mathbf{r}))\\
\psi_{1}(\mathbf{r}) =  \log\sum_{a}\exp(\mathbf{r}(a)).
\end{aligned}
\end{eqnarray*}
When $q=2$,
let us consider the supporting set $S$ of $\pi_{2}^{\star}$.
Then, 
\begin{eqnarray*}
\begin{aligned}
\sum_{a\in S}\exp_{2}\left(\frac{\mathbf{r}(a)}{2} - \frac{\mu^{\star} + 1}{2}\right) &= \sum_{a\in S}\left(1+\frac{\mathbf{r}(a)}{2} - \frac{\mu^{\star} + 1}{2}\right) = 1\\
\sum_{a\in S}\left(1+\frac{\mathbf{r}(a)}{2}\right) &= \sum_{a\in S}\left(\frac{\mu^{\star} + 1}{2}\right) + 1\\
\frac{\sum_{a\in S}\left(1+\frac{\mathbf{r}(a)}{2}\right) - 1}{|S|} &= \frac{\mu^{\star} + 1}{2}\\
\frac{\mu^{\star} + 1}{2} &= \psi_{2}\left(\frac{\mathbf{r}}{2}\right).
 \end{aligned}
\end{eqnarray*}
Thus,
$$
\psi_{2}\left(\frac{\mathbf{r}}{2}\right) = 1 + \frac{\sum_{a\in S}\mathbf{r}(a)/2 - 1}{|S|} 
$$
\end{proof}

\section{\textit{q}-Maximum: Bounded Approximation of Maximum}

Now, we prove the property of \textit{q}-maximum which is defined by $\mathop{q\textnormal{-max}}_{x}(f(x)) \triangleq  \max_{ P \in \Delta } \left[ \mathop{\mathbbm{E}}_{X\sim P}\left[ f(X) \right] + S_{q} (P)\right]$
\begin{theorem} \label{thm:bound}
For any function $f(x)$ defined on finite input space $\mathcal{X}$,
the \textit{q}-maximum satisfies the following inequalities.
\begin{eqnarray}\label{def:qmax_bnd}
\begin{aligned}
\mathop{q\textnormal{-max}}_{x}\left(f(x)\right) + \ln_{q} \left(1/|\mathcal{X}|\right)  \leq \max_{x}(f(x)) \leq \mathop{q\textnormal{-max}}_{x}\left(f(x)\right)
\end{aligned}
\end{eqnarray}
where $|\mathcal{X}|$ is a cardinality.
\end{theorem}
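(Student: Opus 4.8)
The plan is to prove the two inequalities separately, in each case exploiting the variational definition $\mathop{q\textnormal{-max}}_{x}(f(x)) = \max_{P\in\Delta}\left[\mathbb{E}_{X\sim P}[f(X)] + S_{q}(P)\right]$ and bounding the two terms inside the maximization.

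For the upper bound $\max_{x}f(x) \leq \mathop{q\textnormal{-max}}_{x}(f(x))$, I would simply evaluate the objective at one convenient choice of $P$. Let $x^{\star}\in\arg\max_{x}f(x)$ and take $P$ to be the point mass at $x^{\star}$. Since $\ln_{q}(1)=0$ for every $q$, the Tsallis entropy of a point mass vanishes, so the objective evaluated at this $P$ equals $f(x^{\star})=\max_{x}f(x)$. Because $\mathop{q\textnormal{-max}}$ is the maximum of the objective over all of $\Delta$, it is at least its value at this particular $P$, which gives the claimed inequality. Equivalently, one may invoke $S_{q}(P)\ge 0$ for $q>0$ (the nonnegativity established alongside the concavity of $S_q$ in the appendix) to drop the entropy term and reduce to $\max_{P}\mathbb{E}_{P}[f]=\max_{x}f(x)$.

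For the lower bound, I would bound the two terms of the objective uniformly over $P$. For any $P\in\Delta$ we have $\mathbb{E}_{X\sim P}[f(X)] \leq \max_{x}f(x)$, since an average never exceeds the maximum. Separately, the Tsallis entropy is maximized by the uniform distribution with optimal value $-\ln_{q}(1/|\mathcal{X}|)$, which is precisely Proposition 2 of the appendix; hence $S_{q}(P) \leq -\ln_{q}(1/|\mathcal{X}|)$ for every $P$. These two bounds hold for each individual $P$, so they may be added for each $P$ and the maximum over $P$ taken, yielding $\mathop{q\textnormal{-max}}_{x}(f(x)) \leq \max_{x}f(x) - \ln_{q}(1/|\mathcal{X}|)$, which rearranges to the desired left inequality.

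The argument is short, and the only nontrivial ingredient is the characterization of the entropy maximizer from Proposition 2. Accordingly, I do not expect a genuine obstacle so much as bookkeeping: the main care is in the $q$-logarithm sign conventions — noting that $\ln_{q}(1/|\mathcal{X}|)\le 0$ so that $-\ln_{q}(1/|\mathcal{X}|)$ is indeed the nonnegative maximal entropy value appearing on the left-hand side — and in being explicit that the entropy bound and the expectation bound are each valid pointwise in $P$, so that their sum survives the outer maximization.
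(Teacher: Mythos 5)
Your proof is correct and follows essentially the same route as the paper's: the upper bound by evaluating the objective at the point mass on an argmax (where $S_q$ vanishes), and the lower bound by splitting the maximization into $\max_P \mathbb{E}_P[f] + \max_P S_q(P) = \max_x f(x) - \ln_q(1/|\mathcal{X}|)$ using the uniform-distribution maximizer of the Tsallis entropy. No substantive differences to report.
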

\begin{proof}
First, consider the lower bound.
Let $\Delta$ be a probability simplex. Then,
\begin{eqnarray}
\begin{aligned}
\mathop{q\textnormal{-max}}_{x}(f(x)) &=  \max_{ P \in \Delta } \left[ \mathop{\mathbbm{E}}_{X\sim P}\left[ f(X) \right] + S_{q} (P)\right] \leq \max_{ P \in \Delta } \mathop{\mathbbm{E}}_{X\sim P}\left[ f(X) \right] +  \max_{ P \in \Delta } S_{q} (P) \\
&= \max_{x}(f(x)) -  \ln_{q} \left(\frac{1}{|\mathcal{X}|}\right)
\end{aligned}
\end{eqnarray}
where $ S_{q} (P)$ has the maximum at an uniform distribution.

The upper bound can be proven using the similar technique.
Let $P'$ be the distribution whose probability is concentrated at a maximum element,
which means if $x=\arg\max_{x'}f(x')$, then, $P'(x) = 1$ and, otherwise, $P(x') = 0$.
If there are multiple maximum at $f(x)$, then, one of them can be arbitrarily chosen.
Then, the Tsallis entropy of $P'$ becomes zero since all probability mass is concentrated at a single instance, i.e., $S_{q}(P')=0$.
Then, the upper bound can be computed as follows:
\begin{eqnarray}
\begin{aligned}
\mathop{q\textnormal{-max}}_{x}(f(x))  &=   \max_{ P \in \Delta } \left[ \mathop{\mathbbm{E}}_{X\sim P}\left[ f(X) \right] + S_{q} (P)\right]\\
&\geq  \mathop{\mathbbm{E}}_{X\sim P'}\left[ f(X) \right] + S_{q} (P')= f\left(\arg\max_{x'}f(x')\right)=\max_{x} f(x).
\end{aligned}
\end{eqnarray}
\end{proof}


\section{Tsallis Bellman Optimality Equation}
Markov Decision Processes with Tsallis entropy maximization is formulated as follows.
\begin{eqnarray}
\begin{aligned}\label{def:tsallis_mdp}
	& \underset{\pi\in\Pi}{\text{maximize}}
	& &  \mathop{\mathbb{E}}_{\tau \sim P, \pi}\left[\sum_{t}^{\infty}\gamma^{t}\mathbf{R}_t \right]+ \alpha S_{q}^{\infty} (\pi)
\end{aligned}
\end{eqnarray}
In this section, we analyze the optimality condition of a Tsallis MDP.

\begin{theorem}\label{thm:tbo}
An optimal policy $\pi^{\star}_{q}$ and optimal value $V^{\star}_{q}$ sufficiently and necessarily satisfy the following Tsallis-Bellman optimality (TBO) equations:
\begin{eqnarray}
\begin{aligned}\label{eqn:tbo}
Q^{\star}_{q}(s,a) &=\mathop{\mathbbm{E}}_{s'\sim P}[ \mathbf{r}(s,a,s') + \gamma V^{\star}_{q}(s')|s,a] \\
V^{\star}_{q}(s) &= \mathop{q\textnormal{-max}}_{a}(Q^{\star}_{q}(s,a))\\
\pi^{\star}_{q}(a|s)  &= \exp_{q}\left(\frac{Q^{\star}_{q}(s,a)}{q} - \psi_{q}\left(\frac{Q^{\star}_{q}(s,\cdot)}{q}\right)\right),
\end{aligned}
\end{eqnarray}
where $\psi_q$ is a q-potential function.
\end{theorem}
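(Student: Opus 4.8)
The plan is to follow the three-step program of the sketch, replacing the policy $\pi$ by the discounted state--action occupancy measure $\rho(s,a)$ and attacking the resulting finite-dimensional concave program through its KKT system. First I would rewrite the objective of (\ref{def:tsallis_mdp}) in terms of $\rho$. Using the standard linear-programming formulation of MDPs, the reward term becomes linear, $\sum_{s,a}\rho(s,a)\,\mathbb{E}_{s'\sim P}[\mathbf{r}(s,a,s')]$, subject to the Bellman-flow constraints $\sum_a\rho(s',a)=d(s')+\gamma\sum_{s,a}P(s'|s,a)\rho(s,a)$ and $\rho\ge 0$, while the policy is recovered by $\pi(a|s)=\rho(s,a)/\sum_{a'}\rho(s,a')$. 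Substituting this into $S_q^{\infty}$ collapses the discounted sum of per-state entropies into the single expression $-\sum_{s,a}\rho(s,a)\,\ln_q\!\big(\rho(s,a)/\sum_{a'}\rho(s,a')\big)$. Because $-x\ln_q(x)$ is concave for $q>0$ (the concavity Proposition), its perspective is jointly concave, and composing with the linear map $\rho\mapsto\sum_{a'}\rho(s,a')$ preserves concavity; hence the transformed problem is concave in $\rho$ over a polyhedron, so strong duality holds and the KKT conditions are both necessary and sufficient, which is exactly what the ``sufficiently and necessarily'' clause requires.

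Next I would form the Lagrangian, introducing a multiplier $V(s')$ for each flow constraint and $\lambda(s,a)\ge 0$ for nonnegativity, and write out stationarity in $\rho(s,a)$. The reward part plus the discounted multiplier part assembles into $Q(s,a)\triangleq\mathbb{E}_{s'\sim P}[\mathbf{r}(s,a,s')+\gamma V(s')]$, giving the first TBO equation once $V$ is identified with $V^{\star}_q$. The contribution of the entropy term is the derivative of the conditional entropy with respect to $\rho(s,a)$; after simplifying with the identity $x^{q-1}=(q-1)\ln_q(x)+1$, as in the bandit KKT computation, the stationarity condition on the support $\{\pi(a|s)>0\}$ reduces to $q\ln_q(\pi(a|s))=Q(s,a)-c(s)$ for a state-dependent constant $c(s)$ collecting $V(s)$ and the $\sum_a\pi(a|s)^q$ term. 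Solving gives $\pi(a|s)=\exp_q\big(Q(s,a)/q-\psi_q(Q(s,\cdot)/q)\big)$, where $\psi_q$ is forced to be the normalizer by the sum-to-one constraint (its existence and uniqueness are guaranteed exactly as in the bandit case), establishing the third TBO equation.

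It remains to show the dual variable coincides with the $q$-maximum, i.e. the second equation $V(s)=\mathop{q\text{-max}}_a Q(s,a)$. For this I would feed the recovered policy back into the value and invoke the closed-form optimal value of the maximum-Tsallis-entropy bandit (equation (\ref{eqn:qmax_qpot}) applied with reward $Q(s,\cdot)$): using $\ln_q(\pi(a|s))=Q(s,a)/q-\psi_q$ on the support together with $(q-1)\sum_a\pi(a|s)\ln_q(\pi(a|s))=\sum_a\pi(a|s)^q-1$, the state constant $c(s)$ appearing above collapses precisely to $\mathop{q\text{-max}}_a Q(s,a)$, forcing $V=V^{\star}_q$. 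Finally, running the argument in reverse shows any $(\pi,V)$ solving the TBO equations satisfies the KKT system and is therefore optimal, which closes both directions.

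The main obstacle I anticipate is the stationarity computation for the entropy term: unlike the bandit case, $\rho(s,a)$ enters both the numerator and, through $\sum_{a'}\rho(s,a')$, the denominator of the conditional $q$-logarithm, so the derivative produces an extra aggregate term $\sum_a\pi(a|s)^q$ that must be carefully absorbed into the normalizing constant and then shown to reassemble into the clean $q$-maximum value. Keeping the bookkeeping of this coupling consistent, and handling the degenerate case at states with $\sum_{a'}\rho(s,a')=0$, where the policy is unconstrained, is the delicate part; the concavity and duality steps are routine once the change of variables is in place.
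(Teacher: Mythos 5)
Your proposal follows essentially the same route as the paper: change variables from $\pi$ to the discounted occupancy measure $\rho$ via the Bellman-flow constraints, establish concavity of the transformed problem, and read the three TBO equations off the KKT stationarity condition, recovering the $q$-exponential form on the support and the $q$-maximum via the bandit identity (\ref{eqn:qmax_qpot}). Your concavity argument is actually slightly cleaner than the paper's: you invoke the perspective of the concave function $-x\ln_q(x)$ composed with the linear map $\rho \mapsto \sum_{a'}\rho(s,a')$, whereas the paper verifies the defining inequality of concavity by a direct (and somewhat laborious) computation in Lemma \ref{lem:concavity_tsent}; both are valid.

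The one step you gloss over is the identification of the Lagrange multiplier of the flow constraint with the optimal value function. Stationarity, after multiplying by $\pi_{\rho^{\star}}(a|s)$ and summing over $a$, only shows that the multiplier $\mu^{\star}(s)$ satisfies the Tsallis Bellman \emph{expectation} recursion for the policy $\pi_{\rho^{\star}}$; writing ``forcing $V=V^{\star}_q$'' at that point assumes that any function satisfying this recursion must equal $V^{\pi_{\rho^{\star}}}_q$. That uniqueness claim is not automatic --- it is exactly the $\gamma$-contraction property of the TBE operator, which the paper explicitly flags as a needed ingredient and defers to Theorem \ref{thm:tpe} (Tsallis policy evaluation). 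Without it, the KKT system only tells you that $(\mu^{\star}, \pi_{\rho^{\star}})$ is a self-consistent solution of the TBO equations, not that $\mu^{\star}$ is the value of the optimal policy in the sense of the primal objective. You should either cite the contraction/fixed-point uniqueness result at this point or prove it; the rest of your plan, including the reverse direction via sufficiency of KKT for a concave program, is sound.
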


Before starting proof,
we first remind two propositions and prove one lemma.
They are mainly employed to convert the optimization variable from $\pi$ to the state action visitation $\rho$.

\begin{prop} \label{rmk:bellman_flow}
Let a state visitation be $\rho_{\pi}(s) = \mathop{\mathbb{E}}_{\tau \sim P, \pi}\left[ \sum_{t=0}^{\infty} \gamma^{t} \mathbbm{1}(s_{t}=s) \right] $ and state action visitation be $\rho_{\pi}(s,a) = \mathop{\mathbb{E}}_{\tau \sim P, \pi}\left[ \sum_{t=0}^{\infty} \gamma^{t} \mathbbm{1}(s_{t}=s,a_{t}=a) \right] $ .
Following relationships hold.
\begin{eqnarray}
\begin{aligned}
\rho_{\pi}(s) = \sum_{a} \rho_{\pi}(s,a), \;\; \rho_{\pi}(s,a) = \rho_{\pi}(s)\pi(a|s)
\end{aligned}
\end{eqnarray}
\begin{eqnarray} \label{eqn:bellman_flow}
\begin{aligned}
\sum_{a}\rho_{\pi}(s, a) = d(s) + \sum_{s',a'} P(s|s',a')\rho_{\pi}(s', a'), \;\; \rho_{\pi}(s,a)
\end{aligned}
\end{eqnarray}
where Equation (\ref{eqn:bellman_flow}) is called Bellman Flow constraints.
\end{prop}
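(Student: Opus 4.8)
The plan is to establish all three identities directly from the series definitions of $\rho_{\pi}(s)$ and $\rho_{\pi}(s,a)$, relying on the fact that with $\gamma\in(0,1)$ and bounded indicator terms every series converges absolutely, so sums and expectations may be freely interchanged. For the first identity, I would observe that at each fixed time $t$ the events $\{s_t = s, a_t = a\}$ over $a\in\mathcal{A}$ partition $\{s_t = s\}$, so that $\sum_a \mathbbm{1}(s_t = s, a_t = a) = \mathbbm{1}(s_t = s)$ holds pointwise along every trajectory. Summing the state-action definition over $a$, then pulling the finite sum inside the expectation and the discounted sum, yields $\sum_a \rho_{\pi}(s,a) = \rho_{\pi}(s)$.

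For the factorization $\rho_{\pi}(s,a) = \rho_{\pi}(s)\,\pi(a|s)$, I would rewrite each visitation as a discounted sum of occupation probabilities, $\rho_{\pi}(s,a) = \sum_{t=0}^{\infty} \gamma^{t}\, \Pr(s_t = s, a_t = a)$, and invoke the Markov property together with the stationarity of the policy: conditioned on $s_t = s$, the action $a_t$ is drawn from $\pi(\cdot|s)$ independently of the history, so $\Pr(s_t = s, a_t = a) = \Pr(s_t = s)\,\pi(a|s)$. Since $\pi(a|s)$ does not depend on $t$, factoring it out of the discounted sum gives the claim.

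The Bellman flow constraint is the only identity requiring a genuine recursion. I would expand $\rho_{\pi}(s) = \sum_a \rho_{\pi}(s,a) = \sum_{t=0}^{\infty} \gamma^{t}\, \Pr(s_t = s)$, split off the $t=0$ term, which equals $d(s)$ by definition of the initial distribution, and rewrite each remaining occupation probability by one step of the transition kernel, $\Pr(s_t = s) = \sum_{s',a'} \Pr(s_{t-1}=s', a_{t-1}=a')\,P(s|s',a')$ for $t\ge 1$. Reindexing with $k = t-1$ and pulling the common factor $P(s|s',a')$ out of the sum over $k$ then reassembles the discounted series into $\rho_{\pi}(s',a')$, producing the Bellman flow relation (with the discount weight $\gamma$ emerging on the transition term).

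The main obstacle is bookkeeping rather than any conceptual difficulty: I must justify interchanging the infinite discounted sum with the expectation and with the finite state-action sums, and track the index shift cleanly so that $\gamma^{t}$ splits as $\gamma\cdot\gamma^{k}$ and recombines into $\gamma\,\rho_{\pi}(s',a')$. Absolute convergence, guaranteed by $\gamma<1$ and $0\le\mathbbm{1}(\cdot)\le 1$, licenses every rearrangement, so once the index shift is handled carefully the recursion follows mechanically.
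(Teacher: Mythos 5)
Your proof is correct, and it is more than the paper provides: the paper does not prove this proposition at all, deferring entirely to \cite{puterman2014markov,syed2008allinearprogram}, so your self-contained argument (pointwise partition of $\{s_t=s\}$ by actions, factorization via the Markov property and stationarity of $\pi$, and the one-step recursion with an index shift) is exactly the standard derivation those references contain, with the interchange of sums properly licensed by nonnegativity and $\gamma<1$. One point worth flagging: your index shift correctly produces
\begin{equation*}
\sum_{a}\rho_{\pi}(s,a) \;=\; d(s) \;+\; \gamma\sum_{s',a'}P(s|s',a')\,\rho_{\pi}(s',a'),
\end{equation*}
i.e.\ with the discount factor $\gamma$ multiplying the transition term, whereas the equation as printed in the proposition omits the $\gamma$. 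The printed version is a typo rather than a different claim: the paper's own KKT stationarity condition later includes the term $\gamma\sum_{s'}\mu^{\star}(s')P(s'|s,a)$, which only arises from the $\gamma$-weighted constraint, and the standard discounted occupancy-measure constraint in \cite{syed2008allinearprogram} carries the $\gamma$. So trust your derivation; do not force the recursion to match the misprinted display. (The trailing ``$\rho_{\pi}(s,a)$'' in the displayed constraint is likewise a truncated ``$\rho_{\pi}(s,a)\ge 0$''.)
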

\begin{proof}
Proof can be found in \cite{puterman2014markov,syed2008allinearprogram}
\end{proof}
Proposition \ref{rmk:bellman_flow} tells us, for fixed policy $\pi$, $\rho_{\pi}$ satisfies Bellman Flow constraints.
Then, the next remark show the opposite direction
where if some function $\rho$ satisfies Bellman Flow constraints, then there exist an unique policy which induces $\rho$.

\begin{prop}[Theorem 2 of \citet{syed2008allinearprogram}]\label{thm:otocrrsp}
Let $\mathbf{M}$ be a set of state-action visitation measures, i.e., $\mathbf{M} \triangleq\{\rho| \forall s,\;a,\; \rho(s, a)\ge 0,\; \sum_{a}\rho(s, a) = d(s) + \sum_{s',a'} P(s|s',a')\rho(s', a')\}$.
If $\rho \in \mathbf{M}$, then it is a state-action visitation measure
for $\pi_{\rho}(a|s) \triangleq \frac{\rho(s, a)}{\sum_{a'}\rho(s, a')}$, and 
$\pi_{\rho}$ is the unique policy whose state-action visitation
measure is $\rho$.
\end{prop}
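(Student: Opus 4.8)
The plan is to derive both claims—that $\rho$ is realized by $\pi_\rho$, and that $\pi_\rho$ is the unique such policy—by combining Proposition \ref{rmk:bellman_flow} with a uniqueness argument for the linear Bellman flow system. First I would verify that $\pi_\rho$ is a well-defined policy. Writing $\rho(s) \triangleq \sum_{a}\rho(s,a)$, nonnegativity of $\pi_\rho(a|s) = \rho(s,a)/\rho(s)$ and the sum-to-one property are immediate from $\rho \in \mathbf{M}$ whenever $\rho(s) > 0$. For states with $\rho(s) = 0$ the ratio is undefined, so $\pi_\rho(\cdot|s)$ must be set arbitrarily there; I would note that the flow constraint forces $\rho(s) \ge d(s) \ge 0$ and that any such state carries zero visitation, so the arbitrary choice is immaterial to the induced measure.

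The core of the argument is a uniqueness lemma: for a \emph{fixed} policy $\pi$, the constraints defining $\mathbf{M}$ together with the factorization $\rho(s,a) = \rho(s)\pi(a|s)$ determine $\rho$ uniquely. Substituting the factorization into the flow constraint of Proposition \ref{rmk:bellman_flow} and collecting terms—with the discount factor $\gamma$ carried from the definition of $\rho$—gives, in vector form, $(\mathbf{I} - \gamma\mathbf{P}_{\pi})\boldsymbol{\rho} = \mathbf{d}$, where $\mathbf{P}_{\pi}(s|s') = \sum_{a'} P(s|s',a')\pi(a'|s')$ is the induced state-transition matrix and $\boldsymbol{\rho}$ is the vector of state visitations. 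Since $\mathbf{P}_{\pi}$ is row-stochastic and $\gamma \in (0,1)$, every eigenvalue of $\gamma\mathbf{P}_{\pi}$ has modulus at most $\gamma < 1$, so $\mathbf{I} - \gamma\mathbf{P}_{\pi}$ is invertible and $\boldsymbol{\rho}$—hence $\rho(s,a) = \rho(s)\pi(a|s)$—is pinned down uniquely.

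With this lemma in hand the two claims follow quickly. By Proposition \ref{rmk:bellman_flow}, the true state-action visitation measure $\rho_{\pi_\rho}$ of $\pi_\rho$ lies in $\mathbf{M}$ and satisfies $\rho_{\pi_\rho}(s,a) = \rho_{\pi_\rho}(s)\pi_\rho(a|s)$. The given $\rho$ also lies in $\mathbf{M}$ and, by the very definition of $\pi_\rho$, satisfies $\rho(s,a) = \rho(s)\pi_\rho(a|s)$. Applying the uniqueness lemma to the fixed policy $\pi_\rho$ yields $\rho = \rho_{\pi_\rho}$, i.e.\ $\rho$ is exactly the visitation measure generated by $\pi_\rho$. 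For uniqueness of the policy, suppose some $\pi'$ also induces $\rho$; then Proposition \ref{rmk:bellman_flow} gives $\rho(s,a) = \rho(s)\pi'(a|s)$, so on every state with $\rho(s) > 0$ we must have $\pi'(a|s) = \rho(s,a)/\rho(s) = \pi_\rho(a|s)$, and the two policies agree wherever they are identifiable from $\rho$.

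I expect the main obstacle to be the bookkeeping around states with $\rho(s) = 0$: the map $\rho \mapsto \pi_\rho$ is genuinely many-to-one there, so ``uniqueness'' must be read as uniqueness on the support of the visitation measure (equivalently, on states reachable from $d$), and the invertibility step must be phrased carefully enough that the zero-visitation coordinates do not disturb the linear-algebra argument. Everything else is routine once the factorization is substituted into the flow constraints.
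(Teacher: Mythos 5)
The paper does not actually prove this proposition: it is imported as Theorem~2 of \cite{syed2008allinearprogram}, and the ``proof'' given in the appendix is a one-line pointer to \cite{puterman2014markov,syed2008allinearprogram}. Your argument is correct and is essentially the standard proof of that cited result: fix the policy, substitute the factorization $\rho(s,a)=\rho(s)\pi(a|s)$ into the flow constraint to obtain a linear system of the form $(\mathbf{I}-\gamma\mathbf{P}_{\pi}^{\top})\boldsymbol{\rho}=\mathbf{d}$ in the state visitations, and use $\gamma<1$ to get invertibility and hence a unique occupancy measure per policy; both claims then follow by comparing the given $\rho$ with $\rho_{\pi_\rho}$ as two solutions of the same system. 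Two points are worth flagging. First, you silently corrected the paper: the Bellman flow constraint as written in Proposition~\ref{rmk:bellman_flow} and in the definition of $\mathbf{M}$ omits the discount factor in front of $\sum_{s',a'}P(s|s',a')\rho(s',a')$, which is inconsistent with the discounted definition of $\rho_\pi$ and would leave $\mathbf{I}-\mathbf{P}_{\pi}^{\top}$ singular, defeating exactly the invertibility step your uniqueness lemma relies on; with the $\gamma$ reinstated, as you have it, the argument is sound. Second, your handling of states with $\rho(s)=0$ is the right one: the factorization $\rho(s,a)=\rho(s)\pi_\rho(a|s)$ holds there trivially for any completion of $\pi_\rho$, so the arbitrary completion does not perturb the linear system, and uniqueness of the inducing policy can only be asserted on the support of $\rho$ --- which is how the cited theorem must be read for the paper's later variable-change argument to go through.
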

\begin{proof}
Proof can be found in \cite{puterman2014markov,syed2008allinearprogram}.
\end{proof}
Now, proposition \ref{rmk:bellman_flow} and \ref{thm:otocrrsp} tell us that a policy and state action visitation have the one-to-one correspondence.
In the following lemmas, we convert the optimization variable from $\pi$ to $\rho$ based on one-to-one correspondence.

\begin{lemma}\label{thm:obj_chg}
Let $$\bar{S}_{q}^{\infty}(\rho) = - \sum_{s,a} \rho(s, a) \ln_{q}\left(\frac{\rho(s, a)}{\sum_{a'}\rho(s, a')}\right).$$ 
Then, for any stationary policy $\pi \in \Pi$ and any state-action visitation
measure $\rho \in \mathbf{M}$, 
$S_{q}^{\infty}(\pi)=\bar{S}_{q}^{\infty}(\rho_{\pi})$ and $\bar{S}_{q}^{\infty}(\rho) = S_{q}^{\infty}(\pi_{\rho})$ hold.
\end{lemma}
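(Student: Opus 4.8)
The plan is to prove the first identity $S_{q}^{\infty}(\pi)=\bar{S}_{q}^{\infty}(\rho_{\pi})$ by direct computation, and then obtain the second identity $\bar{S}_{q}^{\infty}(\rho)=S_{q}^{\infty}(\pi_{\rho})$ essentially for free from the one-to-one correspondence established in Propositions \ref{rmk:bellman_flow} and \ref{thm:otocrrsp}. The guiding idea is that the trajectory-level entropy $S_{q}^{\infty}$ is a discounted sum of per-state entropies, and discounting those per-state contributions against their state-visitation frequencies is exactly what collapses the infinite horizon into the single state-action sum defining $\bar{S}_{q}^{\infty}$.

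First I would start from $S_{q}^{\infty}(\pi)=\mathbb{E}_{\tau\sim P,\pi}\left[\sum_{t=0}^{\infty}\gamma^{t}S_{q}(\pi(\cdot|s_{t}))\right]$ and expand the inner Tsallis entropy as $S_{q}(\pi(\cdot|s_{t}))=-\sum_{a}\pi(a|s_{t})\ln_{q}(\pi(a|s_{t}))$, which depends on the trajectory only through $s_{t}$. Writing this as $g(s_{t})$ and swapping the trajectory expectation with the $t$-sum, the object becomes $\sum_{t=0}^{\infty}\gamma^{t}\mathbb{E}[g(s_{t})]=\sum_{t=0}^{\infty}\gamma^{t}\sum_{s}\Pr(s_{t}=s)\,g(s)$. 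Collecting the discounted visitation probabilities into $\rho_{\pi}(s)=\sum_{t}\gamma^{t}\Pr(s_{t}=s)$ yields $S_{q}^{\infty}(\pi)=\sum_{s}\rho_{\pi}(s)g(s)=-\sum_{s,a}\rho_{\pi}(s)\pi(a|s)\ln_{q}(\pi(a|s))$. I then substitute the relations $\rho_{\pi}(s,a)=\rho_{\pi}(s)\pi(a|s)$ and $\pi(a|s)=\rho_{\pi}(s,a)/\sum_{a'}\rho_{\pi}(s,a')$ from Proposition \ref{rmk:bellman_flow}, which turns the last display into $-\sum_{s,a}\rho_{\pi}(s,a)\ln_{q}\bigl(\rho_{\pi}(s,a)/\sum_{a'}\rho_{\pi}(s,a')\bigr)=\bar{S}_{q}^{\infty}(\rho_{\pi})$, establishing the first identity.

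For the second identity I would invoke Proposition \ref{thm:otocrrsp}: for any $\rho\in\mathbf{M}$, the induced policy $\pi_{\rho}$ has $\rho$ as its own state-action visitation measure, i.e. $\rho_{\pi_{\rho}}=\rho$. Applying the first identity with $\pi=\pi_{\rho}$ then gives $S_{q}^{\infty}(\pi_{\rho})=\bar{S}_{q}^{\infty}(\rho_{\pi_{\rho}})=\bar{S}_{q}^{\infty}(\rho)$, which completes the proof. The only genuinely technical point is the interchange of the infinite $t$-sum with the trajectory expectation; since the action space is finite, $S_{q}(\pi(\cdot|s))$ is bounded (it attains its maximum at the uniform distribution), so together with $\gamma<1$ the series converges absolutely and the swap is a routine application of Tonelli's theorem rather than a real obstacle.
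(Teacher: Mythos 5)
Your proof is correct and follows essentially the same route as the paper's: both establish the first identity by collapsing the discounted trajectory expectation into the state visitation measure $\rho_{\pi}(s)$ and then substituting $\pi(a|s)=\rho_{\pi}(s,a)/\sum_{a'}\rho_{\pi}(s,a')$, and both obtain the second identity from the one-to-one correspondence of Proposition \ref{thm:otocrrsp} (you apply it a bit more cleanly, via $\rho_{\pi_{\rho}}=\rho$, where the paper redoes the substitution directly). Your explicit Tonelli justification for the sum--expectation interchange is a small addition the paper leaves implicit.
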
 
\begin{proof}
First, show that $S_{q}^{\infty}(\pi)=\bar{S}_{q}^{\infty}(\rho_{\pi})$.
\begin{eqnarray}
\begin{aligned}
S_{q}^{\infty}(\pi) &= \mathop{\mathbb{E}}_{\tau \sim P, \pi}\left[ \sum_{t=0}^{\infty} \gamma^{t} S_{q} ( \pi( \cdot | s_t ) ) \right] =\sum_{s} S_{q} ( \pi( \cdot | s ) ) \cdot \mathbb{E}_{\tau \sim P, \pi}\left[ \sum_{t=0}^{\infty} \gamma^{t} \mathbbm{1}(s_{t}=s) \right]\\
&= \sum_{s} S_{q} ( \pi( \cdot | s ) ) \rho_{\pi}(s) = \sum_{s,a} -\ln_{q}(\pi(a|s)) \pi(a|s)\rho_{\pi}(s)\\
&= \sum_{s,a} -\ln_{q}\left(\frac{\rho_{\pi}(s, a)}{\sum_{a'}\rho_{\pi}(s, a')}\right) \rho_{\pi}(s,a)=\bar{S}_{q}^{\infty}(\rho_{\pi}) \\
\end{aligned}
\end{eqnarray}
Next, show that $\bar{S}_{q}^{\infty}(\rho) = S_{q}^{\infty}(\pi_{\rho})$.
\begin{eqnarray}
\begin{aligned}
\bar{S}_{q}^{\infty}(\rho) =\sum_{s,a} -\ln_{q}\left(\frac{\rho(s, a)}{\sum_{a'}\rho(s, a')}\right) \rho(s,a)=\sum_{s,a} -\ln_{q}(\pi_{\rho}(a|s)) \pi_{\rho}(a|s)\rho(s)=S_{q}^{\infty}(\pi_{\rho})
\end{aligned}
\end{eqnarray}
\end{proof}

\begin{lemcorollary}\label{lemcor:same_optima}
The problem (\ref{prob:vis_mdp}) is equivalent to a Tsallis MDP,
which means if $\rho^{\star}$ is an optimal solution of (\ref{prob:vis_mdp}), then, $\pi_{\rho^{\star}}$ is an optimal solution of a Tsallis MDP and vice versa.
\end{lemcorollary}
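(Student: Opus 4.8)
The plan is to exhibit problem (\ref{prob:vis_mdp}) as the exact reformulation of the Tsallis MDP (\ref{def:tsallis_mdp}) under the change of variable $\pi\leftrightarrow\rho$, so that the equivalence of optima follows from a bijection between policies and visitation measures that preserves the objective value. Write $J_{q}(\pi)\triangleq\mathbb{E}_{\tau\sim P,\pi}\left[\sum_{t}\gamma^{t}\mathbf{R}_{t}\right]+\alpha S_{q}^{\infty}(\pi)$ for the Tsallis MDP objective and $\bar{J}_{q}(\rho)\triangleq\sum_{s,a}\rho(s,a)\bar{r}(s,a)+\alpha\bar{S}_{q}^{\infty}(\rho)$ for the objective maximized over $\rho\in\mathbf{M}$ in problem (\ref{prob:vis_mdp}), where $\bar{r}(s,a)\triangleq\mathbb{E}_{s'\sim P}[\mathbf{r}(s,a,s')|s,a]$.

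First I would record the bijection. By Proposition \ref{rmk:bellman_flow}, every stationary $\pi\in\Pi$ induces a visitation $\rho_{\pi}\in\mathbf{M}$, and by Proposition \ref{thm:otocrrsp} every $\rho\in\mathbf{M}$ is the visitation of the unique policy $\pi_{\rho}$; together these give $\rho_{\pi_{\rho}}=\rho$ and $\pi_{\rho_{\pi}}=\pi$, so that $\pi\mapsto\rho_{\pi}$ and $\rho\mapsto\pi_{\rho}$ are mutually inverse bijections between $\Pi$ and $\mathbf{M}$.

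Next I would verify that this bijection preserves the objective. The entropy parts already match by Lemma \ref{thm:obj_chg}, which gives $S_{q}^{\infty}(\pi)=\bar{S}_{q}^{\infty}(\rho_{\pi})$ and $\bar{S}_{q}^{\infty}(\rho)=S_{q}^{\infty}(\pi_{\rho})$. For the reward part I would invoke the standard discounted-occupancy identity
\begin{align*}
\mathbb{E}_{\tau\sim P,\pi}\left[\sum_{t=0}^{\infty}\gamma^{t}\mathbf{R}_{t}\right]
&= \sum_{s,a}\bar{r}(s,a)\,\mathbb{E}_{\tau\sim P,\pi}\left[\sum_{t=0}^{\infty}\gamma^{t}\mathbbm{1}(s_{t}=s,a_{t}=a)\right]
= \sum_{s,a}\rho_{\pi}(s,a)\bar{r}(s,a),
\end{align*}
which follows by linearity of expectation and the tower property after conditioning each $\mathbf{R}_{t}$ on $(s_{t},a_{t})$. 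Combining the two pieces yields $J_{q}(\pi)=\bar{J}_{q}(\rho_{\pi})$ for all $\pi\in\Pi$, and symmetrically $\bar{J}_{q}(\rho)=J_{q}(\pi_{\rho})$ for all $\rho\in\mathbf{M}$.

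Finally I would conclude. If $\rho^{\star}$ maximizes $\bar{J}_{q}$ over $\mathbf{M}$, then for any $\pi\in\Pi$ we have $J_{q}(\pi)=\bar{J}_{q}(\rho_{\pi})\le\bar{J}_{q}(\rho^{\star})=J_{q}(\pi_{\rho^{\star}})$, so $\pi_{\rho^{\star}}$ maximizes $J_{q}$; the converse is identical with the roles of $\pi$ and $\rho$ exchanged, using $J_{q}(\pi^{\star})=\bar{J}_{q}(\rho_{\pi^{\star}})$. Hence both problems share the same optimal value and their maximizers are in bijection. The only genuinely nontrivial ingredients are the reward identity and the guarantee $\rho_{\pi_{\rho}}=\rho$; everything else is bookkeeping. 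I expect the main obstacle to be keeping the bijection clean, in particular ensuring that $\mathbf{M}$ captures the Bellman flow constraints exactly so that the supremum over $\mathbf{M}$ genuinely equals the supremum over $\Pi$ with no visitation measure lost or double counted, which is precisely what Propositions \ref{rmk:bellman_flow} and \ref{thm:otocrrsp} secure.
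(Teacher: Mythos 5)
Your proposal is correct and takes essentially the same route as the paper: both rest on the policy--visitation correspondence of Propositions \ref{rmk:bellman_flow} and \ref{thm:otocrrsp} together with the objective-preserving identities (the entropy part from Lemma \ref{thm:obj_chg} and the discounted-occupancy reward identity), the only difference being that the paper phrases the final step as a proof by contradiction while you argue directly via the bijection. Your version is, if anything, slightly more explicit in verifying the reward identity, which the paper uses implicitly.
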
 
\begin{proof}
Let $\rho^{\star}$ be an optimal solution of (\ref{prob:vis_mdp}).
Assume that there exist another policyt $\pi'$ such that $J(\pi')+S_{q}^{\infty}(\pi') > J(\pi_{\rho^{\star}})+S_{q}^{\infty}(\pi_{\rho^{\star}})$
where $J(\pi)=\mathbbm{E}_{\tau\sim\pi,P}\left[\sum_{t=0}^{\infty}\gamma^{t}\mathbf{R}_{t}\right]$.
Then, $\sum_{s,a}\rho_{\pi'}(s,a)\mathbf{r}(s,a)+\bar{S}_{q}^{\infty}(\rho_{\pi'}) > \sum_{s,a}\rho_{\star}(s,a)\mathbf{r}(s,a)+\bar{S}_{q}^{\infty}(\rho_{\star})$.
It contradicts to the fact that $\rho^{\star}$ is the optimal solution of (\ref{prob:vis_mdp}).
Thus, for all $\pi$, $J(\pi)+S_{q}^{\infty}(\pi) \leq J(\pi_{\rho^{\star}})+S_{q}^{\infty}(\pi_{\rho^{\star}})$
which means $\pi_{\rho^{\star}}$ is the optimal policy.
The opposite direction also can be proven in the same way.
\end{proof}
Lemma \ref{thm:obj_chg} shows that $\bar{S}_{q}^{\infty}(\rho)$ and $S_{q}^{\infty}(\pi)$ has the same function value.
Thus, we can freely change the optimization variable from $\pi$ to $\rho$ since the optimal point does not change due to the Corollary \ref{lemcor:same_optima}.

\subsection{Variable Change}
Based on Proposition \ref{thm:otocrrsp} and Lemma \ref{thm:obj_chg},
we convert a Tsallis MDP problem to 
\begin{eqnarray}
\begin{aligned}\label{prob:vis_mdp}
	& \underset{\rho}{\text{maximize}}
	& & \sum_{s,a}\rho(s, a) \sum_{s'}\mathbf{r}(s,a,s')P(s'|s,a) - \sum_{s,a} \rho(s, a) \ln_{q}\left(\frac{\rho(s, a)}{\sum_{a'}\rho(s, a')}\right) \\
	& \text{subject to}
	& &\forall s,\;a,\; \rho(s, a)\ge 0,\; \sum_{a}\rho(s, a) = d(s) + \sum_{s',a'} P(s|s',a')\rho(s', a').
\end{aligned}
\end{eqnarray}
Now, the optimization variables in the problem (\ref{prob:vis_mdp}) is a state action visitation.
In the following lemmas, we show that the problem (\ref{prob:vis_mdp}) is concave with respect to a state action visitation.

\begin{lemma}\label{lem:concavity_tsent}
$\bar{S}_{q}^{\infty}(\rho)$ is concave function with respect to $\rho \in \mathbf{M}$
\end{lemma}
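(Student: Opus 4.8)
The plan is to reduce the claim to a one‑dimensional fact that is already available. Recall from the earlier concavity proof that the scalar map $f(x) = -x\ln_q(x)$ is concave on $(0,\infty)$ for $q>0$, since $f''(x) = -q x^{q-2} < 0$; consequently $\phi(\mathbf{v}) \triangleq \sum_a f(v_a)$ is concave on the positive orthant, being a sum of coordinatewise concave functions. The key structural observation is that $\bar{S}_q^\infty$ decomposes state by state: writing $\rho(s) \triangleq \sum_{a'}\rho(s,a')$,
\begin{equation*}
\bar{S}_q^\infty(\rho) = \sum_s g_s(\rho), \qquad g_s(\rho) \triangleq -\sum_a \rho(s,a)\ln_q\!\left(\frac{\rho(s,a)}{\rho(s)}\right),
\end{equation*}
where each $g_s$ depends only on the block of variables $(\rho(s,a))_a$. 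Since distinct states involve disjoint blocks, and a sum of functions that are concave in separate variables is concave, it suffices to prove that each $g_s$ is concave in its own block.

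First I would recognize $g_s$ as a perspective function. Factoring out $\rho(s)$ gives
\begin{equation*}
g_s(\rho) = \rho(s)\sum_a \left(-\frac{\rho(s,a)}{\rho(s)}\ln_q\!\left(\frac{\rho(s,a)}{\rho(s)}\right)\right) = \rho(s)\,\phi\!\left(\frac{(\rho(s,a))_a}{\rho(s)}\right),
\end{equation*}
so with $t \triangleq \rho(s)$ and $\mathbf{v} \triangleq (\rho(s,a))_a$ the quantity $g_s$ equals $t\,\phi(\mathbf{v}/t)$, i.e.\ the perspective of the concave function $\phi$.

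Next I would invoke the standard fact that the perspective $\psi(\mathbf{v},t) = t\,\phi(\mathbf{v}/t)$ of a concave function $\phi$ is jointly concave on $\{(\mathbf{v},t): t>0\}$. Because $t = \rho(s) = \sum_a \rho(s,a)$ is a linear function of the block $\mathbf{v}$, the map $\mathbf{v} \mapsto (\mathbf{v},\sum_a v_a)$ is affine, and precomposing the concave $\psi$ with an affine map preserves concavity. Hence $g_s$ is concave in $(\rho(s,a))_a$, and summing over $s$ yields concavity of $\bar{S}_q^\infty$ on the relative interior of $\mathbf{M}$ where every $\rho(s)>0$.

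The main obstacle I anticipate is the boundary: the perspective is only defined for $t>0$, so states with $\rho(s)=0$ (whence $\rho(s,a)=0$ for all $a$ by nonnegativity) must be handled by continuity, using the convention $g_s=0$ there and checking that the concavity inequality survives passage to the closure of $\mathbf{M}$. A self‑contained alternative, avoiding the perspective lemma, would be to compute the Hessian of $g_s$ directly and verify negative semidefiniteness, but that route is considerably more laborious; the perspective argument reuses the already‑established concavity of $f$ and is cleaner.
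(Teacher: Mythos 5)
Your proof is correct, and it reaches the same destination as the paper's via the same one-dimensional seed fact (concavity of $f(x)=-x\ln_q(x)$ from $f''(x)=-qx^{q-2}<0$ for $q>0$), but it packages the multivariate step differently. The paper verifies the concavity inequality for $\bar{S}_q^\infty$ directly: given $\tilde\rho=\lambda_1\rho_1+\lambda_2\rho_2$, it introduces the reweighting coefficients $\mu_i=\lambda_i\sum_{a'}\rho_i(s,a')/\sum_{a'}\tilde\rho(s,a')$ (which sum to one), rewrites the argument of $\ln_q$ as the $\mu$-convex combination of the conditional distributions $\rho_i(s,a)/\sum_{a'}\rho_i(s,a')$, applies concavity of $f$ pointwise, and then undoes the reweighting to recover $\lambda_1\bar{S}_q^\infty(\rho_1)+\lambda_2\bar{S}_q^\infty(\rho_2)$. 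Your route instead names the structure: each per-state block $g_s$ is the perspective $t\,\phi(\mathbf{v}/t)$ of the separable concave function $\phi$, evaluated along the affine map $\mathbf{v}\mapsto(\mathbf{v},\sum_a v_a)$, so concavity follows from two standard preservation lemmas. These are really the same argument --- the paper's $\mu_1,\mu_2$ manipulation is exactly the proof of perspective concavity unrolled in this special case --- but your version is more modular and makes the mechanism transparent, at the cost of importing the perspective lemma as a black box. You also explicitly flag the boundary case $\rho(s)=0$, which the paper silently ignores (its chain of strict inequalities likewise presumes positive denominators and $\rho_1\neq\rho_2$); handling it by the continuous extension $g_s=0$ is the right fix and applies equally to the paper's computation.
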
 
\begin{proof}
Let us consider the function $f(x) = -x \ln_{q}(x)$ defined over $( x > 0 )$.
Second derivative of $d^{2}f(x)/dx^{2}$ is computed as 
\begin{eqnarray*}
\begin{aligned}
\frac{d^{2}f(x)}{dx^{2}} = -qx^{q-2} < 0\;\; ( x > 0 ).
\end{aligned}
\end{eqnarray*}
Since its second derivative is always negative on its domain, $f(x)$ is a concave function.
From this fact, we can show that $\bar{S}_{q}^{\infty}(\rho)$ is concave.
Proving the concavity is equivalent to show that for any $0<\lambda_{1},\lambda_{2}<1$ such that $\lambda_{1}+\lambda_{2}=1$, and for $\rho_{1},\rho_{2}\in\mathbf{M}$ the following inequality holds
$$
\bar{S}_{q}^{\infty}(\lambda_{1}\rho_{1}+\lambda_{2}\rho_{2}) > \lambda_{1}\bar{S}_{q}^{\infty}(\rho_{1}) +\lambda_{2} \bar{S}_{q}^{\infty}(\rho_{2})
$$
For notional simplicity, let $\tilde{\rho}$ be $\lambda_{1}\rho_{1}+\lambda_{2}\rho_{2}$
and define $\mu_{1} = \frac{\lambda_{1}\sum_{a'}\rho_{1}(s,a')}{\sum_{a'}\tilde{\rho}(s,a')}$ and $\mu_{2} = \frac{\lambda_{2}\sum_{a'}\rho_{2}(s,a')}{\sum_{a'}\tilde{\rho}(s,a')}$.
Note that from the definition, $\mu_{1}+\mu_{2}=1$.
It can be shown as follow:
\begin{eqnarray}\label{eqn:proof_concav}
\small
\begin{aligned}
\bar{S}_{q}^{\infty}(\lambda_{1}\rho_{1}+\lambda_{2}\rho_{2}) &= -\sum_{s,a} \tilde{\rho}(s, a) \ln_{q}\left(\frac{\lambda_{1}\rho_{1}(s,a)+\lambda_{2}\rho_{2}(s,a)}{\sum_{a'}\tilde{\rho}(s, a')}\right)\\
&= -\sum_{s,a} \tilde{\rho}(s, a) \ln_{q}\left(\frac{\mu_{1}\rho_{1}(s,a)}{\sum_{a'}\rho_{1}(s,a')}+\frac{\mu_{2}\rho_{2}(s,a)}{\sum_{a'}\rho_{2}(s,a')}\right)\\
&= -\sum_{s,a}\left(\sum_{a'} \tilde{\rho}(s, a')\frac{\lambda_{1}\rho_{1}(s,a)+\lambda_{2}\rho_{2}(s,a)}{\sum_{a'} \tilde{\rho}(s, a')} \right) \ln_{q}\left(\frac{\mu_{1}\rho_{1}(s,a)}{\sum_{a'}\rho_{1}(s,a')}+\frac{\mu_{2}\rho_{2}(s,a)}{\sum_{a'}\rho_{2}(s,a')}\right)\\
&= -\sum_{s,a}\sum_{a'} \tilde{\rho}(s, a')\left(\frac{\mu_{1}\rho_{1}(s,a)}{\sum_{a'} \rho_{1}(s,a')}+\frac{\mu_{2}\rho_{2}(s,a)}{\sum_{a'} \tilde{\rho}(s, a')}\right)  \ln_{q}\left(\frac{\mu_{1}\rho_{1}(s,a)}{\sum_{a'}\rho_{1}(s,a')}+\frac{\mu_{2}\rho_{2}(s,a)}{\sum_{a'}\rho_{2}(s,a')}\right)
\end{aligned}
\end{eqnarray}
Then, for all $s,a$,
\begin{eqnarray*}
\small
\begin{aligned}
&-\left(\frac{\mu_{1}\rho_{1}(s,a)}{\sum_{a'} \rho_{1}(s,a')}+\frac{\mu_{2}\rho_{2}(s,a)}{\sum_{a'} \tilde{\rho}(s, a')}\right)  \ln_{q}\left(\frac{\mu_{1}\rho_{1}(s,a)}{\sum_{a'}\rho_{1}(s,a')}+\frac{\mu_{2}\rho_{2}(s,a)}{\sum_{a'}\rho_{2}(s,a')}\right) \\
&> -\mu_{1}\frac{\rho_{1}(s,a)}{\sum_{a'} \rho_{1}(s,a')}\ln_{q}\left(\frac{\rho_{1}(s,a)}{\sum_{a'} \rho_{1}(s,a')}\right)-\mu_{2}\frac{\rho_{2}(s,a)}{\sum_{a'} \rho_{2}(s,a')}\ln_{q}\left(\frac{\rho_{2}(s,a)}{\sum_{a'} \rho_{2}(s,a')}\right)
\end{aligned}
\end{eqnarray*}
Equation (\ref{eqn:proof_concav}) becomes
\begin{eqnarray*}
\small
\begin{aligned}
\bar{S}_{q}^{\infty}(\lambda_{1}\rho_{1}+\lambda_{2}\rho_{2}) =& -\sum_{s,a}\sum_{a'} \tilde{\rho}(s, a')\left(\frac{\mu_{1}\rho_{1}(s,a)}{\sum_{a'} \rho_{1}(s,a')}+\frac{\mu_{2}\rho_{2}(s,a)}{\sum_{a'} \tilde{\rho}(s, a')}\right)  \ln_{q}\left(\frac{\mu_{1}\rho_{1}(s,a)}{\sum_{a'}\rho_{1}(s,a')}+\frac{\mu_{2}\rho_{2}(s,a)}{\sum_{a'}\rho_{2}(s,a')}\right)\\
>& -\sum_{s,a}\sum_{a'} \tilde{\rho}(s, a')\mu_{1}\frac{\rho_{1}(s,a)}{\sum_{a'} \rho_{1}(s,a')}\ln_{q}\left(\frac{\rho_{1}(s,a)}{\sum_{a'} \rho_{1}(s,a')}\right)\\
&-\sum_{s,a}\sum_{a'} \tilde{\rho}(s, a')\mu_{2}\frac{\rho_{2}(s,a)}{\sum_{a'} \rho_{2}(s,a')}\ln_{q}\left(\frac{\rho_{2}(s,a)}{\sum_{a'} \rho_{2}(s,a')}\right)
\end{aligned}
\end{eqnarray*}
Since $\sum_{a'} \tilde{\rho}(s, a')\mu_{1}\frac{\rho_{1}(s,a)}{\sum_{a'} \rho_{1}(s,a')} = \sum_{a'} \tilde{\rho}(s, a')\frac{\lambda_{1}\sum_{a'}\rho_{1}(s,a')}{\sum_{a'}\tilde{\rho}(s,a')}\frac{\rho_{1}(s,a)}{\sum_{a'} \rho_{1}(s,a')} = \lambda_{1}\rho_{1}(s,a)$,
finally, we get
\begin{eqnarray*}
\small
\begin{aligned}
\bar{S}_{q}^{\infty}(\lambda_{1}\rho_{1}+\lambda_{2}\rho_{2})>& -\sum_{s,a}\sum_{a'} \tilde{\rho}(s, a')\mu_{1}\frac{\rho_{1}(s,a)}{\sum_{a'} \rho_{1}(s,a')}\ln_{q}\left(\frac{\rho_{1}(s,a)}{\sum_{a'} \rho_{1}(s,a')}\right)\\
&-\sum_{s,a}\sum_{a'} \tilde{\rho}(s, a')\mu_{2}\frac{\rho_{2}(s,a)}{\sum_{a'} \rho_{2}(s,a')}\ln_{q}\left(\frac{\rho_{2}(s,a)}{\sum_{a'} \rho_{2}(s,a')}\right)\\
=&-\sum_{s,a} \lambda_{1}\rho_{1}(s,a) \ln_{q}\left(\frac{\rho_{1}(s,a)}{\sum_{a'} \rho_{1}(s,a')}\right)-\sum_{s,a} \lambda_{2}\rho_{2}(s,a) \ln_{q}\left(\frac{\rho_{2}(s,a)}{\sum_{a'} \rho_{2}(s,a')}\right)\\
=&\lambda_{1}\bar{S}_{q}^{\infty}(\rho_{1})+\lambda_{2}\bar{S}_{q}^{\infty}(\rho_{2})
\end{aligned}
\end{eqnarray*}
Note that this proof holds for every $q$ value greater than zero.
\end{proof}
\begin{lemcorollary}
The problem (\ref{prob:vis_mdp}) is concave with respect to $\rho\in\mathbf{M}$
\end{lemcorollary}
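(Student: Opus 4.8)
The plan is to decompose the objective of (\ref{prob:vis_mdp}) into two additive pieces, show each is concave, argue that the feasible region is convex, and conclude that the problem is the maximization of a concave function over a convex set.

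First I would split the objective. The first summand $\sum_{s,a}\rho(s,a)\sum_{s'}\mathbf{r}(s,a,s')P(s'|s,a)$ is linear, hence affine and concave, in $\rho$, since the quantity $\sum_{s'}\mathbf{r}(s,a,s')P(s'|s,a)$ is a fixed coefficient depending only on the pair $(s,a)$ and not on the optimization variable. The second summand is precisely $\bar{S}_{q}^{\infty}(\rho)$, which Lemma \ref{lem:concavity_tsent} already establishes to be concave in $\rho$ for every $q>0$. Because the sum of two concave functions is concave, the entire objective is concave on $\mathbf{M}$.

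Next I would check that the feasible set $\mathbf{M}$ is convex. Each defining constraint is affine in $\rho$: the nonnegativity constraints $\rho(s,a)\ge 0$ are linear inequalities, and the Bellman flow constraints $\sum_{a}\rho(s,a)=d(s)+\sum_{s',a'}P(s|s',a')\rho(s',a')$ are linear equalities. Thus $\mathbf{M}$ is an intersection of half-spaces and hyperplanes, which is convex. Combining the two observations, (\ref{prob:vis_mdp}) maximizes a concave objective over a convex set and is therefore a concave program, as claimed.

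I do not anticipate a real obstacle in this corollary: the one piece of genuine analytic work—verifying concavity of the Tsallis-entropy functional $\bar{S}_{q}^{\infty}$ and its validity for all $q>0$—was already carried out in Lemma \ref{lem:concavity_tsent}. The only point needing mild care is confirming that the reward term is affine rather than merely concave, so that adding it to $\bar{S}_{q}^{\infty}$ preserves concavity; this is immediate once its coefficients are read off as constants with respect to $\rho$.
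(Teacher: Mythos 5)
Your proposal is correct and follows essentially the same route as the paper: the objective splits into a linear reward term plus $\bar{S}_{q}^{\infty}(\rho)$, whose concavity is supplied by Lemma \ref{lem:concavity_tsent}, and the constraints defining $\mathbf{M}$ are affine. The only difference is that you spell out the convexity of the feasible set explicitly, which the paper states more tersely.
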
 
\begin{proof}
The objective function of (\ref{prob:vis_mdp}) is concave function w.r.t $\rho$ since the first term is linear and the second term is concave be Lemma \ref{lem:concavity_tsent}.
All constraints are linear w.r.t $\rho$.
Thus, the problem is a concave problem.
\end{proof}

\subsection{Proof of Tsallis Bellman Optimality Equation}
\begin{proof}[Proof of Theorem \ref{thm:tbo}]
Since the problem (\ref{prob:vis_mdp}) is concave with respect to $\rho$,
the primal and dual solutions necessarily and sufficiently satisfy a KKT condition.
First, the Lagrangian objecitve $\mathcal{L}\triangleq\sum_{s,a}\rho(s, a) \mathbf{r}(s,a) - \sum_{s,a} \rho(s, a) \ln_{q}\left(\frac{\rho(s, a)}{\sum_{a'}\rho(s, a')}\right)+\sum_{s,a}\lambda(s,a)\rho(s, a) + \sum_{s} \mu(s)\left(d(s) + \sum_{s',a'} P(s|s',a')\rho(s', a') - \sum_{a}\rho(s, a)\right)$
where $\lambda(s,a)$ and $\mu(s)$ are dual variables for nonnegativity and Bellman flow constraints.
The KKT conditions of the problem (\ref{prob:vis_mdp}) are as follows:
\begin{eqnarray}\label{eqn:kkt}
\small
\begin{aligned}
\forall s,\;a,\; &\rho^{\star}(s,a) \ge 0,\; d(s) + \sum_{s',a'} P(s|s',a')\rho^{\star}(s',a') - \sum_{a}\rho^{\star}(s,a) = 0 \\
\forall s,\;a,\; & \lambda^{\star}(s,a) \leq 0 \\
\forall s,\;a,\; & \lambda^{\star}(s,a) \rho^{\star}(s,a) = 0 \\
\forall s,\;a,\; & 0 = \sum_{s'}\mathbf{r}(s,a,s')P(s'|s,a) + \gamma \sum_{s'}\mu^{\star}(s')P(s'|s,a) - \mu^{\star}(s) - q \ln_{q} \left(\frac{\rho^{\star}(s,a)}{\sum_{a'}\rho^{\star}(s,a')}\right) - 1\\
&+\sum_{a}\left(\frac{\rho^{\star}(s,a)}{\sum_{a'}\rho^{\star}(s,a')}\right)^{q} + \lambda^{\star}(s,a) 
\end{aligned}
\end{eqnarray}
We would like to note that the dervative of $\bar{S}_{q}^{\infty}(\rho)$ is computed as follows:
\begin{eqnarray}\label{eqn:tsallis_deriv}
\small
\begin{aligned}
\frac{\partial \bar{S}_{q}^{\infty}(\rho)}{\partial \rho(s'',a'')} &= - \sum_{s,a}\frac{\partial\rho(s,a)}{\partial \rho(s'',a'')} \ln_{q}\left(\frac{\rho(s,a)}{\sum_{a'}\rho(s,a')}\right)- \sum_{s,a}\rho(s,a)\frac{\partial \ln_{q}\left(\rho(s,a)/\sum_{a'}\rho(s,a')\right)}{\partial \rho(s'',a'')}\\
&= - \ln_{q}\left(\frac{\rho(s'',a'')}{\sum_{a'}\rho(s'',a')}\right) - \sum_{a}\rho(s'',a)\left(\frac{\rho(s'',a)}{\sum_{a'}\rho(s'',a')}\right)^{q-2}\left(\frac{\delta_{a''}(a)}{\sum_{a'}\rho(s'',a')} - \frac{\rho(s'',a)}{\left(\sum_{a'}\rho(s'',a')\right)^{2}}\right)\\
&= - \ln_{q}\left(\frac{\rho(s'',a'')}{\sum_{a'}\rho(s'',a')}\right) - \left(\frac{\rho(s'',a'')}{\sum_{a'}\rho(s'',a')}\right)^{q-1} + \sum_{a}\left(\frac{\rho(s'',a)}{\sum_{a'}\rho(s'',a')}\right)^{q}\\
&=  - q \ln_{q} \left(\frac{\rho(s'',a'')}{\sum_{a'}\rho(s'',a')}\right) - 1 +\sum_{a}\left(\frac{\rho(s'',a)}{\sum_{a'}\rho(s'',a')}\right)^{q}
\end{aligned}
\end{eqnarray}

Then, we show that $\mu^{\star}(s)$ is the same as optimal value $V^{\star}_{q}(s)$.
From the stationary condition, by multiplying $\pi_{\rho^{\star}}(a|s) = \rho^{\star}(s,a)/\sum_{a'}\rho^{\star}(s,a')$ and summing up with respect to $a$, the following equation is obtained:
\begin{eqnarray}\label{eqn:tsallis_deriv}
\small
\begin{aligned}
0&=\sum_{a} \sum_{s'}\mathbf{r}(s,a,s')P(s'|s,a)\pi_{\rho^{\star}}(a|s) + \gamma \sum_{s'}\mu^{\star}(s')\sum_{a}P(s'|s,a)\pi_{\rho^{\star}}(a|s) - \mu^{\star}(s) \\
&- q \sum_{a} \pi_{\rho^{\star}}(a|s)\ln_{q} \left(\frac{\rho^{\star}(s,a)}{\sum_{a'}\rho^{\star}(s,a')}\right) - 1 +\sum_{a}\pi_{\rho^{\star}}(a|s)\sum_{a''}\left(\frac{\rho^{\star}(s,a'')}{\sum_{a'}\rho^{\star}(s,a')}\right)^{q} + \sum_{a}\lambda^{\star}(s,a)\pi_{\rho^{\star}}(a|s)\\
&=\sum_{a}\sum_{s'}\mathbf{r}(s,a,s')P(s'|s,a)\pi_{\rho^{\star}}(a|s) + \gamma \sum_{s'}\mu^{\star}(s')\sum_{a}P(s'|s,a)\pi_{\rho^{\star}}(a|s) - \mu^{\star}(s) - q \sum_{a} \pi_{\rho^{\star}}(a|s)\ln_{q} \left(\pi_{\rho^{\star}}(a|s)\right)\\
& - 1 +\sum_{a''}\pi_{\rho^{\star}}(s,a)^{q} + \sum_{a}\lambda^{\star}(s,a)\pi_{\rho^{\star}}(a|s)\\
&=\sum_{a} \sum_{s'}\mathbf{r}(s,a,s')P(s'|s,a)\pi_{\rho^{\star}}(a|s) + \gamma \sum_{s'}\mu^{\star}(s')\sum_{a}P(s'|s,a)\pi_{\rho^{\star}}(a|s) - \mu^{\star}(s) - q \sum_{a} \pi_{\rho^{\star}}(a|s)\ln_{q} \left(\pi_{\rho^{\star}}(a|s)\right)\\
& - 1 +\sum_{a''}\pi_{\rho^{\star}}(s,a)^{q}\\
&=\sum_{a} \sum_{s'}\mathbf{r}(s,a,s')P(s'|s,a)\pi_{\rho^{\star}}(a|s) + \gamma \sum_{s'}\mu^{\star}(s')\sum_{a}P(s'|s,a)\pi_{\rho^{\star}}(a|s) - \mu^{\star}(s) - q \sum_{a} \pi_{\rho^{\star}}(a|s)\ln_{q} \left(\pi_{\rho^{\star}}(a|s)\right)\\
& + (q-1)\sum_{s,a}\pi_{\rho^{\star}}(s,a)\ln_{q}\left(\pi_{\rho^{\star}}(s,a)\right)\\
&=\sum_{a} \sum_{s'}\mathbf{r}(s,a,s')P(s'|s,a)\pi_{\rho^{\star}}(a|s) + \gamma \sum_{s'}\mu^{\star}(s')\sum_{a}P(s'|s,a)\pi_{\rho^{\star}}(a|s) - \mu^{\star}(s)-\sum_{s,a}\pi_{\rho^{\star}}(s,a)\ln_{q}\left(\pi_{\rho^{\star}}(s,a)\right).
\end{aligned}
\end{eqnarray}
Finally,
\begin{eqnarray}\label{eqn:mu_eqn}
\small
\begin{aligned}
\mu^{\star}(s)&=\sum_{a} \sum_{s'}\mathbf{r}(s,a,s')P(s'|s,a)\pi_{\rho^{\star}}(a|s) + \gamma \sum_{s'}\mu^{\star}(s')\sum_{a}P(s'|s,a)\pi_{\rho^{\star}}(a|s) - \sum_{a} \pi_{\rho^{\star}}(a|s)\ln_{q} \left(\pi_{\rho^{\star}}(a|s)\right)\\
&=\mathbbm{E}_{s'\sim P,a\sim\pi}\left[ \mathbf{r}(s,a,s') + \alpha S_{q}(\pi_{\rho^{\star}}(\cdot|s))+ \gamma \mu^{\star}(s')\middle|s\right]
\end{aligned}
\end{eqnarray}
This equation (\ref{eqn:mu_eqn}) exactly satisfies Tsallis Bellman expectation (TBE) equation of $\pi_{\rho^{\star}}$.
Thus, we want to claim that $\mu^{\star}(s)$ is the value $V^{\pi_{\rho^{\star}}}(s)$ of optimal policy $\pi_{\rho^{\star}}$, i.e., $\mu^{\star}(s)=V^{\star}_{q}(s)$.
However, to guarantee $\mu^{\star}(s)=V^{\star}_{q}(s)$, we should prove the following statement:
\textit{if an arbitrary function $f(s)$ satisfies a TBE equation for $\pi$, then, $f(s)=V^{\pi}(s)$},
which is not yet proven but it will be in Theorem \ref{thm:tpe}.
In this proof, let us just believe Theorem \ref{thm:tpe}.

Then, we first analyze a positive state-action visitation $\rho^{\star}(s,a) > 0$ $(\lambda^{\star}(s,a) = 0)$.
Using the fact that $\mu^{\star}=V_{q}^{\star}$, we can obtain $Q^{\star}_{q}(s,a)=\mathbbm{E}_{s'\sim P}[\mathbf{r}(s,a,s')+\gamma\mu^{\star}(s')]$.
By replacing $\rho^{\star}(s,a)/\sum_{a'}\rho^{\star}(s,a')$ with $\pi_{\rho^{\star}}(a|s)$ and using $Q^{\star}_{q}(s,a)=\mathbbm{E}_{s'\sim P}[\mathbf{r}(s,a,s')+\gamma\mu^{\star}(s')]$ and $\mu^{\star}(s)=V^{\star}(s)$,
\begin{eqnarray}
\begin{aligned}
&Q^{\star}_{q}(s,a) - V^{\star}_{q}(s) - q \ln_{q} \left(\pi_{\rho^{\star}}(a|s)\right) - 1 + \sum_{a}\pi_{\rho^{\star}}(a|s)^{q}= 0\\ 
&\frac{Q^{\star}_{q}(s,a)}{q} - \frac{V^{\star}_{q}(s) + 1 - \sum_{a}\left(\pi_{\rho^{\star}}(a|s)\right)^{q}}{q} = \ln_{q} \left(\pi_{\rho^{\star}}(a|s)\right)\\ 
&\exp_{q}\left(\frac{Q^{\star}_{q}(s,a)}{q} - \frac{V^{\star}_{q}(s) + 1 - \sum_{a}\left(\pi_{\rho^{\star}}(a|s)\right)^{q}}{q}\right) = \pi_{\rho^{\star}}(a|s).
\end{aligned}
\end{eqnarray}
Now, we can use $\sum_{a}\pi(a|s)=1$.
By summing up with respect to $a$, 
\begin{eqnarray}
\begin{aligned}
\sum_{a}\exp_{q}\left(\frac{Q^{\star}_{q}(s,a)}{q} - \frac{V^{\star}_{q}(s) + 1 - \sum_{a}\left(\pi_{\rho^{\star}}(a|s)\right)^{q}}{q}\right) = 1.
\end{aligned}
\end{eqnarray}
This equation is the normalization equation of \textit{q}-exponential distribution (\ref{def:qpoten}).
So, we can obtain the relationship between \textit{q}-potential and the optimal value function.
\begin{eqnarray}
\begin{aligned}
\psi_{q}\left(\frac{Q^{\star}_{q}(s,\cdot)}{q}\right) = \frac{V^{\star}_{q}(s) + 1 - \sum_{a}\left(\pi_{\rho^{\star}}(a|s)\right)^{q}}{q}
\end{aligned}
\end{eqnarray}
Finally, it is shown that the optimal policy has \textit{q}-exponential distribution of $Q^{\star}_{q}(s,\cdot)$.
\begin{eqnarray}
\begin{aligned}
\exp_{q}\left(\frac{Q^{\star}_{q}(s,a)}{q} - \psi_{q}\left(\frac{Q^{\star}_{q}(s,\cdot)}{q}\right)\right) = \pi_{\rho^{\star}}(a|s)
\end{aligned}
\end{eqnarray}

By plugging in this result into (\ref{eqn:mu_eqn}),
\begin{eqnarray}\label{eqn:value_eqn}
\begin{aligned}
V^{\star}_{q}(s)&=\sum_{a}\pi_{\rho^{\star}}(a|s)\sum_{s'}\left[ \mathbf{r}(s,a,s') + \gamma V^{\star}_{q}(s')P(s'|s,a)\right] - \sum_{a} \pi_{\rho^{\star}}(a|s)\ln_{q} \left(\pi^{\star}_{q}(a|s)\right)\\
&=\sum_{a}\pi_{\rho^{\star}}(a|s)Q^{\star}_{q}(s,a) -  \sum_{a} \pi_{\rho^{\star}}(a|s)\ln_{q} \left(\pi^{\star}_{q}(a|s)\right)\\
&=\sum_{a}\pi_{\rho^{\star}}(a|s)Q^{\star}_{q}(s,a) -  \sum_{a} \pi_{\rho^{\star}}(a|s)\left(\frac{Q^{\star}_{q}(s,a)}{q} - \psi_{q}\left(\frac{Q^{\star}_{q}(s,\cdot)}{q}\right)\right)\\
&=(q-1)\sum_{a}\pi_{\rho^{\star}}(a|s)\frac{Q^{\star}_{q}(s,a)}{q} + \psi_{q}\left(\frac{Q^{\star}_{q}(s,\cdot)}{q}\right)\\
&=\mathop{q\text{-max}}_{a'}\left(Q^{\star}_{q}(s,a')\right)
\end{aligned}
\end{eqnarray}
where the last equation is derived using the Equation (\ref{eqn:qmax_qpot}).

To summarize, we obtain the optimality condition for a Tsallis MDP as follows:
\begin{eqnarray}
\begin{aligned}\label{eqn:gtb}
Q^{\star}_{q}(s,a) &= \mathbbm{E}_{s'}\left[\mathbf{r}(s,a,s') + \gamma V^{\star}(s')\middle|s,a\right] \\
V^{\star}_{q}(s) &= \mathop{q\text{-max}}_{a'}(Q^{\star}_{q}(s,a'))\\
\pi^{\star}_{q}(a|s)  &= \exp_{q}\left(\frac{Q^{\star}_{q}(s,a)}{q} - \psi_{q}\left(\frac{Q^{\star}_{q}(s,\cdot)}{q}\right)\right)
\end{aligned}
\end{eqnarray}
We call these equations Tsallis Bellman optimality (TBO) equations.
\end{proof}

\section{Tsallis Policy Iteration}

\subsection{Tsallis Bellman Expectation (TBE) Equation}
In Tsallis policy evaluation,
for fixed $\pi$, the value functions of $\pi$ have the relationship as follows:
\begin{eqnarray}
\begin{aligned}\label{eqn:tbe}
Q^{\pi}_{q}(s,a) &= \mathop{\mathbbm{E}}_{s'\sim P}[\mathbf{r}(s,a,s') + \gamma V^{\pi}_{q}(s')|s,a] \\
V^{\pi}_{q}(s) &= \mathop{\mathbbm{E}}_{a\sim \pi}[Q^{\pi}_{q}(s,a) - \ln_{q}(\pi(a|s))],
\end{aligned}
\end{eqnarray}
These equations are derived from the definition of $V_{q}^{\pi}$ and $Q_{q}^{\pi}$.
Thus, if we have some value functions of Tsallis MDP, then, they satisfies TBE equation trivially.
However, main goal of Tsallis policy evaluation is to prove the opposite direction:
\textit{if an arbitrary function $f(s)$ satisfies a TBE equation for $\pi$, then, $f(s)=V^{\pi}(s)$}.

\subsection{Tsallis Bellman Expectation Operator and Tsallis Policy Evaluation}
\begin{eqnarray}
\begin{aligned}\label{def:tbe_op_q}
\left[\mathcal{T}_{q}^{\pi}F \right](s,a) &\triangleq \mathop{\mathbbm{E}}_{s' \sim P}[ \mathbf{r}(s,a,s') + \gamma V_{F}(s') |s,a]\\
V_{F}(s) &\triangleq \mathop{\mathbbm{E}}_{a\sim \pi}[F(s,a) - \ln_{q}(\pi(a|s))],
\end{aligned}
\end{eqnarray}
where $s'\sim P$ indicates $s'\sim P(\cdot|s,a)$  and $a'\sim\pi$ indicates $a'\sim\pi(\cdot|s)$.
Then, policy evaluation method in a Tsallis MDP can be simply defined as 
$$F_{k+1} = \mathcal{T}_{q}^{\pi}F_k.$$

\begin{theorem}[Tsallis Policy Evaluation]\label{thm:tpe}
For any fixed policy $\pi$ and entropic-index $q\geq1$, consider Tsallis Bellman expectation (TBE) operator $\mathcal{T}_{q}^{\pi}$,
and for an arbitrary initial function $F$ over $\mathcal{S}\times\mathcal{A}$, define Tsallis policy evaluation $F_{k+1}=\mathcal{T}_{q}^{\pi}F_{k}$. Then, $F_{k}$ converges into the $Q_{q}^{\pi}$ and satisfies TBE equation (\ref{eqn:tbe}).
In other words, the value function satisfying the TBE equation (\ref{eqn:tbe}) is unique.
\end{theorem}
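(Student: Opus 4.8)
The plan is to prove convergence and uniqueness through the Banach fixed-point theorem, mirroring the classical proof of ordinary policy evaluation, by showing that the TBE operator $\mathcal{T}_{q}^{\pi}$ is a $\gamma$-contraction on the space of bounded functions over $\mathcal{S}\times\mathcal{A}$ equipped with the supremum norm $\|\cdot\|_{\infty}$. The crucial structural observation is that the entropy bonus $-\ln_{q}(\pi(a|s))$ entering $V_{F}$ depends only on the fixed policy $\pi$ and not on the argument $F$, so it will cancel whenever we compare the operator evaluated at two different functions; this is exactly what makes the nonlinearity of the $q$-logarithm harmless here.

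First I would check that $\mathcal{T}_{q}^{\pi}$ preserves boundedness, so that the underlying space is complete and the fixed-point theorem applies. Since $\mathbf{r}$ is assumed bounded and the bonus $\mathop{\mathbbm{E}}_{a\sim\pi}[-\ln_{q}(\pi(a|s))]=S_{q}(\pi(\cdot|s))$ lies between $0$ and its maximal value $-\ln_{q}(1/|\mathcal{A}|)$, attained at the uniform distribution as established in the Tsallis entropy propositions, the quantity $V_{F}(s)$ is finite whenever $F$ is bounded, and hence so is $[\mathcal{T}_{q}^{\pi}F](s,a)$.

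For the contraction step, take any two bounded functions $F,F'$. Because the $\ln_{q}(\pi)$ terms are identical in $V_{F}$ and $V_{F'}$, we have $V_{F}(s')-V_{F'}(s')=\mathop{\mathbbm{E}}_{a'\sim\pi}[F(s',a')-F'(s',a')]$, whence
\begin{equation*}
[\mathcal{T}_{q}^{\pi}F](s,a)-[\mathcal{T}_{q}^{\pi}F'](s,a)=\gamma\mathop{\mathbbm{E}}_{s'\sim P}\left[\mathop{\mathbbm{E}}_{a'\sim\pi}\left[F(s',a')-F'(s',a')\right]\right].
\end{equation*}
Bounding the inner expectation by $\|F-F'\|_{\infty}$ yields $\|\mathcal{T}_{q}^{\pi}F-\mathcal{T}_{q}^{\pi}F'\|_{\infty}\leq\gamma\|F-F'\|_{\infty}$; since $\gamma\in(0,1)$ the operator is a contraction, so it admits a unique fixed point $F_{*}$ and the iterates $F_{k+1}=\mathcal{T}_{q}^{\pi}F_{k}$ converge to $F_{*}$ from any initial $F_{0}$.

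It remains to identify $F_{*}$ with $Q_{q}^{\pi}$. Expanding the definitions of $V_{q}^{\pi}$ and $Q_{q}^{\pi}$ shows that this pair satisfies the TBE equation, i.e. $\mathcal{T}_{q}^{\pi}Q_{q}^{\pi}=Q_{q}^{\pi}$, so $Q_{q}^{\pi}$ is a fixed point; uniqueness then forces $F_{*}=Q_{q}^{\pi}$ and simultaneously proves that the function satisfying the TBE equation is unique, which is precisely the statement invoked in the proof of Theorem \ref{thm:tbo}. I expect essentially all the work to be the routine boundedness and cancellation bookkeeping; the only genuine subtlety is recognizing the $F$-independence of the entropy term, which collapses everything to the classical argument and, incidentally, makes the contraction modulus $\gamma$ independent of $q$, so the conclusion in fact holds for every $q>0$.
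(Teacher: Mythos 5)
Your proof is correct and follows essentially the same route as the paper: establish that $\mathcal{T}_{q}^{\pi}$ is a $\gamma$-contraction in the sup norm, invoke the Banach fixed-point theorem for existence, uniqueness, and convergence, and then identify the fixed point with $Q_{q}^{\pi}$ by checking that $Q_{q}^{\pi}$ satisfies the TBE equation. The only difference is cosmetic: the paper derives the contraction from separate monotonicity and constant-shift lemmas, whereas you obtain it directly from the cancellation of the $\ln_{q}(\pi)$ term (the operator being affine in $F$), which is a slightly more streamlined argument reaching the same bound and, as you note, is valid for every $q>0$.
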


Before proving Theorem \ref{thm:tpe},
we first drive the properties of $\mathcal{T}_{q}^{\pi}$.

\begin{lemma}\label{lem:pe_discounting}
For $F : \mathcal{S}\times\mathcal{A} \rightarrow R$ and $c \in R^{+}$, 
$ \mathcal{T}_{q}^{\pi}\left( F + c \mathbf{1}\right) = \mathcal{T}_{q}^{\pi}F + \gamma c \mathbf{1} $
where $\mathbf{1}:\mathcal{S}\times\mathcal{A} \rightarrow 1$
\end{lemma}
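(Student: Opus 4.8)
The plan is to unwind the two-line definition of the operator $\mathcal{T}_{q}^{\pi}$ in (\ref{def:tbe_op_q}) directly, tracking how the additive constant propagates through the backup. Write $G \triangleq F + c\mathbf{1}$, so that $G(s,a) = F(s,a) + c$ for every pair $(s,a)$. The one observation that makes everything go through is that the regularization term $-\ln_{q}(\pi(a|s))$ entering $V_{G}$ does not involve $G$ at all, and hence is left untouched by the shift.

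First I would compute the inner state value $V_{G}$. Since $\pi(\cdot|s)$ is a probability distribution, $\mathop{\mathbbm{E}}_{a\sim\pi}[c]=c$, so
\[
V_{G}(s) = \mathop{\mathbbm{E}}_{a\sim\pi}[F(s,a)+c - \ln_{q}(\pi(a|s))] = V_{F}(s) + c .
\]
That is, shifting $F$ by the constant $c$ shifts the state value $V_{F}$ by exactly the same constant, with the entropy contribution unchanged.

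Next I would push this through the outer expectation over $s'$. Because the shifted constant now sits behind the discount factor $\gamma$,
\[
[\mathcal{T}_{q}^{\pi}G](s,a) = \mathop{\mathbbm{E}}_{s'\sim P}[\mathbf{r}(s,a,s') + \gamma\left(V_{F}(s') + c\right) \mid s,a] = [\mathcal{T}_{q}^{\pi}F](s,a) + \gamma c ,
\]
which is exactly the asserted identity $\mathcal{T}_{q}^{\pi}(F + c\mathbf{1}) = \mathcal{T}_{q}^{\pi}F + \gamma c\mathbf{1}$.

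There is no genuine obstacle here; the computation is routine, and the only points requiring care are that the entropy term is independent of the shift and that $\mathop{\mathbbm{E}}_{a\sim\pi}[1]=1$ by normalization of $\pi(\cdot|s)$. I would emphasize the role of this lemma rather than its difficulty: it quantifies how a uniform shift of the input is damped by a factor $\gamma$, which I expect to be the key mechanism in the forthcoming argument that $\mathcal{T}_{q}^{\pi}$ is a $\gamma$-contraction in the supremum norm, and hence in establishing the convergence claimed in Theorem \ref{thm:tpe}.
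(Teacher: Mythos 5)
Your proposal is correct and follows exactly the same route as the paper's proof: first show $V_{F+c\mathbf{1}}(s)=V_{F}(s)+c$ using linearity of the expectation over $a\sim\pi$ and the fact that the $-\ln_{q}(\pi(a|s))$ term is unaffected by the shift, then propagate the constant through the outer expectation to pick up the factor $\gamma$. Nothing is missing.
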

\begin{proof}
For all $s,a$,
\begin{eqnarray}
\small
\begin{aligned}
V_{F+c\mathbf{1}}(s) &= \mathop{\mathbbm{E}}_{a\sim \pi}[F(s,a) + c - \ln_{q}(\pi(a|s))]= \mathop{\mathbbm{E}}_{a\sim \pi}[F(s,a) - \ln_{q}(\pi(a|s))] + c=V_{F}(s) + c\\
\left[\mathcal{T}_{q}^{\pi}\left(F +  c \mathbf{1}\right) \right](s,a) &= \mathop{\mathbbm{E}}_{s' \sim P}[ \mathbf{r}(s,a,s') + \gamma V_{F +  c \mathbf{1}}(s')|s,a]= \mathop{\mathbbm{E}}_{s' \sim P}[ \mathbf{r}(s,a,s') + \gamma V_{F}(s') + \gamma c|s,a]\\
&= \mathop{\mathbbm{E}}_{s' \sim P}[ \mathbf{r}(s,a,s') + \gamma V_{F}(s')|s,a] + \gamma c= \mathcal{T}_{q}^{\pi}F(s) + \gamma c
\end{aligned}
\end{eqnarray}
\end{proof}

\begin{lemma}\label{lem:pe_monotone}
For $F, G : \mathcal{S}\times\mathcal{A} \rightarrow R$ and $F \succeq G$, 
$\mathcal{T}_{q}^{\pi}\left(F\right) \succeq \mathcal{T}_{q}^{\pi}\left(G\right)$
where $\succeq$ indicates $F(s,a)\geq G(s,a)$ for all $s,a$.
\end{lemma}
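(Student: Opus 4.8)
The plan is to prove monotonicity by propagating the pointwise inequality $F \succeq G$ through the two layers of the operator definition (\ref{def:tbe_op_q}), exploiting the fact that the entropy correction term $-\ln_{q}(\pi(a|s))$ is independent of the argument function and therefore cancels. First I would establish the intermediate claim that $V_{F} \succeq V_{G}$, i.e. $V_{F}(s) \geq V_{G}(s)$ for every $s$. Subtracting the two definitions gives
\begin{eqnarray}
\small
\begin{aligned}
V_{F}(s) - V_{G}(s) &= \mathop{\mathbbm{E}}_{a\sim \pi}\left[ F(s,a) - \ln_{q}(\pi(a|s)) \right] - \mathop{\mathbbm{E}}_{a\sim \pi}\left[ G(s,a) - \ln_{q}(\pi(a|s)) \right]\\
&= \mathop{\mathbbm{E}}_{a\sim \pi}\left[ F(s,a) - G(s,a) \right] \geq 0,
\end{aligned}
\end{eqnarray}
where the $\ln_{q}$ terms cancel because they do not depend on the function being evaluated, and the final inequality holds since $F(s,a) - G(s,a) \geq 0$ for all $a$ and $\pi(\cdot|s)$ is a nonnegative measure summing to one.

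Second, I would push this through the outer expectation over next states. Subtracting the first line of (\ref{def:tbe_op_q}) evaluated at $F$ and at $G$, the reward term $\mathbf{r}(s,a,s')$ cancels and leaves
\begin{eqnarray}
\small
\begin{aligned}
\left[\mathcal{T}_{q}^{\pi}F\right](s,a) - \left[\mathcal{T}_{q}^{\pi}G\right](s,a) = \gamma \mathop{\mathbbm{E}}_{s' \sim P}\left[ V_{F}(s') - V_{G}(s') \,\middle|\, s,a \right] \geq 0,
\end{aligned}
\end{eqnarray}
which is nonnegative because $\gamma > 0$, $P(\cdot|s,a)$ is a probability measure, and the integrand $V_{F}(s') - V_{G}(s')$ is nonnegative by the first step. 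Since this holds for all $(s,a)$, we conclude $\mathcal{T}_{q}^{\pi}F \succeq \mathcal{T}_{q}^{\pi}G$.

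There is essentially no hard part here: the whole argument rests on the single structural observation that the $q$-logarithmic penalty depends only on the fixed policy $\pi$ and not on the input function, so it drops out of every difference, reducing the claim to the monotonicity of expectation against the nonnegative measures $\pi(\cdot|s)$ and $P(\cdot|s,a)$ together with $\gamma > 0$. The only point worth stating carefully is that this reasoning is valid for every $q > 0$, since we never invoke any property of $\ln_{q}$ beyond its cancellation; this parallels the shift lemma (Lemma \ref{lem:pe_discounting}) and, together with it, supplies the ingredients needed for the contraction argument underlying Theorem \ref{thm:tpe}.
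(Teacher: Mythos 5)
Your proof is correct and follows essentially the same two-step route as the paper: first show $V_{F}\succeq V_{G}$ by cancelling the policy-dependent $\ln_{q}$ term and using monotonicity of the expectation over $\pi(\cdot|s)$, then propagate through the outer expectation over $P(\cdot|s,a)$ with $\gamma>0$. Your writeup is in fact cleaner, since the paper's displayed chain mistakenly writes the inequalities in the reversed direction (a typo), whereas your difference-based formulation makes the sign unambiguous.
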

\begin{proof}
For all $s,a$,
\begin{eqnarray}
\small
\begin{aligned}
V_{F}(s) &= \mathop{\mathbbm{E}}_{a\sim \pi}[F(s,a) - \ln_{q}(\pi(a|s))]< \mathop{\mathbbm{E}}_{a\sim \pi}[G(s,a) - \ln_{q}(\pi(a|s))] = V_{G}(s)\\
\left[\mathcal{T}_{q}^{\pi}F \right](s,a) &= \mathop{\mathbbm{E}}_{s' \sim P}[ \mathbf{r}(s,a,s') + \gamma V_{F}(s')|s,a]< \mathop{\mathbbm{E}}_{s' \sim P}[ \mathbf{r}(s,a,s') + \gamma V_{G}(s')|s,a]=\left[\mathcal{T}_{q}^{\pi}G \right](s,a)
\end{aligned}
\end{eqnarray}
\end{proof}

\begin{lemma}\label{lem:pe_contraction}
$\mathcal{T}_{q}^{\pi}$ is $\gamma$-contraction mapping in $(C(\mathcal{S}\times\mathcal{A},R),|\cdot|_{\infty})$
where $C(\mathcal{S}\times\mathcal{A},R)\triangleq\{ F: \mathcal{S}\times\mathcal{A}\rightarrow R\}$ and $|F-G|_{\infty} = \sup_{s,a}|F(s,a)-G(s,a)|$
\end{lemma}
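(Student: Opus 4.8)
The plan is to verify the standard contraction estimate $|\mathcal{T}_{q}^{\pi}F - \mathcal{T}_{q}^{\pi}G|_{\infty} \leq \gamma |F-G|_{\infty}$ directly from the definition (\ref{def:tbe_op_q}), taking an arbitrary pair $F,G \in C(\mathcal{S}\times\mathcal{A},\mathbb{R})$. The crucial observation is that the entropy correction $-\ln_{q}(\pi(a|s))$ appearing in $V_{F}$ is independent of the argument function, so it cancels when we subtract. First I would compute the difference of the induced state values: since the $\ln_{q}$ terms are identical,
\begin{eqnarray*}
\small
\begin{aligned}
V_{F}(s) - V_{G}(s) = \mathop{\mathbbm{E}}_{a\sim \pi}\left[F(s,a) - G(s,a)\right],
\end{aligned}
\end{eqnarray*}
and then bound it by moving the absolute value inside the expectation (Jensen / triangle inequality), giving $|V_{F}(s) - V_{G}(s)| \leq \mathop{\mathbbm{E}}_{a\sim\pi}|F(s,a)-G(s,a)| \leq |F-G|_{\infty}$ uniformly in $s$.

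Next I would propagate this bound through the reward-and-transition layer of the operator. Because $\mathbf{r}(s,a,s')$ is common to both $\mathcal{T}_{q}^{\pi}F$ and $\mathcal{T}_{q}^{\pi}G$, it too cancels, leaving only the discounted value term:
\begin{eqnarray*}
\small
\begin{aligned}
\left[\mathcal{T}_{q}^{\pi}F\right](s,a) - \left[\mathcal{T}_{q}^{\pi}G\right](s,a) = \gamma \mathop{\mathbbm{E}}_{s'\sim P}\left[V_{F}(s') - V_{G}(s')\right].
\end{aligned}
\end{eqnarray*}
Taking absolute values, pushing them inside the expectation over $s'$, and substituting the uniform bound from the first step yields $\left|\left[\mathcal{T}_{q}^{\pi}F\right](s,a) - \left[\mathcal{T}_{q}^{\pi}G\right](s,a)\right| \leq \gamma |F-G|_{\infty}$ for every $(s,a)$. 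Since the right-hand side is independent of $(s,a)$, taking the supremum over $\mathcal{S}\times\mathcal{A}$ on the left gives $|\mathcal{T}_{q}^{\pi}F - \mathcal{T}_{q}^{\pi}G|_{\infty} \leq \gamma|F-G|_{\infty}$, which is exactly $\gamma$-contraction with $\gamma \in (0,1)$.

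This argument is essentially the classical Bellman-operator contraction proof, so I do not expect any genuine obstacle; the only thing to be careful about is making explicit that the novel Tsallis ingredient, the $-\ln_{q}(\pi(a|s))$ penalty, plays no role in the contraction constant precisely because it does not depend on the iterate. It is worth remarking that this step is also what justifies invoking Theorem \ref{thm:tpe} (the uniqueness of the fixed point) back in the proof of the TBO equation, where $\mu^{\star}(s)$ was identified with $V^{\star}_{q}(s)$ by appealing to uniqueness of the TBE solution. Note also that, unlike the concavity and KKT arguments, this contraction estimate requires nothing about the sign of $q-1$ and holds for all $q>0$, consistent with the statement of Theorem \ref{thm:tpe}.
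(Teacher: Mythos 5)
Your proof is correct. It takes a slightly different route from the paper's: the paper first establishes two auxiliary lemmas --- a constant-shift property $\mathcal{T}_{q}^{\pi}(F + c\mathbf{1}) = \mathcal{T}_{q}^{\pi}F + \gamma c\mathbf{1}$ and monotonicity of $\mathcal{T}_{q}^{\pi}$ --- and then runs the standard sandwich argument ($G - d\mathbf{1} \preceq F \preceq G + d\mathbf{1}$ with $d = |F-G|_{\infty}$, apply the operator, and read off the bound). You instead exploit the fact that $\mathcal{T}_{q}^{\pi}$ is affine in its argument: the $-\ln_{q}(\pi(a|s))$ penalty and the reward both cancel in the difference $\mathcal{T}_{q}^{\pi}F - \mathcal{T}_{q}^{\pi}G$, leaving $\gamma$ times an expectation of $F-G$, which is bounded by $\gamma|F-G|_{\infty}$ via the triangle inequality. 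Your argument is more elementary and self-contained for this particular operator, and your observation that the entropy term plays no role in the contraction constant is exactly right and worth stating. The trade-off is that the direct-cancellation argument relies on the affine structure of $\mathcal{T}_{q}^{\pi}$ and would not carry over verbatim to the Tsallis Bellman optimality operator $\mathcal{T}_{q}$, where $V_{F}(s) = q\text{-max}_{a'}(F(s,a'))$ is nonlinear in $F$; the paper's monotonicity-plus-shift template applies uniformly to both operators (and is indeed reused for $\mathcal{T}_{q}$ later), which is why the paper structures the proof that way. Both proofs are valid for all $q>0$, and your remark that the sign of $q-1$ is irrelevant here is consistent with the paper.
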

\begin{proof}
Let $d = |F-G|_{\infty}$. The, $G - d\mathbf{1} \succeq F\succeq G + d\mathbf{1}$.
From Lemma \ref{lem:pe_monotone},
$\mathcal{T}_{q}^{\pi}(G + d\mathbf{1}) \succeq \mathcal{T}_{q}^{\pi}F \succeq \mathcal{T}_{q}^{\pi}(G - d\mathbf{1})$.
From Lemma \ref{lem:pe_discounting},
$\mathcal{T}_{q}^{\pi}G + \gamma d \mathbf{1} \succeq \mathcal{T}_{q}^{\pi}F \succeq \mathcal{T}_{q}^{\pi}G - \gamma d \mathbf{1}$.
Then,$\gamma d \mathbf{1} \succeq \mathcal{T}_{q}^{\pi}F - \mathcal{T}_{q}^{\pi}G \succeq - \gamma d \mathbf{1}$.
Finally,
$$
|\mathcal{T}_{q}^{\pi}F-\mathcal{T}_{q}^{\pi}G|_{\infty} \leq \gamma d = \gamma |F-G|_{\infty}.
$$
Consequently,  $\mathcal{T}_{q}^{\pi}$ is $\gamma$-contraction.
\end{proof}

\subsubsection{Proof of Tsallis Policy Evaluation}
\begin{proof}[Proof of Theorem \ref{thm:tpe}]
From Lemma \ref{lem:pe_contraction}, $\mathcal{T}_{q}^{\pi}$ is $\gamma$-contraction and has an unique fixed point $F_{*}=\mathcal{T}_{q}^{\pi}F_{*}$ from the Banach fixed point theorem.
Then, for any initial function $F$, a sequence of $F_{k}$ converges to the fixed point, i.e., $F_{*}=\lim_{k\to\infty}(\mathcal{T}_{q}^{\pi})^{k}F_{0}$.
The fixed point $F_{*}$ satisfies a TBE equation as follows:
\begin{eqnarray}
\begin{aligned}
F_{*}(s,a) &=  \mathop{\mathbbm{E}}_{s'\sim P}[ \mathbf{r}(s,a,s') + \gamma V_{F_{*}}(s')|s,a] \\
V_{F_{*}}(s) &= \mathop{\mathbbm{E}}_{a\sim \pi}[F_{*}(s,a) - \ln_{q}(\pi(a|s))],
\end{aligned}
\end{eqnarray}
Since $F_{*}$ is unique, $F_{*}$ is the only function which satisfies a TBE equation. Thus, $F_{*}=Q_{q}^{\pi}$.
\end{proof}

\subsection{Tsallis Policy Improvement}
The value function evaluated from Tsallis policy evaluation
can be employed to update the policy distribution.
In policy improvement step,
the policy will be updated to maximize 
\begin{eqnarray}
\begin{aligned}\label{def:tp_imp}
\forall s, \, \pi_{k+1}(\cdot|s) \triangleq \arg\max_{\pi(\cdot|s)} &\mathop{\mathbbm{E}}_{a\sim \pi}[ Q^{\pi_{k}}_{q}(s,a) - \ln_{q}(\pi(a|s))|s]
\end{aligned}
\end{eqnarray}
\begin{theorem}[Tsallis Policy Improvement]\label{thm:tpi}
Let $\pi_{k+1}$ be the updated policy from (\ref{def:tp_imp}) using $Q_{q}^{\pi_{k}}$.
For all $(s,a)\in\mathcal{S}\times\mathcal{A}$, $Q^{\pi_{k+1}}_{q}(s,a)$ is greater than $Q^{\pi_{k}}_{q}(s,a)$.
\end{theorem}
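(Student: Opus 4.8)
The plan is to reproduce the classical policy improvement argument, with the ordinary expected reward replaced by the entropy-augmented objective and the greedy step replaced by the $q$-regularized maximization in (\ref{def:tp_imp}). The crux is a single-step inequality that comes directly from the defining property of $\pi_{k+1}$. Since $\pi_{k+1}(\cdot|s)$ maximizes $\mathop{\mathbbm{E}}_{a\sim\pi}[Q^{\pi_{k}}_{q}(s,a)-\ln_{q}(\pi(a|s))]$ over all feasible policies and $\pi_{k}(\cdot|s)$ is itself feasible, we get, for every $s$,
\begin{equation*}
\mathop{\mathbbm{E}}_{a\sim\pi_{k+1}}\!\left[Q^{\pi_{k}}_{q}(s,a)-\ln_{q}(\pi_{k+1}(a|s))\right] \;\geq\; \mathop{\mathbbm{E}}_{a\sim\pi_{k}}\!\left[Q^{\pi_{k}}_{q}(s,a)-\ln_{q}(\pi_{k}(a|s))\right] \;=\; V^{\pi_{k}}_{q}(s),
\end{equation*}
where the last equality is exactly the second line of the TBE equation (\ref{eqn:tbe}) applied to $\pi_{k}$. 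This is the only place where the maximization (\ref{def:tp_imp}) is used.

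Next I would lift this pointwise bound to an operator inequality and iterate it. Expanding $Q^{\pi_{k}}_{q}(s,a)=\mathop{\mathbbm{E}}_{s'\sim P}[\mathbf{r}(s,a,s')+\gamma V^{\pi_{k}}_{q}(s')]$ and substituting the single-step bound for each $V^{\pi_{k}}_{q}(s')$, the resulting right-hand side is precisely $[\mathcal{T}_{q}^{\pi_{k+1}}Q^{\pi_{k}}_{q}](s,a)$ by the definition (\ref{def:tbe_op_q}) of the TBE operator, so that $\mathcal{T}_{q}^{\pi_{k+1}}Q^{\pi_{k}}_{q}\succeq Q^{\pi_{k}}_{q}$. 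Applying monotonicity of $\mathcal{T}_{q}^{\pi_{k+1}}$ (Lemma~\ref{lem:pe_monotone}) repeatedly then yields a nondecreasing chain $Q^{\pi_{k}}_{q}\preceq \mathcal{T}_{q}^{\pi_{k+1}}Q^{\pi_{k}}_{q}\preceq (\mathcal{T}_{q}^{\pi_{k+1}})^{2}Q^{\pi_{k}}_{q}\preceq\cdots$, and by Theorem~\ref{thm:tpe} the sequence $(\mathcal{T}_{q}^{\pi_{k+1}})^{n}Q^{\pi_{k}}_{q}$ converges to the unique fixed point $Q^{\pi_{k+1}}_{q}$. Passing to the limit gives $Q^{\pi_{k+1}}_{q}\succeq Q^{\pi_{k}}_{q}$, i.e.\ $Q^{\pi_{k+1}}_{q}(s,a)\geq Q^{\pi_{k}}_{q}(s,a)$ for all $(s,a)$, which is the claim.

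The step I expect to be the main obstacle is justifying the passage to the limit rather than the algebra, since a naive unrolling would leave a residual tail of order $\gamma^{n}$ to be controlled by hand. Routing the iteration through $\mathcal{T}_{q}^{\pi_{k+1}}$ avoids this: the monotonicity lemma keeps the inequality intact at every iterate, and the $\gamma$-contraction underlying Theorem~\ref{thm:tpe} (which itself relies on boundedness of $\mathbf{r}$ and of $S_{q}$ on the finite action space for $q>0$) guarantees convergence to $Q^{\pi_{k+1}}_{q}$, so the tail term never needs to be estimated explicitly. A secondary point worth checking is that (\ref{def:tp_imp}) admits a genuine maximizer for every $q>0$; this follows because, by Theorem~\ref{thm:tbo}, the maximizer has the closed $q$-exponential form normalized by $\psi_{q}$, so $\pi_{k+1}$ is well defined and the comparison with the feasible $\pi_{k}$ in the first paragraph is legitimate.
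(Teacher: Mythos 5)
Your proposal is correct, and its first (and decisive) step is exactly the one the paper uses: since $\pi_{k}(\cdot|s)$ is feasible for the maximization in (\ref{def:tp_imp}), the updated policy satisfies $\mathop{\mathbbm{E}}_{a\sim\pi_{k+1}}[Q^{\pi_{k}}_{q}(s,a)-\ln_{q}(\pi_{k+1}(a|s))]\geq V^{\pi_{k}}_{q}(s)$ pointwise in $s$. Where you diverge is in how this single-step bound is propagated. The paper unrolls the Bellman expectation along trajectories explicitly, substituting the bound at each time step and producing a telescoping chain whose residual $\gamma^{t+1}\mathop{\mathbbm{E}}_{s_{t+1}\sim P,\pi_{k+1}}[V^{\pi_{k}}_{q}(s_{t+1})]$ must be argued to vanish as $t\to\infty$ (which requires boundedness of the value function). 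You instead recast the bound as the operator inequality $\mathcal{T}_{q}^{\pi_{k+1}}Q^{\pi_{k}}_{q}\succeq Q^{\pi_{k}}_{q}$, iterate via the monotonicity of $\mathcal{T}_{q}^{\pi_{k+1}}$ (Lemma~\ref{lem:pe_monotone}), and pass to the limit using the convergence of Tsallis policy evaluation (Theorem~\ref{thm:tpe}). The two finishes are mathematically equivalent — the unrolling is just the operator iteration written out — but your route is cleaner in that the tail term never appears: the contraction argument already established in Theorem~\ref{thm:tpe} absorbs it, and all you need is that weak inequalities survive limits. The paper's version is more self-contained at this point in the text (it does not lean on the evaluation theorem), at the cost of having to handle the residual by hand. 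Your closing remark about well-definedness of the maximizer via the $q$-exponential form is a detail the paper leaves implicit; it is a legitimate point and your justification is adequate.
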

\begin{proof}, unless $\pi_{k} = \pi_{k+1}$
Since $\pi_{k+1}$ is updated by maximizing Equation (\ref{def:tp_imp}) and the maximization in Equation (\ref{def:tp_imp}) is concave with respect to $\pi$, the following inequality holds 
\begin{equation}\label{eqn:tp_imp_res}
\mathop{\mathbbm{E}}_{a \sim \pi_{k+1}}\left[Q^{\pi_{k}}_{q}(s,a) - \ln_{q}(\pi_{k+1}(a|s))\middle|s\right] \geq \mathop{\mathbbm{E}}_{a \sim \pi_{k}}\left[Q^{\pi_{k}}_{q}(s,a) - \ln_{q}(\pi_{k}(a|s))\middle|s\right] = V^{\pi_{k}}_{q}(s),
\end{equation}
where the equality holds when $\pi_{k+1} = \pi_{k}$.
This inequality induces a performance improvement,
\begin{eqnarray}\label{eqn:quality}
\small
\begin{aligned}
Q^{\pi_{k}}_{q}(s,a) &= \mathop{\mathbbm{E}}_{  s_{1} \sim P}\left[ r(s_{0},a_{0},s_{1}) + \gamma V^{\pi_{k}}_{q}(s_{1})\middle| s_{0}=s, a_{0}=a \right]\\
&\leq \mathop{\mathbbm{E}}_{  s_{1} \sim P}\left[ r(s_{0},a_{0},s_{1})\middle| s_{0}=s, a_{0}=a \right]\\
&\;\;\;+ \gamma \mathop{\mathbbm{E}}_{ s_{1},a_{1} \sim P,\pi_{k+1}}\left[Q^{\pi_{k}}_{q}(s_{1},a_{1}) - \ln_{q}(\pi_{k+1}(a_{1}|s_{1}))\middle|s_{0}=s, a_{0}=a\right]\\
&= \mathop{\mathbbm{E}}_{ s_{1} \sim P}\left[ r(s_{0},a_{0},s_{1})\middle| s_{0}=s, a_{0}=a \right]\\
&\;\;\;+ \gamma \mathop{\mathbbm{E}}_{ s_{1:2},a_{1} \sim P,\pi_{k+1}}\left[ r(s_{1},a_{1},s_{2}) - \ln_{q}(\pi_{k+1}(a_{1}|s_{1})) + \gamma V^{\pi_{k}}_{q}(s_{2}) \middle|s_{0}=s, a_{0}=a\right] \\
&\leq \mathop{\mathbbm{E}}_{  s_{1} \sim P}\left[ r(s_{0},a_{0},s_{1})\middle| s_{0}=s, a_{0}=a \right]\\
&\;\;\;+ \gamma \mathop{\mathbbm{E}}_{  s_{1:t+1},a_{1:t} \sim P,\pi_{k+1}}\left[ \sum_{k=1}^{t} \gamma^{k-1} \left(r(s_{k},a_{k},s_{k+1}) -  \ln_{q}(\pi_{k+1}(a_{k}|s_{k}) \right)\middle|s_{0}=s, a_{0}=a\right]\\
&\;\;\; + \gamma^{t+1} \mathop{\mathbbm{E}}_{  s_{t+1} \sim P, \pi_{k+1}}\left[ V^{\pi_{k}}_{q}(s_{t+1}) \middle| s_{0}=s, a_{0}=a \right] \\
&\vdots \\
&\leq \mathop{\mathbbm{E}}_{  s_{1} \sim P}\left[ r(s_{0},a_{0},s_{1}) + \gamma V_{q}^{\pi_{k+1}}(s_{1})\middle| s_{0}=s, a_{0}=a \right] = Q^{\pi_{k+1}}_{q}(s,a),
\end{aligned}
\end{eqnarray}
where $\gamma^{t+1} \mathop{\mathbbm{E}}_{  s_{t+1} \sim P, \pi_{k+1}}\left[ V^{\pi_{k}}_{q}(s_{t+1}) \middle| s_{0}=s, a_{0}=a \right]\rightarrow0$ as $t\rightarrow \infty$.
\end{proof}

\begin{theorem}[Optimality of TPI]\label{thm:tpi}
TPI converges into an optimal policy and value of a Tsallis MDP.
\end{theorem}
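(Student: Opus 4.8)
The plan is to combine the monotone improvement guaranteed by Tsallis policy improvement with the optimality characterization provided by the TBO equations (Theorem \ref{thm:tbo}). First I would establish that the action-value functions produced by alternating evaluation and improvement form a monotonically non-decreasing sequence: by the Tsallis policy improvement theorem, each update satisfies $Q_q^{\pi_{k+1}}(s,a) \ge Q_q^{\pi_k}(s,a)$ for every $(s,a)$. Next I would show the sequence is uniformly bounded above. Since $\mathbf{r}$ is bounded and, on a finite action space, the Tsallis entropy obeys $0 \le S_q(\pi(\cdot|s)) \le -\ln_q(1/|\mathcal{A}|)$ for $q>0$, the per-step objective is bounded, so with $\gamma<1$ each $Q_q^{\pi_k}$ is bounded by $(1-\gamma)^{-1}$ times a finite constant. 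A monotone, bounded sequence converges pointwise, hence $Q_q^{\pi_k}\to Q_q^{\infty}$, $V_q^{\pi_k}\to V_q^{\infty}$, and $\pi_k\to\pi_\infty$.

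The crux is to verify that this limit is a fixed point of the improvement step and that the fixed point satisfies the TBO equations. At the limit, a further improvement step cannot strictly increase the value, so the maximizer in (\ref{def:tp_imp}) coincides with $\pi_\infty$; equivalently, for every state $s$ the inequality (\ref{eqn:tp_imp_res}) holds with equality, so that
\begin{equation*}
\mathop{\mathbbm{E}}_{a\sim \pi_\infty}[Q_q^{\pi_\infty}(s,a) - \ln_q(\pi_\infty(a|s))] = \max_{\pi(\cdot|s)} \mathop{\mathbbm{E}}_{a\sim \pi}[Q_q^{\pi_\infty}(s,a) - \ln_q(\pi(a|s))] = \mathop{q\text{-max}}_{a}(Q_q^{\pi_\infty}(s,a)).
\end{equation*}
The last equality is precisely the definition of $q$-maximum, and the bandit analysis (Proposition \ref{rem:opt_sol_bndt}) identifies the unique maximizer as $\pi_\infty(a|s) = \exp_q\!\left(Q_q^{\pi_\infty}(s,a)/q - \psi_q(Q_q^{\pi_\infty}(s,\cdot)/q)\right)$, while the left equality shows $V_q^{\pi_\infty}(s) = \mathop{q\text{-max}}_{a}(Q_q^{\pi_\infty}(s,a))$.

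Finally I would close the argument using the TBE relation and the sufficiency of TBO. By Theorem \ref{thm:tpe} the limiting value functions satisfy $Q_q^{\pi_\infty}(s,a) = \mathbb{E}_{s'\sim P}[\mathbf{r}(s,a,s') + \gamma V_q^{\pi_\infty}(s')\,|\,s,a]$. Combined with the two identities above, the triple $(Q_q^{\pi_\infty}, V_q^{\pi_\infty}, \pi_\infty)$ satisfies all three TBO equations (\ref{eqn:tbo}). Because Theorem \ref{thm:tbo} asserts these equations are \emph{sufficient} for optimality, it follows that $\pi_\infty = \pi_q^{\star}$, $V_q^{\pi_\infty} = V_q^{\star}$, and $Q_q^{\pi_\infty} = Q_q^{\star}$, proving that TPI converges to the optimal policy and value.

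The hard part will be rigorously justifying the passage to the limit: specifically, showing that $\pi_\infty$ is a genuine fixed point of the improvement operator (not merely that the value sequence converges), and that the per-state maximization is attained with a maximizer that depends continuously on $Q$, so that the limiting maximizer is exactly the $q$-exponential of $Q_q^{\infty}$. This relies on compactness of the simplex and continuity of the objective in $q>0$, which together guarantee the maximizer is well defined and that taking limits commutes with the improvement map.
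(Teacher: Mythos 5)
Your proposal is correct and follows essentially the same route as the paper: monotonicity from the policy improvement theorem plus boundedness gives convergence, equality in the improvement inequality at the limit yields $V_q^{\pi_\infty}(s)=\mathop{q\text{-max}}_a Q_q^{\pi_\infty}(s,a)$ with $\pi_\infty$ the $q$-exponential maximizer, and combining with the TBE relation shows the limit satisfies the TBO equations, whose sufficiency (Theorem~\ref{thm:tbo}) gives optimality. Your closing remark about rigorously justifying the passage to the limit (that $\pi_\infty$ is genuinely a fixed point of the improvement map) identifies a subtlety the paper's own proof also glosses over, so you are not missing anything relative to the published argument.
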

\begin{proof}
From the fact that reward function $\mathbf{r}$ has upper bound $\mathbf{r}_{\max}$ and $\mathcal{S}\times\mathcal{A}$ is bounded,
$Q^{\pi_{k}}_{q}$ is also bouned.
Then, since a sequence of $Q^{\pi_{k}}_{q}$ is monotonically non-decreasing and bounded, it converges to some point $\pi_{*}$.
Now, proof will be done by showing $\pi_{*}=\pi^{\star}_{q}$.
First, from the policy improvement,
We have $\pi_{*}(\cdot|s) = \arg\max_{\pi(\cdot|s)} \mathop{\mathbbm{E}}_{a\sim \pi}[ Q^{\pi_{*}}_{q}(s,a) - \ln_{q}(\pi(a|s))|s]$ and at $\pi_{*}$, the equality in Equation (\ref{eqn:tp_imp_res}) holds, i.e., $V^{\pi_{*}}_{q}(s)=\mathop{\mathbbm{E}}_{a \sim \pi_{*}}\left[Q^{\pi_{*}}_{q}(s,a) - \alpha \ln_{q}(\pi_{*}(a|s))\middle|s\right]$.
Then, the following equality holds,
$$
V^{\pi_{*}}_{q}(s)=\max_{\pi(\cdot|s)}\mathop{\mathbbm{E}}_{a \sim \pi}\left[Q^{\pi_{*}}_{q}(s,a) - \alpha \ln_{q}(\pi(a|s))\middle|s\right],
$$
which is equivalent to $V^{\pi_{*}}_{q}(s)=\mathop{q\text{-max}}_{a'}Q^{\pi_{*}}(s,a')$.
It can be also known that $\pi_{*}$ is the solution of $q$-maximum.
From the TBE equation, $Q^{\pi_{*}}_{q}(s,a) = \mathop{\mathbbm{E}}_{s'\sim P}[ \mathbf{r}(s,a,s') + \gamma V^{\pi_{*}}_{q}(s')|s,a]$.
Thus, $\pi_{*}$ satisfies a TBO equation and by Theorem \ref{thm:tbo}, $\pi_{*}=\pi^{\star}_{q}$.
\end{proof}

\subsection{Tsallis Value Iteration}
Tsallis value iteration is derived from the optimality equation.
From TBO equation, Tsallis Bellman optimality operator is defined by
\begin{eqnarray}
\begin{aligned}\label{def:tbo_op}
\left[\mathcal{T}_{q}F \right](s,a) &\triangleq \mathop{\mathbbm{E}}_{s'\sim P}\left[\mathbf{r}(s,a,s') + \gamma V_{F}(s)\middle|s,a\right]\\
V_{F}(s) &\triangleq \mathop{q\text{-max}}_{a'}\left(F(s,a')\right).
\end{aligned}
\end{eqnarray}
Then, a Tsallis value iteration is defined by repeatedly applying TBO operator: 
$$F_{k+1} = \mathcal{T}_{q}F_{k}.$$

\begin{theorem}\label{thm:optimality}
For any fixed entropic-index $q\geq1$, consider Tsallis Bellman optimality (TBO) operator $\mathcal{T}_{q}$,
and for an arbitrary initial function $F_{0}$ over $\mathcal{S}\times\mathcal{A}$, define Tsallis value iteration $F_{k+1}=\mathcal{T}_{q}F_{k}$. Then, $F_{k}$ converges into the $Q_{q}^{\star}$.
\end{theorem}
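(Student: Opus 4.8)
The plan is to prove Theorem \ref{thm:optimality} by the same contraction-mapping strategy used for Tsallis policy evaluation (Theorem \ref{thm:tpe}), with the entropic-expectation value backup replaced by the $q$-maximum backup. The core is to show that the TBO operator $\mathcal{T}_{q}$ is a $\gamma$-contraction on $(C(\mathcal{S}\times\mathcal{A},\mathbb{R}),|\cdot|_{\infty})$, to invoke the Banach fixed point theorem to obtain a unique fixed point $F_{*}$ with $F_{k}\to F_{*}$, and then to identify $F_{*}$ with $Q_{q}^{\star}$ through the TBO equation of Theorem \ref{thm:tbo}.

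First I would record two elementary properties of the $q$-maximum that play the role of Lemmas \ref{lem:pe_discounting} and \ref{lem:pe_monotone}. From the definition $\mathop{q\text{-max}}_{a}(f(a))=\max_{P\in\Delta}[\mathbb{E}_{a\sim P}[f(a)]+S_{q}(P)]$ together with $\sum_{a}P(a)=1$, adding a constant $c$ to $f$ merely shifts the value, so $\mathop{q\text{-max}}_{a}(f(a)+c)=\mathop{q\text{-max}}_{a}(f(a))+c$; moreover, if $f(a)\geq g(a)$ for all $a$, then the maximand $\mathbb{E}_{a\sim P}[f(a)]+S_{q}(P)$ dominates $\mathbb{E}_{a\sim P}[g(a)]+S_{q}(P)$ pointwise in $P$, hence $\mathop{q\text{-max}}_{a}(f(a))\geq\mathop{q\text{-max}}_{a}(g(a))$. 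These two facts immediately yield the constant-shift identity $\mathcal{T}_{q}(F+c\mathbf{1})=\mathcal{T}_{q}F+\gamma c\mathbf{1}$ and the monotonicity $F\succeq G\Rightarrow\mathcal{T}_{q}F\succeq\mathcal{T}_{q}G$, by the same short arguments as in the policy-evaluation proof; the only change is that $V_{F}$ is now the $q$-maximum of $F(s,\cdot)$ rather than an entropic expectation over a fixed policy.

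Next I would combine these to establish the contraction. Setting $d=|F-G|_{\infty}$ gives $G-d\mathbf{1}\preceq F\preceq G+d\mathbf{1}$; applying monotonicity and then the constant-shift identity sandwiches $\mathcal{T}_{q}F$ between $\mathcal{T}_{q}G-\gamma d\mathbf{1}$ and $\mathcal{T}_{q}G+\gamma d\mathbf{1}$, so that $|\mathcal{T}_{q}F-\mathcal{T}_{q}G|_{\infty}\leq\gamma|F-G|_{\infty}$. The Banach fixed point theorem then guarantees a unique fixed point $F_{*}=\mathcal{T}_{q}F_{*}$, to which the iterates $F_{k}$ converge from any initial $F_{0}$.

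The final step, which I expect to be the only place requiring care rather than routine estimation, is identifying the fixed point. Unfolding $F_{*}=\mathcal{T}_{q}F_{*}$ shows that $F_{*}$ together with $V_{F_{*}}(s)=\mathop{q\text{-max}}_{a'}(F_{*}(s,a'))$ satisfies exactly the first two TBO equations of Theorem \ref{thm:tbo}. Since Theorem \ref{thm:tbo} states that the optimal value $Q_{q}^{\star}$ necessarily satisfies these equations, $Q_{q}^{\star}$ is itself a fixed point of $\mathcal{T}_{q}$; by the uniqueness granted by the contraction, $F_{*}=Q_{q}^{\star}$, and therefore $F_{k}\to Q_{q}^{\star}$. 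The subtlety to flag is that this identification is not circular: the contraction supplies uniqueness of the fixed point independently of Theorem \ref{thm:tbo}, while Theorem \ref{thm:tbo} supplies only that $Q_{q}^{\star}$ lies among the fixed points.
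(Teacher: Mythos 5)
Your proposal is correct and follows essentially the same route as the paper: establish the constant-shift and monotonicity properties of the $q$-maximum, lift them to $\mathcal{T}_{q}$, deduce the $\gamma$-contraction via the sandwich argument, apply the Banach fixed point theorem, and identify the unique fixed point with $Q_{q}^{\star}$ through the TBO equation of Theorem \ref{thm:tbo}. Your closing remark on non-circularity (contraction gives uniqueness of the fixed point, Theorem \ref{thm:tbo} only places $Q_{q}^{\star}$ among the fixed points) is a slightly more careful articulation of the paper's terser appeal to the TBO equation being a necessary and sufficient condition.
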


Before proving Theorem \ref{thm:optimality},
we first drive the properties of $q$-maximum and $\mathcal{T}_{q}$.

\begin{lemma}\label{lem:qmax_prop}
For any function $f(x)$ defined on finite input space $\mathcal{X}$ and $c\in R$,
The following equality hold:
\begin{enumerate}
\item $\mathop{q\textnormal{-max}}_{x}( f(x)  + c\mathbf{1}) = \mathop{q\textnormal{-max}}_{x}( f(x) )+c$ \label{prop1}
\item $\mathop{q\textnormal{-max}}_{x}( f(x))$ is monotone. If $f \preceq g$, then $\mathop{q\textnormal{-max}}_{x}(x) \leq \mathop{q\textnormal{-max}}_{x}( y)$ \label{prop2}
\end{enumerate}
where $\mathbf{1}$ is a constant function whose value is one. 
\end{lemma}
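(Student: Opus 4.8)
The plan is to derive both properties directly from the defining expression $\mathop{q\textnormal{-max}}_{x}(f(x)) = \max_{P\in\Delta}\left[\mathop{\mathbbm{E}}_{X\sim P}[f(X)] + S_{q}(P)\right]$, since neither claim requires identifying the optimizing distribution or computing the $q$-potential function $\psi_{q}$. The whole argument rests on the observation that the entropy term $S_{q}(P)$ is common to the objectives being compared and is therefore inert under both manipulations.

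For Property \ref{prop1}, I would first note that for any $P\in\Delta$ one has $\mathop{\mathbbm{E}}_{X\sim P}[f(X)+c] = \mathop{\mathbbm{E}}_{X\sim P}[f(X)] + c$, because $P$ is a probability distribution and $\sum_{x}P(x)=1$. Since the constant $c$ does not depend on $P$ and the entropy term is untouched, $c$ can be pulled outside the maximization over $\Delta$, giving $\max_{P}\left[\mathop{\mathbbm{E}}_{X\sim P}[f(X)] + c + S_{q}(P)\right] = \max_{P}\left[\mathop{\mathbbm{E}}_{X\sim P}[f(X)] + S_{q}(P)\right] + c$, which is precisely $\mathop{q\textnormal{-max}}_{x}(f(x)) + c$.

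For Property \ref{prop2}, suppose $f \preceq g$, i.e., $f(x)\leq g(x)$ for every $x\in\mathcal{X}$. For any fixed $P\in\Delta$, nonnegativity of the probabilities yields $\mathop{\mathbbm{E}}_{X\sim P}[f(X)] \leq \mathop{\mathbbm{E}}_{X\sim P}[g(X)]$, and adding the common term $S_{q}(P)$ preserves this inequality. Hence for every $P$ the left-hand objective is bounded above by $\max_{P'}\left[\mathop{\mathbbm{E}}_{X\sim P'}[g(X)] + S_{q}(P')\right] = \mathop{q\textnormal{-max}}_{x}(g(x))$, and taking the maximum over $P$ on the left then gives $\mathop{q\textnormal{-max}}_{x}(f(x)) \leq \mathop{q\textnormal{-max}}_{x}(g(x))$.

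Both arguments are elementary and mirror the standard proofs for the ordinary $\max$ operator and the log-sum-exp (soft-max) operator; the abstraction to $q$-maximum changes nothing because $S_{q}(P)$ appears identically in both objectives. Consequently there is no genuine obstacle here — the only point to keep in mind is that the expectation is taken against a bona fide distribution, so that the constant passes through cleanly in Property \ref{prop1} and the weights remain nonnegative in Property \ref{prop2}. These two properties will then supply the discounting and monotonicity ingredients needed to show that $\mathcal{T}_{q}$ is a $\gamma$-contraction, exactly paralleling the Tsallis policy evaluation argument for $\mathcal{T}_{q}^{\pi}$.
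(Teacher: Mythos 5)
Your proof is correct and follows essentially the same route as the paper's: Property \ref{prop1} by linearity of expectation and pulling the constant out of the maximization, and Property \ref{prop2} by comparing the objectives pointwise over $\Delta$ (the paper evaluates at the maximizer $P^{\star}(f)$ rather than bounding uniformly before maximizing, but this is the same argument). No gaps.
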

\begin{proof}
For property \ref{prop1},
\begin{eqnarray}
\begin{aligned}
\mathop{q\textnormal{-max}}_{x}(f(x) + c\mathbf{1}) &=  \max_{ P \in \Delta } \left[ \mathop{\mathbbm{E}}_{X\sim P}\left[ f(X)  + c\mathbf{1}(X) \right] + S_{q} (P)\right]\\
&=  \max_{ P \in \Delta } \left[ \mathop{\mathbbm{E}}_{X\sim P}\left[ f(X) \right] + c + S_{q} (P)\right]\\
&=  \max_{ P \in \Delta } \left[ \mathop{\mathbbm{E}}_{X\sim P}\left[ f(X) \right] + S_{q} (P)\right] + c=\mathop{q\textnormal{-max}}_{x}(f(x))+ c
\end{aligned}
\end{eqnarray}
For property \ref{prop2},
\begin{eqnarray}
\begin{aligned}
\mathop{q\textnormal{-max}}_{x}(f(x)) &=\max_{ P \in \Delta } \left[ \mathop{\mathbbm{E}}_{X\sim P}\left[ f(X) \right] + S_{q} (P)\right]\\
&=\mathop{\mathbbm{E}}_{X\sim P^{\star}(f)}\left[ f(X) \right] + S_{q} (P^{\star}(f))\leq \mathop{\mathbbm{E}}_{X\sim P^{\star}(f)}\left[ g(X) \right] + S_{q} (P^{\star}(f)) \;\;\; (\because f \preceq g) \\
&\leq\max_{ P' \in \Delta } \left[ \mathop{\mathbbm{E}}_{X\sim P'}\left[ g(X) \right] + S_{q} (P')\right]= \mathop{q\textnormal{-max}}_{x}(f(x)),
\end{aligned}
\end{eqnarray}
where $P^{\star}(f)$ indicates the optimal distribution of $q\text{-max}_{x}(f(x))$.
\end{proof}

\begin{lemma}\label{lemma:discounting}
For $F : \mathcal{S}\times\mathcal{A} \rightarrow R$ and $c \in R$, 
$ \mathcal{T}_{q}\left( F + c \mathbf{1}\right) = \mathcal{T}_{q}F + \gamma c \mathbf{1} $
where $\mathbf{1}:\mathcal{S}\times\mathcal{A} \rightarrow 1$
\end{lemma}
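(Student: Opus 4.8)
The plan is to imitate the proof of Lemma~\ref{lem:pe_discounting}, replacing the averaging step $\mathop{\mathbbm{E}}_{a\sim\pi}[\cdot]$ by the $q$-maximum and invoking its constant-shift property. The added constant $c\mathbf{1}$ enters the operator $\mathcal{T}_{q}$ only through the auxiliary value $V_{F}$, so once I establish that $V_{F+c\mathbf{1}}=V_{F}+c$, the discount factor $\gamma$ appearing in front of $V_{F}$ in the definition of $\mathcal{T}_{q}$ will turn this into the asserted $\gamma c$ shift.

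First I would compute $V_{F+c\mathbf{1}}$ directly from its definition $V_{F}(s)\triangleq\mathop{q\text{-max}}_{a'}(F(s,a'))$. For each state $s$ we have $V_{F+c\mathbf{1}}(s)=\mathop{q\text{-max}}_{a'}(F(s,a')+c\mathbf{1})$, and property~\ref{prop1} of Lemma~\ref{lem:qmax_prop} gives $\mathop{q\text{-max}}_{a'}(F(s,a')+c\mathbf{1})=\mathop{q\text{-max}}_{a'}(F(s,a'))+c=V_{F}(s)+c$. Thus $V_{F+c\mathbf{1}}=V_{F}+c$ pointwise.

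Next I would substitute this identity into the TBO operator (\ref{def:tbo_op}). Since the reward term $\mathbf{r}(s,a,s')$ does not depend on $F$ and the expectation over $s'\sim P$ is linear, for every $(s,a)$ one obtains $[\mathcal{T}_{q}(F+c\mathbf{1})](s,a)=\mathop{\mathbbm{E}}_{s'\sim P}[\mathbf{r}(s,a,s')+\gamma V_{F+c\mathbf{1}}(s')\mid s,a]=\mathop{\mathbbm{E}}_{s'\sim P}[\mathbf{r}(s,a,s')+\gamma V_{F}(s')\mid s,a]+\gamma c=[\mathcal{T}_{q}F](s,a)+\gamma c$, which is exactly the claimed equality $\mathcal{T}_{q}(F+c\mathbf{1})=\mathcal{T}_{q}F+\gamma c\mathbf{1}$.

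There is essentially no obstacle here: the whole content reduces to the constant-shift property of the $q$-maximum, already established in Lemma~\ref{lem:qmax_prop}. The only point deserving care is bookkeeping of the two scales, namely that the shift inside $V_{F}$ is by $c$ while the discount $\gamma$ multiplying $V_{F}$ rescales it to $\gamma c$ in the final identity; this is what separates the true statement from the naive guess $\mathcal{T}_{q}(F+c\mathbf{1})=\mathcal{T}_{q}F+c\mathbf{1}$. This lemma will then feed, together with the monotonicity of $\mathcal{T}_{q}$ (which follows from property~\ref{prop2} of Lemma~\ref{lem:qmax_prop}), into the $\gamma$-contraction argument for $\mathcal{T}_{q}$, paralleling the role that Lemmas~\ref{lem:pe_discounting}--\ref{lem:pe_contraction} played for $\mathcal{T}_{q}^{\pi}$ in the proof of Tsallis policy evaluation.
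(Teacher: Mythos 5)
Your proof is correct and follows exactly the paper's own argument: apply the constant-shift property of the $q$-maximum (Lemma~\ref{lem:qmax_prop}, property~\ref{prop1}) to get $V_{F+c\mathbf{1}}=V_{F}+c$, then substitute into the definition of $\mathcal{T}_{q}$ so that the discount factor converts the shift to $\gamma c$. No differences worth noting.
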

\begin{proof}
For all $s,a$,
\begin{eqnarray}
\begin{aligned}
V_{F+c\mathbf{1}}(s) &= \mathop{q\text{-max}}_{a'}\left(F(s,a')+c\right) = \mathop{q\text{-max}}_{a'}\left(F(s,a')\right)+c = V_{F}(s)+c\\
\left[\mathcal{T}_{q}F + c \mathbf{1}\right](s,a) &= \mathop{\mathbbm{E}}_{s'\sim P}\left[r(s,a,s') + \gamma V_{F+c\mathbf{1}}(s')\middle|s,a\right]\\
 &= \mathop{\mathbbm{E}}_{s'\sim P}\left[r(s,a,s') + \gamma  V_{F}(s') + \gamma c\middle|s,a\right]\\
 &= \mathop{\mathbbm{E}}_{s'\sim P}\left[r(s,a,s') + \gamma V_{F}(s')\middle|s,a\right] + \gamma c = \left[\mathcal{T}_{q}F\right](s,a)  + \gamma c\\
\end{aligned}
\end{eqnarray}
\end{proof}

\begin{lemma}\label{lemma:monotone}
For $F, G : \mathcal{S}\times\mathcal{A} \rightarrow R$ and $F \succeq G$, 
$\mathcal{T}_{q}\left(F\right) \succeq \mathcal{T}_{q}\left(G\right)$
where $\succeq$ indicates $F(s,a)\geq G(s,a)$ for all $s,a$.
\end{lemma}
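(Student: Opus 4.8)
The plan is to mirror the monotonicity argument already used for the TBE operator in Lemma \ref{lem:pe_monotone}, but with the expectation over actions replaced by the $q$-maximum. The argument naturally factors through the intermediate state-value function $V_F$ defined in (\ref{def:tbo_op}): first establish the ordering at the level of $V_F$, then propagate it through the reward-plus-discounted-value expectation that defines $\mathcal{T}_q$.

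First I would show that $F \succeq G$ implies $V_F(s) \geq V_G(s)$ for every state $s$. Fixing $s$, the hypothesis $F \succeq G$ gives $F(s,a') \geq G(s,a')$ for all actions $a'$, i.e. $F(s,\cdot) \preceq$-dominates $G(s,\cdot)$ over the action space. Applying the monotonicity of the $q$-maximum operator, property \ref{prop2} of Lemma \ref{lem:qmax_prop}, yields $\mathop{q\text{-max}}_{a'}(F(s,a')) \geq \mathop{q\text{-max}}_{a'}(G(s,a'))$, which is exactly $V_F(s) \geq V_G(s)$. This must be done state-by-state, since $V_F$ is a $q$-maximum taken over actions for each fixed $s$.

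Next, since $\gamma > 0$ and $V_F(s') \geq V_G(s')$ for every $s'$, the integrand satisfies $\mathbf{r}(s,a,s') + \gamma V_F(s') \geq \mathbf{r}(s,a,s') + \gamma V_G(s')$ pointwise in $s'$, the common reward term cancelling in the comparison. Taking the expectation over $s' \sim P(\cdot|s,a)$, which preserves pointwise inequalities, gives $[\mathcal{T}_q F](s,a) \geq [\mathcal{T}_q G](s,a)$ for all $(s,a)$, that is, $\mathcal{T}_q(F) \succeq \mathcal{T}_q(G)$. I do not expect a genuine obstacle here: the entire weight of the proof rests on the previously established monotonicity of $q$-max, with the remainder being linearity and monotonicity of expectation together with $\gamma > 0$; the only point requiring care is that $q$-max monotonicity is invoked for each fixed $s$ rather than over the joint state-action space. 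This lemma is intended to feed, alongside a discounting lemma, into a $\gamma$-contraction argument for $\mathcal{T}_q$ exactly as Lemmas \ref{lem:pe_discounting}-\ref{lem:pe_contraction} do for $\mathcal{T}_q^{\pi}$.
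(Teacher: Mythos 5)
Your proof is correct and follows essentially the same route as the paper: establish $V_F(s)\geq V_G(s)$ pointwise via the monotonicity of the $q$-maximum (property 2 of Lemma \ref{lem:qmax_prop}), then push the inequality through the expectation over $s'\sim P(\cdot|s,a)$ using $\gamma>0$. If anything, your write-up is cleaner, since the paper's displayed proof has its inequality signs transposed (it writes $V_F\leq V_G$ under the hypothesis $F\succeq G$), an evident typo that your version avoids.
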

\begin{proof}
For all $s,a$,
\begin{eqnarray}
\begin{aligned}
V_{F}(s) &= \mathop{q\text{-max}}_{a'}\left(F(s,a')\right) \leq \mathop{q\text{-max}}_{a'}\left(G(s,a')\right) = V_{G}(s)\\
\left[\mathcal{T}_{q}F\right](s,a) &= \mathop{\mathbbm{E}}_{s'\sim P}\left[r(s,a,s') + \gamma V_{F}(s')\middle|s,a\right] \\
&\leq \mathop{\mathbbm{E}}_{s'\sim P}\left[r(s,a,s') + \gamma V_{G}(s')\middle|s,a\right]=\left[\mathcal{T}_{q}G\right](s,a) 
\end{aligned}
\end{eqnarray}
\end{proof}

\begin{lemma}\label{lemma:contraction}
$\mathcal{T}_{q}$ is $\gamma$-contraction mapping in $(C(\mathcal{S}\times\mathcal{A},R),|\cdot|_{\infty})$
where $C(\mathcal{S}\times\mathcal{A},R)\triangleq\{ F: \mathcal{S}\times\mathcal{A}\rightarrow R\}$ and $|F-G|_{\infty} = \sup_{s,a}|F(s,a)-G(s,a)|$
\end{lemma}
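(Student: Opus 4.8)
The plan is to prove this exactly as the analogous Lemma \ref{lem:pe_contraction} for the expectation operator $\mathcal{T}_{q}^{\pi}$, because the two enabling properties for $\mathcal{T}_{q}$ have already been established: the shift-by-a-constant identity (Lemma \ref{lemma:discounting}) and monotonicity (Lemma \ref{lemma:monotone}). The argument is the standard ``sandwich'' estimate used to show that a discounted Bellman-type operator contracts in the sup norm.

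First I would set $d = |F-G|_{\infty}$, so that pointwise on $\mathcal{S}\times\mathcal{A}$ we have $G - d\mathbf{1} \preceq F \preceq G + d\mathbf{1}$, where $\mathbf{1}$ is the constant-one function. Applying the monotonicity of Lemma \ref{lemma:monotone} to both inequalities gives $\mathcal{T}_{q}(G - d\mathbf{1}) \preceq \mathcal{T}_{q}F \preceq \mathcal{T}_{q}(G + d\mathbf{1})$.

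Next I would invoke the discounting identity of Lemma \ref{lemma:discounting} to rewrite the two outer terms as $\mathcal{T}_{q}(G \pm d\mathbf{1}) = \mathcal{T}_{q}G \pm \gamma d\mathbf{1}$, which sandwiches $\mathcal{T}_{q}F$ between $\mathcal{T}_{q}G - \gamma d\mathbf{1}$ and $\mathcal{T}_{q}G + \gamma d\mathbf{1}$. Subtracting $\mathcal{T}_{q}G$ yields $-\gamma d\mathbf{1} \preceq \mathcal{T}_{q}F - \mathcal{T}_{q}G \preceq \gamma d\mathbf{1}$, and taking the supremum norm concludes that $|\mathcal{T}_{q}F - \mathcal{T}_{q}G|_{\infty} \leq \gamma d = \gamma|F-G|_{\infty}$, which is the claimed $\gamma$-contraction. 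Combined with Banach's fixed point theorem on the complete space $(C(\mathcal{S}\times\mathcal{A},R),|\cdot|_{\infty})$, this supplies a unique fixed point, which is precisely what Theorem \ref{thm:optimality} then identifies with $Q_{q}^{\star}$.

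The proof has essentially no hard step, since all the substantive work has been pushed into the two preceding lemmas. In turn, the only genuinely operator-specific input is that the $q$-maximum---rather than plain $\max$ or log-sum-exp---commutes with the addition of a constant and preserves the partial order $\preceq$, and both of these facts were already verified in Lemma \ref{lem:qmax_prop}. Thus the real ``obstacle'' was dispatched when establishing that $q$-max behaves like a generalized maximum; here it only needs to be assembled into the contraction bound.
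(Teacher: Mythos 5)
Your proposal is correct and follows essentially the same route as the paper: set $d=|F-G|_{\infty}$, sandwich $F$ between $G\pm d\mathbf{1}$, apply monotonicity (Lemma \ref{lemma:monotone}) and the constant-shift identity (Lemma \ref{lemma:discounting}), and read off the bound $|\mathcal{T}_{q}F-\mathcal{T}_{q}G|_{\infty}\leq\gamma|F-G|_{\infty}$. Indeed, you cite the correct $\mathcal{T}_{q}$-specific lemmas, whereas the paper's own proof mistakenly references the corresponding lemmas for the expectation operator $\mathcal{T}_{q}^{\pi}$; the substance is identical.
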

\begin{proof}
Let $d = |F-G|_{\infty}$. The, $G - d\mathbf{1} \succeq F\succeq G + d\mathbf{1}$.
From Lemma \ref{lem:pe_monotone},
$\mathcal{T}_{q}(G + d\mathbf{1}) \succeq \mathcal{T}_{q}F \succeq \mathcal{T}_{q}(G - d\mathbf{1})$.
From Lemma \ref{lem:pe_discounting},
$\mathcal{T}_{q}G + \gamma d \mathbf{1} \succeq \mathcal{T}_{q}F \succeq \mathcal{T}_{q}G - \gamma d \mathbf{1}$.
Then,$\gamma d \mathbf{1} \succeq \mathcal{T}_{q}F - \mathcal{T}_{q}G \succeq - \gamma d \mathbf{1}$.
Finally,
$$
|\mathcal{T}_{q}F-\mathcal{T}_{q}G|_{\infty} \leq \gamma d = \gamma |F-G|_{\infty}.
$$
Consequently,  $\mathcal{T}_{q}$ is $\gamma$-contraction.
\end{proof}

\subsubsection{Proof of Tsallis Value Iteration}
\begin{proof}[Proof of Theorem \ref{thm:optimality}]
From Lemma \ref{lemma:contraction}, $\mathcal{T}_{q}$ is $\gamma$-contraction and has an unique fixed point $F_{*}=\mathcal{T}_{q}F_{*}$ from the Banach fixed point theorem.
Then, for any initial function $F$, a sequence of $F_{k}$ converges to the fixed point, i.e., $F_{*}=\lim_{k\to\infty}(\mathcal{T}_{q})^{k}F_{0}$.
The fixed point $F_{*}$ satisfies a TBO equation as follows:
\begin{eqnarray}
\begin{aligned}
F_{*}(s,a) &= \mathop{\mathbbm{E}}_{s'\sim P}[ \mathbf{r}(s,a,s') + \gamma V_{F_{*}}(s')|s,a] \\
V_{F_{*}}(s) &= \mathop{q\text{-max}}_{a}[F_{*}(s,a)],
\end{aligned}
\end{eqnarray}
Since TBO equation is the necessary and sufficient conditions,$F_{*}=Q_{q}^{\star}$.
\end{proof}

\subsection{Performance Error Bounds}
\begin{theorem}\label{thm:error_bounds}
Let $J(\pi)$ be the expected sum of rewards of a given policy $\pi$, $\pi^{\star}$ be the optimal policy of an original MDP, and $\pi^{\star}_{q}$ be the optimal policy of a Tsallis MDP with an entropic index \textit{q}.
Then, the following inequality holds,
\begin{eqnarray}
\small
\begin{aligned}
J(\pi^{\star}) + (1-\gamma)^{-1}\ln_{q} \left(1/|\mathcal{A}|\right) \leq J(\pi_{q}^{\star}) \leq J(\pi^{\star}).
\end{aligned}
\end{eqnarray}
where $|\mathcal{A}|$ is the cardinality.
\end{theorem}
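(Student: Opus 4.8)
The plan is to reduce both inequalities to elementary facts about the regularized objective, so that essentially no machinery beyond the earlier propositions is needed. I would write the Tsallis MDP objective as $J(\pi)+S_{q}^{\infty}(\pi)=\mathbb{E}_{s_{0}\sim d}[V_{q}^{\pi}(s_{0})]$, and recall that by construction $\pi^{\star}$ maximizes the unregularized return $J$ while $\pi^{\star}_{q}$ maximizes $J+S_{q}^{\infty}$. The upper bound is then immediate: since $\pi^{\star}$ is optimal for the original problem, $J(\pi)\le J(\pi^{\star})$ for every policy, in particular $J(\pi^{\star}_{q})\le J(\pi^{\star})$.

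For the lower bound I would first record two bounds on the entropy functional. Because $\ln_{q}$ is increasing with $\ln_{q}(1)=0$, every mass $P(x)\in[0,1]$ satisfies $-\ln_{q}(P(x))\ge 0$; hence $S_{q}(\pi(\cdot|s))\ge 0$ and therefore $S_{q}^{\infty}(\pi)\ge 0$ for every $\pi$. On the other side, the earlier proposition stating that $S_{q}$ is maximized at the uniform distribution gives the pointwise bound $S_{q}(\pi(\cdot|s))\le -\ln_{q}(1/|\mathcal{A}|)$, and summing the discounted geometric series $\sum_{t\ge 0}\gamma^{t}=(1-\gamma)^{-1}$ yields $S_{q}^{\infty}(\pi)\le -(1-\gamma)^{-1}\ln_{q}(1/|\mathcal{A}|)$.

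Combining these with optimality of $\pi^{\star}_{q}$ closes the argument. Since $\pi^{\star}_{q}$ maximizes $J+S_{q}^{\infty}$,
\begin{equation*}
J(\pi^{\star}_{q})+S_{q}^{\infty}(\pi^{\star}_{q})\ge J(\pi^{\star})+S_{q}^{\infty}(\pi^{\star})\ge J(\pi^{\star}),
\end{equation*}
where the last step uses $S_{q}^{\infty}(\pi^{\star})\ge 0$. Rearranging and inserting the upper bound on $S_{q}^{\infty}(\pi^{\star}_{q})$ gives
\begin{equation*}
J(\pi^{\star}_{q})\ge J(\pi^{\star})-S_{q}^{\infty}(\pi^{\star}_{q})\ge J(\pi^{\star})+(1-\gamma)^{-1}\ln_{q}(1/|\mathcal{A}|),
\end{equation*}
which is exactly the claimed lower bound.

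I do not expect a substantial obstacle; the only point requiring care is the pair of entropy bounds (nonnegativity and the uniform-distribution maximum), both of which hold precisely because we restrict to $q>0$, the regime in which $S_{q}$ is concave. An alternative, somewhat longer route would work at the level of value functions, applying the $q$-maximum bound of Theorem~\ref{thm:bound} to $Q_{q}^{\star}(s,\cdot)$ and then telescoping a per-state comparison between $V_{q}^{\star}$ and $V^{\star}$; but the objective-level argument above is cleaner and avoids re-deriving those per-state inequalities.
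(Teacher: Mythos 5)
Your proof is correct, and it follows the same overall strategy as the paper (establish $J(\pi^{\star}) \le J(\pi_{q}^{\star}) + S_{q}^{\infty}(\pi_{q}^{\star})$, then bound the entropy term by its uniform-distribution maximum and sum the geometric series), but you justify the key intermediate inequality differently. The paper derives it from a separate operator-comparison lemma: it shows $\mathcal{T}^{k}F \preceq \mathcal{T}_{q}^{k}F$ by induction (using the fact that $q$-max dominates max from Theorem~\ref{thm:bound}), concludes $V^{\star} \preceq V_{q}^{\star}$ pointwise, and then takes expectations over $s_{0}\sim d$. You instead get the same inequality in one line from the variational characterization of $\pi_{q}^{\star}$ as the maximizer of $J+S_{q}^{\infty}$ together with the nonnegativity of $S_{q}^{\infty}(\pi^{\star})$ --- precisely the ``objective-level'' shortcut you flag at the end of your write-up. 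Your route is more elementary and self-contained for this theorem, since it needs only the nonnegativity and uniform-maximum properties of $S_{q}$ (both established in the paper's propositions for $q>0$); the paper's route buys a stronger per-state statement, $V^{\star}(s)\le V_{q}^{\star}(s)$ for every $s$, which is of independent interest but is not needed for the bound on $J$. Both arguments are sound; no gap.
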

\begin{lemma}\label{lemma:tbo_lowerbnd}
Let $$
[\mathcal{T}F](s,a) \triangleq \mathop{\mathbbm{E}}_{s'\sim P}[ \mathbf{r}(s,a,s') + \gamma \max_{a'}F(s',a')|s,a]
$$ for a function $F$.
$\mathcal{T}$ is the original Bellman optimality operator which is used for an original value iteration.
Then, for all positive integer $k$ and any function $F$ over $\mathcal{S}\times\mathcal{A}$,
$$
\mathcal{T}_{q}^{k}F \succeq \mathcal{T}^{k}F
$$
where $\mathcal{T}^k$ indicates $k$ tiems application of $\mathcal{T}$.
Furthermore, $V^{\star}_{q} \succeq V^{\star}$ holds
which means that the optimal value of Tsallis MDP is greater than the optimal value of the original MDP.
\end{lemma}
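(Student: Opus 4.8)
The plan is to reduce everything to a single pointwise domination between the two backup operators, which follows directly from the bound already established in Theorem~\ref{thm:bound}. The right inequality there reads $\max_{x}(f(x)) \leq \mathop{q\text{-max}}_{x}(f(x))$, so for every state $s$ and every function $F$ we have $\mathop{q\text{-max}}_{a'}(F(s,a')) \geq \max_{a'}F(s,a')$. Since both $\mathcal{T}_{q}$ and $\mathcal{T}$ apply the same reward-plus-discounted-expectation wrapper and differ only in replacing $\max$ by $\mathop{q\text{-max}}$ in the value part, monotonicity of the expectation immediately yields the base case $\mathcal{T}_{q}F \succeq \mathcal{T}F$ for every $F$.

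For the induction step, I would assume $\mathcal{T}_{q}^{k}F \succeq \mathcal{T}^{k}F$ and chain two inequalities. First, the base case applied to the function $\mathcal{T}_{q}^{k}F$ gives $\mathcal{T}_{q}^{k+1}F = \mathcal{T}_{q}(\mathcal{T}_{q}^{k}F) \succeq \mathcal{T}(\mathcal{T}_{q}^{k}F)$. Second, the original operator $\mathcal{T}$ is monotone, which is proven exactly as in Lemma~\ref{lemma:monotone} with $\mathop{q\text{-max}}$ replaced by $\max$ (the latter being trivially monotone), so applying $\mathcal{T}$ to the inductive hypothesis gives $\mathcal{T}(\mathcal{T}_{q}^{k}F) \succeq \mathcal{T}(\mathcal{T}^{k}F) = \mathcal{T}^{k+1}F$. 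Composing the two chains establishes $\mathcal{T}_{q}^{k+1}F \succeq \mathcal{T}^{k+1}F$, completing the induction.

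For the \emph{furthermore} claim, I would pass to the limit. Fixing any initial $F_{0}$, Theorem~\ref{thm:optimality} gives $\mathcal{T}_{q}^{k}F_{0} \to Q_{q}^{\star}$, while standard value-iteration convergence gives $\mathcal{T}^{k}F_{0} \to Q^{\star}$; since the pointwise order $\succeq$ is preserved under pointwise limits, the family of inequalities $\mathcal{T}_{q}^{k}F_{0} \succeq \mathcal{T}^{k}F_{0}$ forces $Q_{q}^{\star} \succeq Q^{\star}$. Finally, combining this with the TBO identity $V_{q}^{\star}(s) = \mathop{q\text{-max}}_{a'}(Q_{q}^{\star}(s,a'))$ and the original Bellman identity $V^{\star}(s) = \max_{a'}Q^{\star}(s,a')$, I obtain $V_{q}^{\star}(s) = \mathop{q\text{-max}}_{a'}(Q_{q}^{\star}(s,a')) \geq \max_{a'}Q_{q}^{\star}(s,a') \geq \max_{a'}Q^{\star}(s,a') = V^{\star}(s)$, which is exactly $V_{q}^{\star} \succeq V^{\star}$.

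The whole argument is essentially routine once the operator domination $\mathcal{T}_{q} \succeq \mathcal{T}$ is in hand, so I do not anticipate a genuine obstacle. The one point that warrants care is invoking monotonicity of the correct operator at each stage: the base case dominates $\mathcal{T}_{q}$ by $\mathcal{T}$, whereas the induction step needs monotonicity of $\mathcal{T}$ (not $\mathcal{T}_{q}$) to propagate the hypothesis. I would also be explicit that the limiting step is legitimate because it rests on the two convergence theorems rather than on any extra continuity assumption about the operators.
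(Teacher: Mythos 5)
Your proposal is correct and follows essentially the same route as the paper: the base case $\mathcal{T}_{q}F \succeq \mathcal{T}F$ from the $\max \leq q\text{-max}$ bound of Theorem~\ref{thm:bound}, an induction combining that base case with monotonicity of one of the operators (the paper uses monotonicity of $\mathcal{T}_{q}$ and you use monotonicity of $\mathcal{T}$ — mirror-image versions of the same chain), and passage to the limit via the two value-iteration convergence results. If anything, your final step is slightly more careful than the paper's, which writes $V^{\star}=\lim_{k}\mathcal{T}^{k}F$ where the limit is really $Q^{\star}$, whereas you explicitly deduce $Q^{\star}_{q}\succeq Q^{\star}$ first and then pass to $V^{\star}_{q}\succeq V^{\star}$ through the Bellman identities.
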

\begin{proof}
When $k=1$, from Lemma \ref{lem:qmax_prop}, for all $s,a$,
\begin{eqnarray}
\begin{aligned}
\left[ \mathcal{T} F \right](s,a)&=\mathop{\mathbbm{E}}_{s'\sim P}[ \mathbf{r}(s,a,s') + \gamma \max_{a'}F(s',a')|s,a]\\
&\leq\mathop{\mathbbm{E}}_{s'\sim P}\left[ \mathbf{r}(s,a,s') + \gamma \mathop{q\text{-max}}_{a'}F(s',a')|s,a\right]=[ \mathcal{T}_{q}F ](s,a)
\end{aligned}
\end{eqnarray}
Now, assume that the statement holds when $k=n$,
then,
\begin{eqnarray}
\begin{aligned}
\mathcal{T}^{n+1} F = \mathcal{T} \mathcal{T}^{n} F \preceq \mathcal{T}_{q} \mathcal{T}^{n} F \preceq \mathcal{T}_{q} \mathcal{T}_{q}^{n} F \preceq \mathcal{T}_{q}^{n+1} F
\end{aligned}
\end{eqnarray}
From mathematical induction, the statement holds for all positive integers.
Furthermore, 
$$
V^{\star}=\lim_{k\to\infty}\mathcal{T}^{k} F \preceq \lim_{k\to\infty} \mathcal{T}_{q}^{k} F = V^{\star}_{q}
$$
\end{proof}
We would like to note that the gap between $V^{\star}$ and $V_{q}^{\star}$ is induced from the Tsallis entropy.

\subsubsection{Proof of Performance Error Bounds}
\begin{proof}[Proof of Theorem \ref{thm:error_bounds}]
The upper bound is trivial.
Since the original MDP maximizes $J(\pi)$ without the entropy maximization,
it is clear that $J(\pi^{\star}_{q}) \leq J(\pi^{\star})$
where $J(\pi)\triangleq\mathbbm{E}_{\tau\sim\pi,P}[\sum_{t=0}^{\infty}\gamma^{t}\mathbf{R}_{t}]$.
For the lower bound, using Lemma \ref{lemma:tbo_lowerbnd},
\begin{eqnarray}
\begin{aligned}
J(\pi^{\star}) = \mathbbm{E}_{s_{0}\sim d}\left[V^{\star}(s_{0})\right] &\leq \mathbbm{E}_{s_{0}\sim d}\left[V^{\star}_{q}(s_{0})\right] = J(\pi^{\star}_{q}) + S_{q}^{\infty}\left(\pi_{q}^{\star}\right)\\
&\leq J(\pi^{\star}_{q}) + \mathbbm{E}_{\tau\sim\pi,P}\left[\sum_{t=0}^{\infty}\gamma^{t}S_{q}\left(\pi_{q}^{\star}(\cdot|s_{t})\right)\right]\\
&\leq J(\pi^{\star}_{q}) + \mathbbm{E}_{\tau\sim\pi,P}\left[\sum_{t=0}^{\infty}\gamma^{t}\max_{\pi(\cdot|s_{t})}S_{q}\left(\pi(\cdot|s_{t})\right)\right]\\
&\leq J(\pi^{\star}_{q}) - \mathbbm{E}_{\tau\sim\pi,P}\left[\sum_{t=0}^{\infty}\gamma^{t}\ln_{q}\left(1/|\mathcal{A}|\right)\right]\\
&\leq J(\pi^{\star}_{q}) -(1-\gamma)^{-1}\ln_{q}\left(1/|\mathcal{A}|\right)
\end{aligned}
\end{eqnarray}
\end{proof}

\section{Continuous State and Action Spaces}
Main difference between continuous space and finite discrete space is the number of optimization variables.
For discrete (and finite) state and action space, $\rho$ (or $\pi$) can be represented as $|\mathcal{S}|\times|\mathcal{A}|$ dimensional vector whose element is $\rho(s,a)$ (or $\pi$, respectively).
However, when state and action spaces are continuous,
$\rho$ (or $\pi$) become an infinite dimensional vector.
In other words, $\rho$ becomes a function over $\mathcal{S}\times\mathcal{A}$ which satisfies $$\mathbf{M} \triangleq\left\{\rho\middle| \forall s,\;a,\; \rho(s, a)\ge 0,\; \int_{a}\rho(s, a)da = d(s) + \int_{s',a'} P(s|s',a')\rho(s', a')ds'da'\right\}.$$
Note that all summations in the aforementioned theorems are changed to the integral as the state and action are continuous now.
Then, our optimization problem is converted to the integral form.
\begin{eqnarray}
\small
\begin{aligned}
	& \underset{\rho}{\text{maximize}}
	& & \int_{s,a}\rho(s, a) \int_{s'}\mathbf{r}(s,a,s')P(s'|s,a)ds'dsda - \int_{s,a} \rho(s, a) \ln_{q}\left(\frac{\rho(s, a)}{\int_{a'}\rho(s, a')da'}\right)dsda \\
	& \text{subject to}
	& &\forall s,\;a,\; \rho(s, a)\ge 0,\; \int_{a}\rho(s, a) da= d(s) + \int_{s',a'} P(s|s',a')\rho(s', a')ds'da',
\end{aligned}
\end{eqnarray}
where $P$ is a density function of the transition probability.
The optimization variable is a function $\rho$ and constraints imply the set of functions which satisfy the Bellman flow constraints.
Now, our objective function is the functional whose input is a function $\rho$ and output is the sum of expected rewards.
To analyze the continuous optimization problem,
we can employ generalized KKT condition for the functional optimization \cite{luenberger1997optimization}
where it mainly utilizes the functional derivatives to obtain the KKT conditions.
We can derive similar theoretical results by using the functional derivatives and generalized KKT conditions.

\section{Additional Examples}
\subsection{Full Experimental Results}
The entire results are shown in Figure \ref{fig:mujoco_tac} and \ref{fig:mujoco_all}.
For Hopper-v2,
TAC methods with $1 \leq q \leq 2$ show similar performance as shown in Figure \ref{fig:hopper_tac}.
Furthermore, compared to TD3 and SAC, TAC  also has similar performance as shown in Figure \ref{fig:hopper_all}.
These results suggest that Hopper-v2 may be easy to solve using TAC, SAC, and TD3.
For Swimmer-v2, 
TAC is stuck in local optima for every $q$ values.
However, TD3 and SAC also show poor performance while PPO, TRPO, and DDPG has better performance.
Since TAC is designed based on SAC and TD3, 
the performance of TAC seems to rely on the performance of SAC and TD3.
We believe that this problem may not occurs if we apply the Tsallis entropy maximization to other SG entropy based methods.
We also conduct the effect of the network capacity on on-policy methods: PPO and TRPO, for fair comparison, as shown in Figure \ref{fig:mujoco_onpol}.
We can realize that the large network capacity does not help the performance of PPO and TRPO.
Therefore. this result justifies the experiments on PPO and TRPO with smaller networks in our comparison. 

\begin{figure*}[t!]
\vspace{-10pt}
\centering
\subfigure[Hopper-v2]{\includegraphics[width=0.32\textwidth]{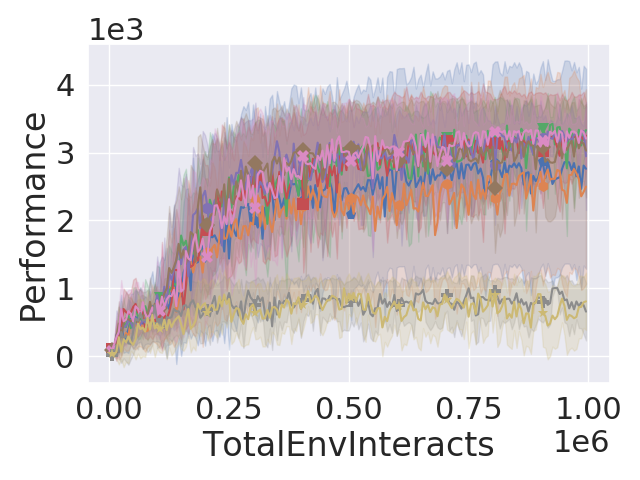}\label{fig:hopper_tac}}
\subfigure[Swimmer-v2]{\includegraphics[width=0.32\textwidth]{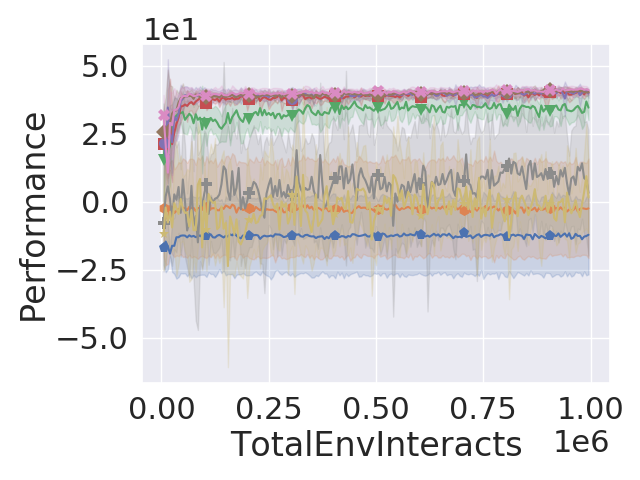}}
\subfigure[HalfCheetah-v2]{\includegraphics[width=0.32\textwidth]{fig/half_tac}}
\subfigure[Ant-v2]{\includegraphics[width=0.32\textwidth]{fig/ant_tac}}
\subfigure[Pusher-v2]{\includegraphics[width=0.32\textwidth]{fig/push_tac}}
\subfigure[Humanoid-v2]{\includegraphics[width=0.32\textwidth]{fig/human_tac}}
\vspace{-5pt}
\caption{Average training returns on MuJoCo environments. Tsallis actor-critics with $q$ values : $1.0,1.2,1,5$ and $1.7$ generally show better performance than other entropic indices. Real line is an average return over ten trials and shade area shows a variance.  }\label{fig:mujoco_tac}
\vspace{-5pt}
\end{figure*}

\begin{figure*}[t!]
\vspace{-5pt}
\centering
\subfigure[Hopper-v2]{\includegraphics[width=0.32\textwidth]{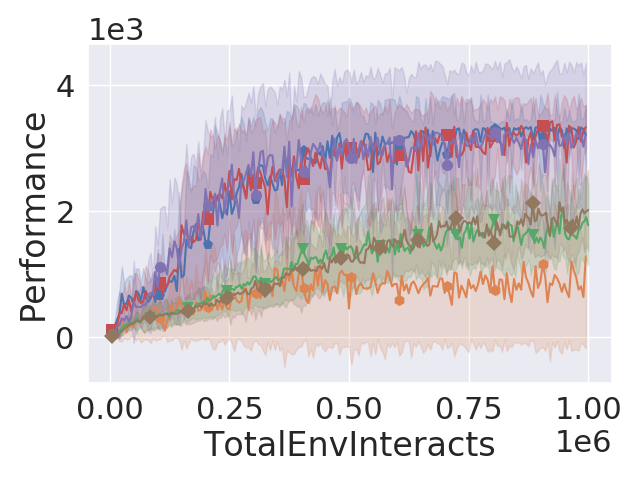}\label{fig:hopper_all}}
\subfigure[Swimmer-v2]{\includegraphics[width=0.32\textwidth]{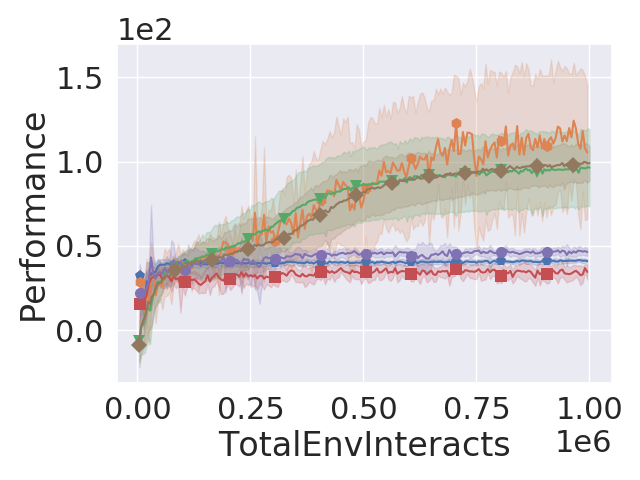}}
\subfigure[HalfCheetah-v2]{\includegraphics[width=0.32\textwidth]{fig/half_all}}
\subfigure[Ant-v2]{\includegraphics[width=0.32\textwidth]{fig/ant_all}}
\subfigure[Pusher-v2]{\includegraphics[width=0.32\textwidth]{fig/push_all}}
\subfigure[Humanoid-v2]{\includegraphics[width=0.32\textwidth]{fig/human_all}}
\vspace{-5pt}
\caption{Comparison to existing actor-critic methods on four MuJoCo environments. Soft actor-critic (red square line) is the same as Tsallis actorc-critic with $q=1$ and TAC (blue pentagon line) indicates Tasllis actor-critic with $q\neq 1$.  }\label{fig:mujoco_all}
\vspace{-5pt}
\end{figure*}

\begin{figure*}[t!]
\vspace{-10pt}
\centering
\subfigure[Hopper-v2]{\includegraphics[width=0.32\textwidth]{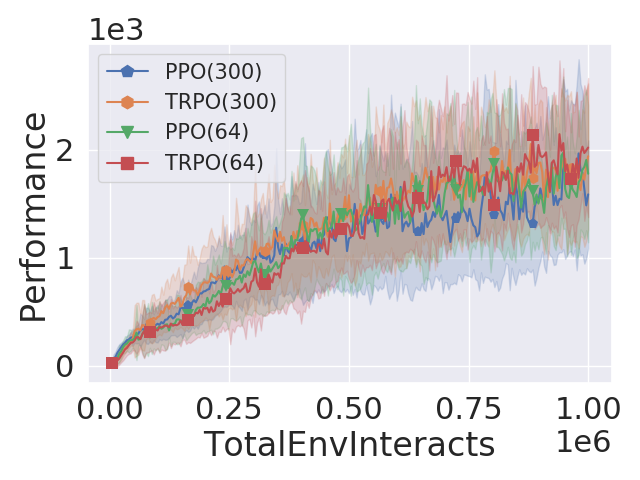}}
\subfigure[Swimmer-v2]{\includegraphics[width=0.32\textwidth]{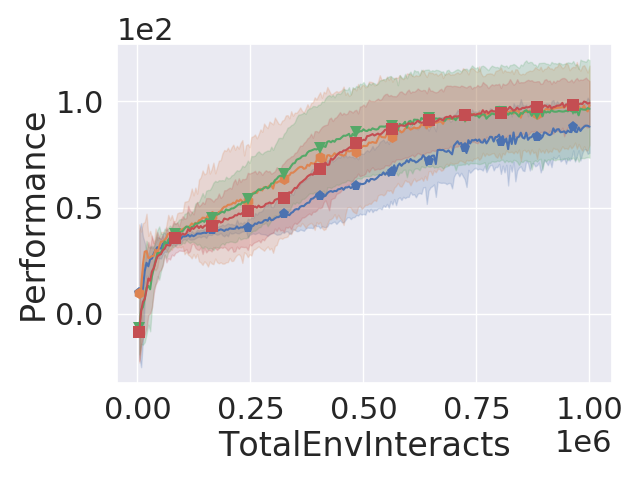}}
\subfigure[HalfCheetah-v2]{\includegraphics[width=0.32\textwidth]{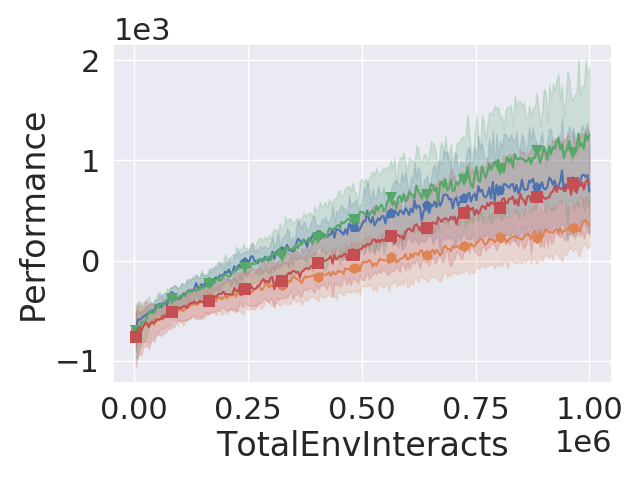}}
\subfigure[Ant-v2]{\includegraphics[width=0.32\textwidth]{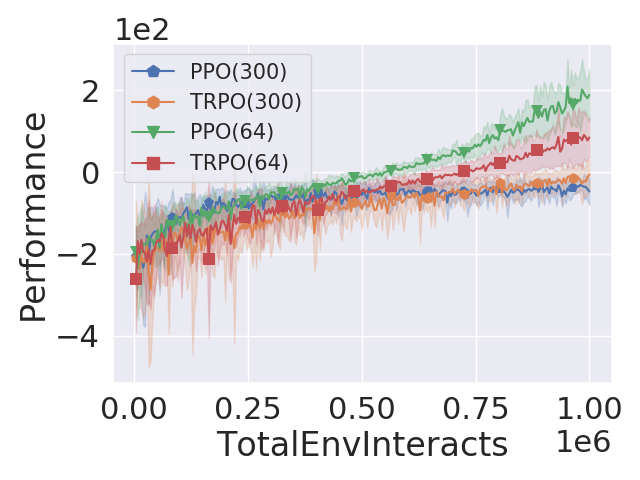}}
\subfigure[Pusher-v2]{\includegraphics[width=0.32\textwidth]{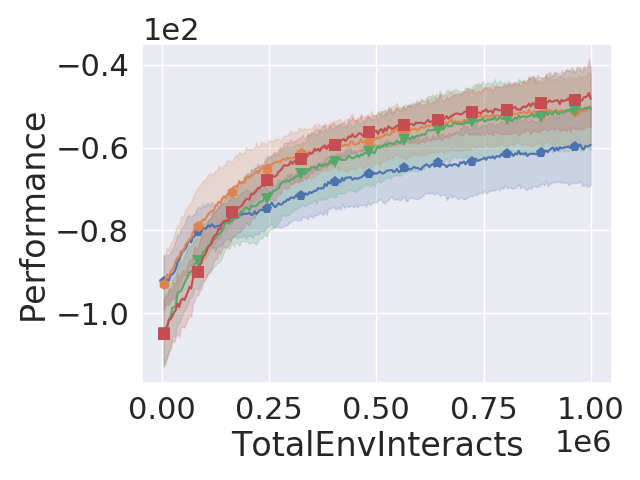}}
\subfigure[Humanoid-v2]{\includegraphics[width=0.32\textwidth]{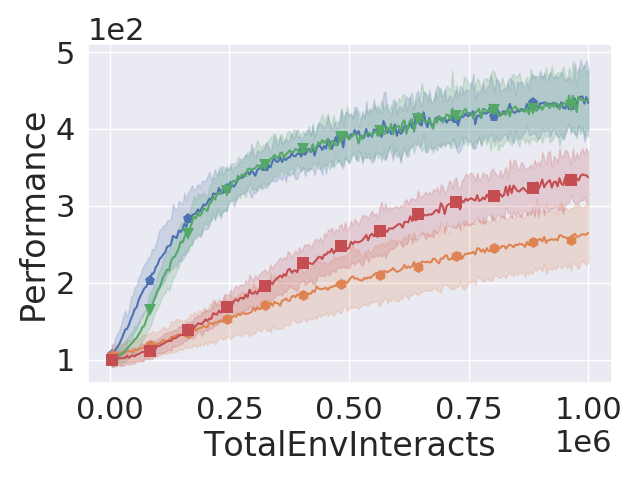}}
\vspace{-5pt}
\caption{Average training returns of PPO and TRPO on MuJoCo environments. The number in parentheses indicates the number of hidden units. Both algorithms have better performance when using smaller networks}\label{fig:mujoco_onpol}
\vspace{-5pt}
\end{figure*}

\subsection{Effect of $\alpha$}
We compare the effect of $\alpha$ and effect of $q$ values as shown in Figure \ref{fig:q_alpha}.
We compute the $\pi^{\star}_{q}$ on an MAB problem with the reward function shown in Figure \ref{fig:q_alpha}, while changing $\alpha$ and $q$ values.
We can see that all policy with different $q$ values converge to greedy policy when $\alpha\rightarrow0$.
However, the tendency of convergence is different depending on $q$ values.
First, the supporting set of $\pi^{\star}_{q}$ with entropy coefficient $\alpha$ is defined by 
\begin{equation*}
1+(q-1)\left(\frac{\mathbf{r}(a)}{\alpha q}- \psi_{q}\left(\frac{\mathbf{r}}{\alpha q}\right)\right)  > 0.
\end{equation*}
Since we only consider $q\geq 1$ and $(q-1)$ is positive, the supporting set becomes bigger as $\alpha$ goes to infinity.
Thus, for fixed $q$, large $\alpha$ makes $\pi_{q}^{\star}$ more uniform but, from the supporting set condition,
the probability mass are distributed over restrict elements.
For example, when $q=2.0$, 
the probability mass are only distributed on four elements while $\alpha$ vareis from $0.1$ to $2.0$.
On the contrary, when $q=1.0$
the probability mass are distributed over the entire action space while $\alpha$ vareis from $0.1$ to $2.0$.
We would like to note that $q$ value controls the supporting set and $\alpha$ controls the distribution over the supporting set.

\subsection{Example of Bounds for $q$-Maximum}
From theorem \ref{thm:bound}, 
we have the bounds for $q$-maximum as follows,
\begin{eqnarray*}
\begin{aligned}
 \max_{x}(f(x))\leq \mathop{q\textnormal{-max}}_{x}\left(f(x)\right) \leq \max_{x}(f(x)) - \ln_{q} \left(1/|\mathcal{X}|\right)
\end{aligned}
\end{eqnarray*}
In example, we set $\mathcal{X}=\{0,1\}$ and $f(x)$ is deinfed as $f(0) = 0, f(1) = c$.
We see the tendecy of $q$-maximum when $c$ varies from $-2$ to $2$.
Then, $\max_{x}(f(x))$ becomes $\max([c,0])$ and 
we compute the $q\text{-max}([c,0])$ using numerical solver.
Since $\mathcal{X}$ has two elements,
the upper bound is $\max([c,0])-\ln_{q}(1/2)$.

Examples of $q$-maximum with different $q$ values are shown in Figure \ref{fig:qmx_bnd}.
It can be observed that, as $q$ increases, the bounds become tighter.
Note that the gap becomes largest when $q=1$.
This gap sometimes leads to overestimation error when we use $q$-maximum to compute the target value of value networks.

\begin{figure*}[t!]
\includegraphics[width=\textwidth]{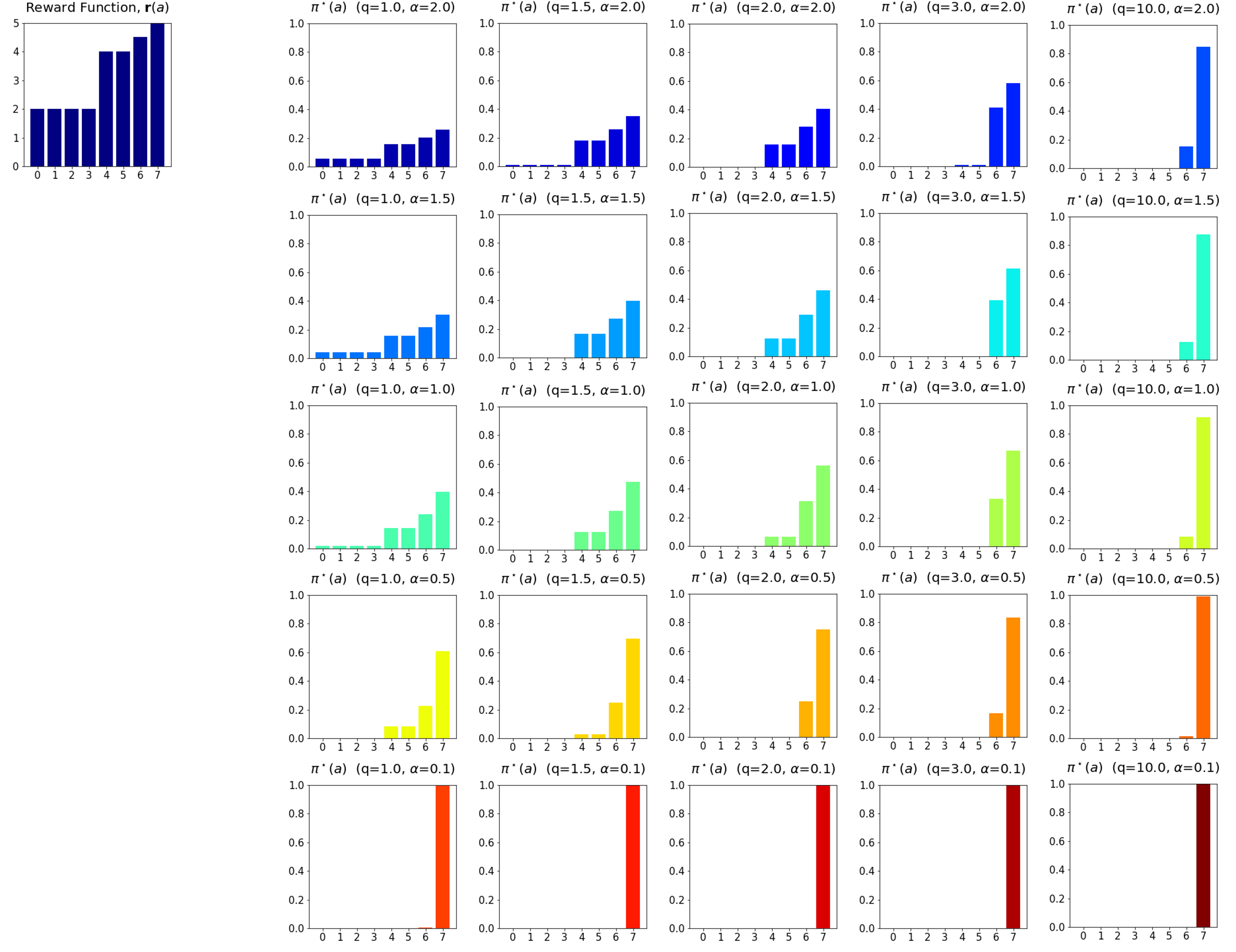}
\centering
\caption{Example of $\pi_{q}^{\star}(a)$ with various coefficients of entropy $\alpha$ varying from $2.0$ to $0.1$  and \textit{entropic indices} varying from $1.0$ to $10.0$, respectively. } \label{fig:q_alpha}
\end{figure*}

\begin{figure*}[t!]
\includegraphics[width=\textwidth]{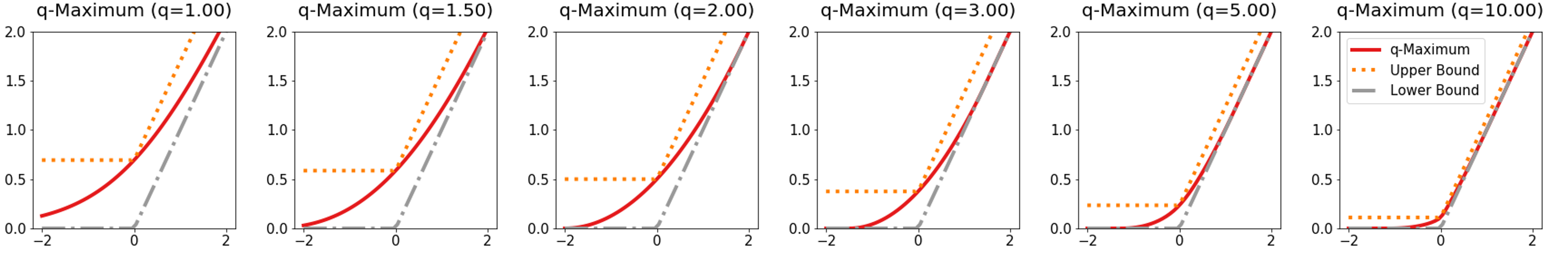}
\centering
\caption{Example of $q$-maximum operator with different $q$ values. The figures show $q\textnormal{-max}([c,0])$ over $c\in[-2,2]$. The bounds are computed by Theorem \ref{thm:bound}} \label{fig:qmx_bnd}
\end{figure*}

\section{Implementation Detail}

Our algorithm maintains five networks to model a policy $\pi_{\phi}$, state value $V_{\psi}$, target state value $V_{\psi^{-}}$, two state action values $Q_{\theta_{1}}$ and $Q_{\theta_{2}}$.
We also utilize a replay buffer $\mathcal{D}$ where every interaction data $(s_{t},a_{t},r_{t+1},s_{t+1})$ is stored in a replay buffer and it is sampled when updating the networks.
Value networks $V_{\psi}, Q_{\theta_{1}}$ and $Q_{\theta_{2}}$ are updated using (\ref{def:tbe_op_q}) and $\pi_{\phi}$ is updated using (\ref{def:tp_imp}).

To update state value network $V_{\psi}$,
we minimize the following loss,
\begin{eqnarray}\label{eqn:v_obj}
\small
\begin{aligned}
J_{\psi} = \mathop{\mathbbm{E}}_{s_t, a_t \sim \mathcal{B}}\left[\frac{1}{2}(y_{t} - V_{\psi}(s_{t}))^{2}\right]
\end{aligned}
\end{eqnarray}
where $\mathcal{B}\subset\mathcal{D}$ is a mini-batch and $y_{t} = Q_{\min}(s_{t},a_{t}) - \alpha \ln_{q}(\pi_{\phi}(a_{t}|s_{t})$ where $Q_{\min}(s_{t},a_{t}) = \min [ Q_{\theta_{1}}(s_{t},a_{t}), Q_{\theta_{2}}(s_{t},a_{t})]$.
Using two approximations for $Q^{\pi}$ is followed from \cite{fujimoto2018td3} which is known to prevent overestimation problem in Q learning.
To update both $\theta_{1}$ and $\theta_{2}$, we minimize the following update rule, 
\begin{eqnarray}\label{eqn:q_obj}
\small
\begin{aligned}
J_{\theta} = \mathop{\mathbbm{E}}_{b_{t} \sim \mathcal{B}}\left[ \frac{1}{2}(Q_{\theta}(s_{t},a_{t}) - r_{t+1} - \gamma V_{\psi^{-}}(s_{t+1}))^{2}\right],
\end{aligned}
\end{eqnarray}
where $b_{t}$ is $(s_t, a_t, s_{t+1}, r_{t+1})$.
These update rules come from the Tsallis policy evaluation step.
$\psi^{-}$ is updated by an exponential moving average with ratio $\tau$, i.e., $\psi^{-}\leftarrow(1-\tau)\psi^{-}+\tau\psi$.

To update an actor network,
we minimize a policy improvement objective defined as
\begin{eqnarray}\label{eqn:pol_obj}
\small
\begin{aligned} 
J_{\phi} = \mathop{\mathbbm{E}}_{s_{t} \sim \mathcal{B}}\left[ \mathop{\mathbbm{E}}_{a \sim \pi_{\phi}} \left[\alpha \ln_{q}(\pi_{\phi}(a|s_{t})) - Q_{\theta}(s_{t},a)\right]\right].
\end{aligned}
\end{eqnarray}
Note that $a$ is sampled from $\pi_{\phi}$ not a replay buffer.
Since updating $J_{\phi}$ requires to compute a stochastic gradient,
we use a reparameterization trick similar to \citet{haarnoja2018sac} instead of a score function estimation.
In our implementation, a policy function is defined as a reparameterizable distribution such as Gaussian distribution
where we can reparameterize the action sampled from the actor network, $a = f_{\phi}(s,\epsilon)$ where $\epsilon$ is often modeled by a well-known distribution $P_{\epsilon}$ which can be easily sampled.
Consequently, we can rewrite $J_{\phi}$ with a reparameterized action and a stochastic gradient  is computed as 
$$\nabla_{\phi} J_{\phi} = \mathop{\mathbbm{E}}_{s_{t} \sim \mathcal{B}}\left[\mathop{\mathbbm{E}}_{\epsilon \sim P_{\epsilon}} \left[\alpha \nabla_{\phi}\ln_{q}(\pi_{\phi}(a|s_{t})) - \nabla_{\phi}Q_{\theta}(s_{t},a)\right]\right],$$
where $a=f_{\phi}(s_{t},\epsilon)$.
\begin{algorithm}[t!]
\caption{Tsallis Actor Critic (TAC)}
\begin{algorithmic}[1] \label{alg:tac}\footnotesize
\STATE {\bfseries Input:} Total time steps $t_{\max}$, Max episode length $l_{\max}$, Memory size $N$, Entropy coefficient $\alpha$,  Entropic index $q$, Moving average ratio $\tau$, Environment \textit{env}
\STATE {\bfseries Initialize:} $\psi, \psi^{-}, \theta_{1}, \theta_{2}, \phi$, $\mathcal{D}$ : Queue with size $N$, $t=0$, $t_{e}=0$
\WHILE{$t\leq t_{\max}$}
\STATE $a_{t}\sim\pi_{\phi}$ and $\mathbf{r}_{t+1}, s_{t+1}, d_{t+1} \sim env$ where $d_{t+1}$ is a terminal signal.
\STATE $\mathcal{D}\leftarrow\mathcal{D}\cup\{(s_{t},a_{t},\mathbf{r}_{t+1},s_{t+1},d_{t+1})\}$
\STATE $t_{e}=t_{e}+1$, $t =t+1$
\IF{$d_{t+1}=$ True or $t_{e}=l_{\max}$}
\FOR{$i=1$ to $t_{e}$}
\STATE Randomly sample a mini-batch $\mathcal{B}$ from $\mathcal{D}$
\STATE Minimize $J_{\psi}, J_{\theta_{1}}, J_{\theta_{2}}$, and $J_{\phi}$ using a stochastic gradient descent
\STATE Update $J_{\psi^{-}}$ with $\tau$
\ENDFOR
\STATE Reset $env$,  $t_{e}=0$
\ENDIF
\ENDWHILE
\end{algorithmic}
\end{algorithm}

\subsection{Reparameterization Trick}
We follows the reparameterization trick used in the soft actor-critic method \cite{haarnoja2018sac}.
For continuous random variable, the policy network often model the Gaussian distribution where the output of $\pi_{\phi}$ is the mean $\mu_{\phi}(s)$ and standard deviation $\sigma_{\phi}(s)$ of Gaussian distribution.
However, in most continuous control problems,
the action space is often bounded.
In this regards,
we apply a tangent hyperbolic (tanh) to the Gaussian samples 
$$
a = f_{\phi}(s,\epsilon) = \text{tanh}(z)
$$
where

$$
z=\mu_{\phi}(s)+\epsilon \sigma_{\phi}(s)
$$
where $\epsilon \sim \mathcal{N}(0,\mathbf{I})$.
Then, the likelihood of actions $\pi_{\phi}(a|s)$
is computed as
$$
\pi(a|s) = \mathcal{N}\left(z ; \mu_{\phi}(s),\sigma_{\phi}(s)\right)\left|\frac{\mathrm{d}a}{\mathrm{d}z}\right|^{-1}
$$
where
$$
\left|\frac{\mathrm{d}a}{\mathrm{d}z}\right|^{-1} = \Pi_{i=1}^{D} (1-\text{tanh}(z_{i}))^{-1}
$$
where $D$ is a dimension of $z$ or $a$ and $z_{i}$ is the $i$th element of $z$.
Finally, the $q$-logarithm of the policy distribution is 
$$
\ln_{q}\left(\mathcal{N}\left(z ; \mu_{\phi}(s),\sigma_{\phi}(s)\right)\left|\frac{\mathrm{d}a}{\mathrm{d}z}\right|^{-1}\right)
$$
\subsection{Numerical Issue}
Since we handle the continuous action space, the policy is modeled as a density function of a continuous random variable.
Unlikely to a probability mass function, a probability density function (pdf) sometimes diverges to infinity (or the maximum number of a computing machine)
when the probability is concentrated at a single point.
In this  case, the large pdf value induces a large gradient which makes the learning procedure unstable.
Thus, in our implementation, the pdf value is restricted to no greater than $10^{\frac{8}{q-1}}$.
This numerical issue is often caused when $q\geq2$.
For $q\geq2$, we use the following likelihood.
$$
\ln_{q}\left(\min\left(10^{\frac{8}{q-1}},\mathcal{N}\left(z ; \mu_{\phi}(s),\sigma_{\phi}(s)\right)\left|\frac{\mathrm{d}a}{\mathrm{d}z}\right|^{-1}\right)\right)\;\;\left(\leq \frac{10^{8}-1}{q-1}\right)
$$

\subsection{Hyperparameter Settings}
{\centering
\begin{tabular}{c|c}
Parameter                      &       Value    \\
\toprule
\toprule
Optimizer                      &      Adam in TensorFlow     \\
Learning rate                  &   $1e^{-3}$      \\
Discount factor                &  $0.99$         \\
Replay buffer size             &   $1e^{6}$      \\
Number of Minimum samples in buffer &  $1e^{5}$   \\
Number of Hidden Layers        & $1$   \\
Number of Hidden units         &    $300$         \\
Activation function            &   ReLu      \\
Number of samples in minibatch &   $100$   \\
Moving average ratio           &  $0.005$    \\
Seeds                          & 0 10 20 30 40 50 60 70 80 90\\ \bottomrule
\end{tabular}\par}

{\centering
\begin{tabular}{c|c|c}
Environment                     &       Entropy Coefficient, $\alpha$ & (Best) Entropic Index, $q$ \\
\toprule
\toprule
Hopper-v2                   & $0.2$ &  $2.0$ (slightly better) \\
Swimmer-v2                   & $0.2$ & $2.0$ (slightly better) \\
HalfCheetah-v2              & $0.2$ &  $1.5$   \\
Ant-v2          & $0.2$ &  $1.5$ \\
Pusher-v2&$0.2$ & $1.2$ (slightly better)  \\
Humanoid-v2 &$0.05$&$1.2$ \\ \bottomrule
\end{tabular}\par}

\bibliographystyle{IEEEtran}
\bibliography{bib_tac}
\end{document}